\documentclass[sigconf]{acmart}

\usepackage{fancyhdr}
\fancypagestyle{fancy}{%
  \fancyhf{}
  
}
\fancypagestyle{firstpagestyle}{%
  \fancyhf{}
  
}

\renewcommand\footnotetextcopyrightpermission[1]{}
\setcopyright{none}
\acmDOI{}
\acmISBN{}
\acmConference[KDD '26]{ACM SIGKDD Conference on Knowledge Discovery and Data Mining}{August 2026}{Jeju, Korea}
\acmYear{2026}
\copyrightyear{2026}

\usepackage{amsmath}

\usepackage{amssymb}
\usepackage{amsthm,mathtools,bm}

\usepackage{microtype}
\usepackage{booktabs}
\usepackage{graphicx}
\usepackage{subcaption}
\usepackage{enumitem}
\usepackage{xcolor}
\usepackage{multirow}
\newtheorem{definition}{Definition}
\newtheorem{lemma}{Lemma}
\newtheorem{theorem}{Theorem}
\newtheorem{remark}{Remark}

\newtheorem{condition}{Condition}

\newtheorem*{lemma*}{Lemma}
\newtheorem*{theorem*}{Theorem}

\newcommand{\NAME}{\textsc{ATLAS}}

\newcommand{\Xcal}{\mathcal{X}}
\newcommand{\Dcal}{\mathcal{D}}

\newcommand{\DeltaK}{\Delta^{K-1}}

\begin{document}

\title{Learning Demographic-Conditioned Mobility Trajectories\\with Aggregate Supervision}
\thanks{\textsuperscript{1} University of California, Berkeley;\;
\textsuperscript{2} Hong Kong University of Science and Technology (Guangzhou).}

\author{Jessie Zixin Li\textsuperscript{1}}
\email{zixin_li@berkeley.edu}
\affiliation{\city{Berkeley}
  \country{USA}}
\author{Zhiqing Hong\textsuperscript{1,2}}
\email{zhiqinghong@hkust-gz.edu.cn}
\affiliation{\city{Guangzhou}
  \country{China}}
\author{Toru Shirakawa\textsuperscript{1}}
\email{shirakawatoru@berkeley.edu}
\affiliation{\city{Berkeley}
  \country{USA}}
\author{Serina Chang\textsuperscript{1}}
\email{serinac@berkeley.edu}
\affiliation{\city{Berkeley}
  \country{USA}}

\begin{abstract}
Human mobility trajectories are widely studied in public health and social science, where different demographic groups exhibit significantly different mobility patterns.
However, existing trajectory generation models rarely capture this heterogeneity because most trajectory datasets lack demographic labels. 
To address this gap in data, we propose \NAME{}, a weakly supervised approach for demographic-conditioned trajectory generation using only (i) individual trajectories without demographic labels, (ii) region-level aggregated mobility features, and (iii) region-level demographic compositions from census data. \NAME{} trains a trajectory generator and fine-tunes it so that simulated mobility matches observed regional aggregates while conditioning on demographics. 
Experiments on real trajectory data with demographic labels show that \NAME{} substantially improves demographic realism over baselines (JSD ↓ 12\%–69\%) and closes much of the gap to strongly supervised training.
We further develop theoretical analyses for when and why \NAME{} works, identifying key factors including demographic diversity across regions and the informativeness of the aggregate feature, paired with experiments demonstrating the practical implications of our theory. We release our code at https://github.com/schang-lab/ATLAS.
\end{abstract}


\maketitle
\thispagestyle{plain}
\pagestyle{plain}

\section{Introduction}
\begin{figure*}
    \centering
    \includegraphics[width=\linewidth]{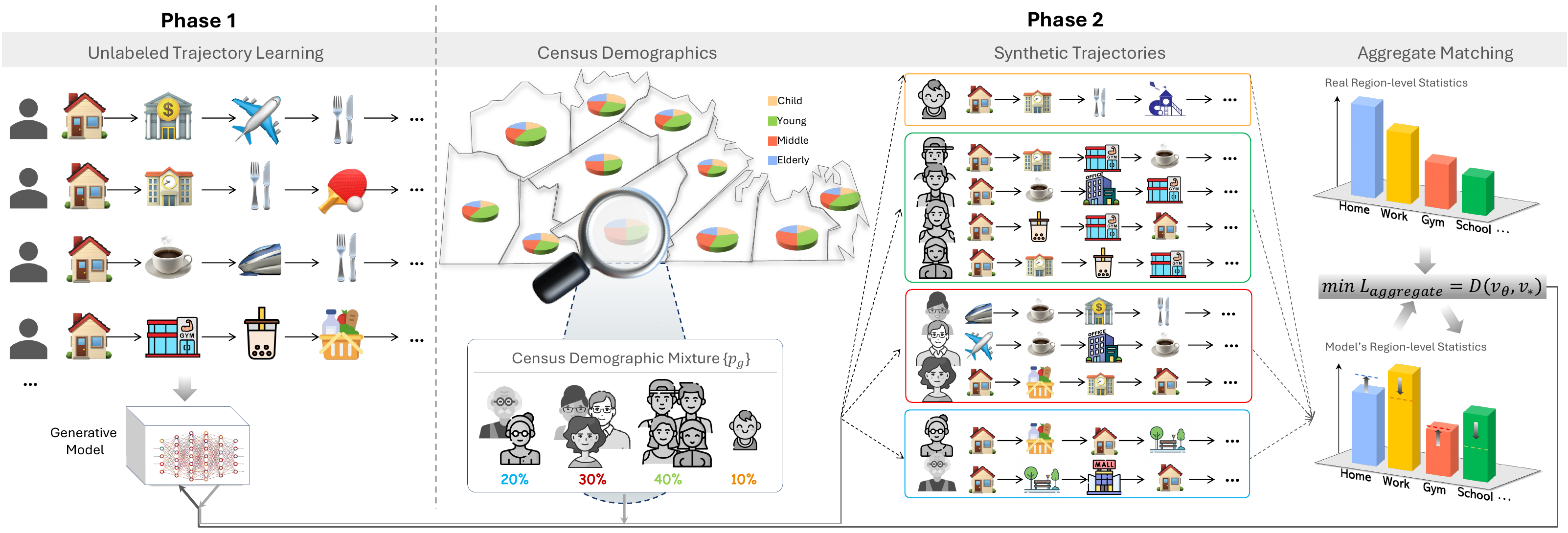}
    \caption{Overview of \NAME{}. Phase 1: Train a generative model on trajectories without demographic labels. Phase 2: Fine-tune with demographic conditioning by sampling groups from the region's demographic composition $p(\cdot\mid g)$ and optimizing to match region's observed aggregate features $\nu_\star(g)$.}
    \label{fig:overview}
\end{figure*}

Understanding human mobility is crucial to many societal problems, including modeling infectious diseases \citep{chang2021mobility}, designing transportation infrastructure \citep{pappalardo2023future}, and measuring social mixing \citep{nilforoshan2023segregation}. 
Mobility trajectories, i.e., a timestamped sequence of places visited by an individual, provide especially granular information about which people and places this individual encountered, enabling fine-grained health and behavioral estimates.
Since these trajectories are difficult to collect and share, researchers have developed generative models of mobility trajectories \citep{zhu2023difftraj, yuan2025learning}.
Synthetic trajectories, generated by these models, have become an important tool for privacy-preserving sharing, simulation, and policy evaluation \citep{osorio2024privacy, feng2020learning, li2021impact}.

However, existing trajectory generation models struggle to capture demographic heterogeneity.
Mobility varies significantly across demographic groups: for example, across age groups, such as students who go to school, working adults who go to their offices, and retirees who may spend more time in their residential communities.
These differences in mobility have important downstream implications, contributing to health disparities \citep{chang2021mobility}, differential access to resources \citep{xu2025using}, and socioeconomic segregation \citep{nilforoshan2023segregation}.
Existing trajectory generation models fail to capture this demographic heterogeneity due to a lack of data linking individual mobility trajectories with demographics. 
Publicly available trajectory datasets lack ground-truth demographics, including GeoLife \citep{zheng1geolife}, YJMob100K \citep{yabe2024yjmob100k}, and Veraset \citep{veraset2022visits} (Section~\ref{sec:app-datasets}), due to privacy constraints and challenges in data collection, as mobility trajectories are typically passively collected but collecting ground-truth demographics requires surveys. 

\paragraph{The present work.}
Given the importance of capturing demographic heterogeneity yet this gap in data, in this work we propose a novel solution: \textbf{a weakly supervised approach that leverages available \textit{aggregate} data to learn demographic-conditioned trajectory generation}.
Specifically, we leverage three key ingredients that are more readily available: (1) individual-level trajectories without demographic information, such as the aforementioned datasets, (2) aggregated mobility features released at the regional level, such as total visits to each point-of-interest (POI) \citep{advan2025foottraffic}, and (3) 
the composition of demographics within each region, typically available from census data \cite{uscensus2022agesexa}.
We present our method \NAME{} (tr\textbf{A}jec\textbf{T}ory \textbf{L}earning from \textbf{A}ggregate\textbf{S}), which uses these three ingredients to learn demographic-conditioned trajectory generation.
\NAME{} proceeds in two phases: first, we train a generative model on the trajectories without demographic information, then we add demographic conditioning to the model and fine-tune it to minimize the loss between the regional aggregated features provided in the data vs. generated by the model. 
Notably, our method is model-agnostic and could apply to any trajectory generation model, including diffusion models \citep{zhu2023difftraj,song2024controllable,zhu2024controltraj,wei2024diff,guo2025leveraging}, large language models \citep{li2024geo,li2024poi,wang2024urban}, and variational autoencoders \citep{long2023practical,huang2019variational}.

We begin by presenting theoretical foundations for when and why \NAME{} succeeds.
Our analysis identifies two key factors that influence the ability of \NAME{} to learn the true demographic-conditioned trajectory distributions from regional aggregated features: first, the degree to which demographic compositions differ across regions, and second, the informativeness of the feature map (e.g., POI visit counts) applied to each trajectory before aggregation.
We analyze the relationship between these factors and \NAME{}' success, and support these theoretical analyses with experiments where we systematically vary the demographic diversity of the regions or the choice of the feature. 
These analyses provide a principled foundation revealing when \NAME{} is expected to work well, with actionable guidance for practitioners.

To empirically test \NAME{}, we utilize real mobility trajectories linked to ground-truth demographics, such as age and gender, from the private Embee dataset \citep{bouzaghrane2025tracking}. 
In our experiments, we instantiate \NAME{} using a BART autoencoder-diffusion model architecture as a state-of-the-art approach for trajectory generation \citep{lewis2020bart, ho2022classifier, lovelace2023latent, guo2025leveraging}.
We compare \NAME{} to the baseline, a model trained on trajectories without demographic information, and the ceiling, a strongly supervised model that is directly trained on demographic-trajectory pairs.
Using the Embee dataset, we show that \NAME{} strongly outperforms the baseline, reducing Jensen-Shannon Divergence (JSD) to the real trajectories per demographic group by 12\%-69\% across trajectory statistics. 
Furthermore, \NAME{} approaches the performance of the strongly supervised model, frequently closing the gap between the baseline and strongly supervised model.
These results demonstrate that, in the absence of linked trajectory and demographics data, our weakly supervised approach can improve significantly at capturing real demographic mobility patterns, and can come close to the ideal strongly supervised case. 

In summary, our work presents the following contributions:
\begin{itemize}[leftmargin=*, itemsep=0.2em]
  \item \textbf{Learning from aggregates.} We introduce \NAME{}, a model-agnostic framework for learning individual demographic-conditioned trajectory generation from region-level aggregate mobility features and region-level demographic compositions (Section~\ref{sec:overview}).
  \item \textbf{Theoretical foundation.} We provide theoretical analyses of when and why \NAME{} succeeds, identifying two key factors: (i) the diversity in demographic compositions across regions and (ii) the informativeness of the aggregate mobility feature (Section~\ref{sec:theory}).
  \item \textbf{Experiments with real trajectories and demographics.} Using a dataset of real mobility trajectories linked to demographics, we show that \NAME{} significantly outperforms models not conditioned on demographics (by 12--69\%) and approaches the performance of a strongly supervised model (Sections~\ref{sec:empirical}-\ref{sec:experiments}).
\end{itemize}
Our work reconciles the lack of demographic information in real-world mobility data with the need to incorporate demographic heterogeneity into mobility-based models.
By providing a new avenue to learn demographic distributions from aggregate data, our work balances data constraints with decision-making needs, enabling more accurate models and equitable decisions across domains. 
\section{Learning Demographic Distributions from Aggregates}
\label{sec:overview}

\subsection{Problem Definition}
We begin by formally defining the key components of our learning-from-aggregates framework.

\begin{definition}[Trajectory Space]
Let $(\Xcal,\mathcal{B})$ denote the trajectory space, where $\Xcal$ represents the space of mobility trajectories and $\mathcal{B}$ is the $\sigma$-algebra on $\Xcal$ (the collection of measurable subsets of trajectories).
A trajectory $x\in\Xcal$ is characterized by a sequence of visited locations, timestamps, and associated spatial attributes.
\end{definition}

\begin{definition}[Demographic Groups and Conditional Distributions]
Let $\Dcal=\{1,\dots,K\}$ index the set of demographic groups (e.g., age $\times$ gender bins).
For each demographic group $d\in\Dcal$, there exists a true conditional trajectory distribution $P_\star(\cdot\mid d)$ on $\Xcal$ that governs the mobility behavior of individuals in group $d$.
\end{definition}

\begin{definition}[Regional Partition and Demographic Composition]
Individuals are partitioned into $G$ geographic regions (e.g., Census Block Groups), indexed by $g\in\{1,\dots,G\}$. For each region $g$, we assume access to a \emph{known} demographic composition vector $p_g\in\DeltaK$ obtained from census-like sources, where $p_g(d) = p(d\mid g)$ denotes the population share of demographic group $d$ in region $g$, and $\DeltaK:=\{p\in\mathbb{R}_+^K:\sum_{d=1}^K p(d)=1\}$ is the $(K-1)$-dimensional probability simplex.
Furthermore, for each region $g$, $Q_\star(\cdot\mid g)$ on $\Xcal$ denotes the true trajectory distribution of individuals in region $g$.
\end{definition}



\begin{definition}[Aggregate Feature Map]
\label{def:agg-feature}
Let $\phi:\Xcal\to\mathbb{R}^m$ be a bounded measurable feature map that extracts statistics from a trajectory (e.g., POI visit counts). 
Then, for each region $g$, define the region-level aggregate
\[
\nu_\star(g) := \mathbb{E}_{X\sim Q_\star(\cdot\mid g)}[\phi(X)]\in\mathbb{R}^m.
\]

\end{definition}

\noindent \textbf{Problem statement.}
Given (i) observed individual trajectories without demographic labels $\{x_i \in \mathcal{X}\}_{i=1}^n$, (ii) known demographic compositions $\{p(\cdot\mid g)\}_{g=1}^G$ for each region, and (iii) observed regional aggregated features $\{\nu_\star(g)\}_{g=1}^G$, our goal is to learn demographic-conditioned trajectory generators $\{P_\theta(\cdot\mid d)\}_{d=1}^K$ that are close to the true demographic-conditioned trajectory distributions $\{P_\star(\cdot\mid d)\}_{d=1}^K$, without access to individual-level demographic labels.


\subsection{Overview of \NAME{}} \label{sec:method}
Our method \NAME{} proceeds in two phases as shown in Fig.~\ref{fig:overview}: 
\begin{enumerate}[leftmargin=*, itemsep=0.2em]
  \item \textbf{Baseline.} Train the generative model on the individual trajectories $x_i$ without demographic labels. 
  While the baseline model is not conditioned on demographics, it may be conditioned on other individual features $z$, such as home or work locations, which are often provided in trajectory datasets (Section~\ref{sec:app-datasets}).
  Through this baseline training, the model learns $P_\theta(\cdot \mid z)$, providing a strong spatiotemporal backbone.
  \item \textbf{Aggregate supervision.} Extend the model to learn $P_\theta(\cdot \mid d, z)$, by fine-tuning the model using the region-level demographic compositions $p_g$ and region-level aggregate features $\nu_\star(g)$.
\end{enumerate}

In phase 2, we conduct aggregate supervision through the feature map $\phi:\Xcal\to\mathbb{R}^m$ that extracts summary statistics from trajectories.
For each region $g$, we compute the region-level aggregates $\nu_\theta(g)$ implied by model parameters $\theta$ by sampling from the model's demographic-conditioned distribution $P_\theta(\cdot \mid d, z)$, based on that region's known demographic composition (see Section~\ref{sec:model-training} for details). 
We seek model parameters $\theta$ that minimize the distance between the model-implied and ground-truth region-level aggregates:
\[
\min_\theta \;\; \mathcal{L}(\theta) = \sum_{g=1}^G \operatorname{dist}\bigl(\nu_\theta(g), \nu_\star(g)\bigr),
\]
where $\operatorname{dist}$ is a distributional distance metric. 
Importantly, \NAME{} is model-agnostic: any generative model that supports conditional sampling can use the same aggregate-supervision objective (e.g., diffusion models, autoregressive transformers, LLMs, VAEs, GANs).
The key requirement is the ability to sample from $P_\theta(\cdot\mid d,z)$ and compute feature expectations for gradient-based optimization.

In the following sections, we present model-agnostic theoretical analyses of \NAME{} (Section~\ref{sec:theory}), describe how we instantiated \NAME{} with a specific dataset and model architecture (Section~\ref{sec:empirical}), and conduct experiments with real trajectory data demonstrating the effectiveness of \NAME{} and testing the conditions identified in our theoretical analyses (Section~\ref{sec:experiments}).
\section{Theoretical Foundation}
\label{sec:theory}

In this section, we provide theoretical foundations for when and why our method \NAME{} succeeds.
Our analysis identifies two ideal conditions under which \NAME{} is guaranteed to recover the true demographic-conditioned trajectory distributions. 
While these ideal conditions may not hold in practice, they clarify the key factors that influence the success of \NAME{}: (i)~diversity in demographic compositions across regions and (ii)~the informativeness of the chosen feature map $\phi$.
These principles can be applied to understand when and why \NAME{} works, with instances closer to the ideal conditions resulting in greater success, as we show in Section~\ref{sec:experiments}.

\textbf{Preliminaries.}
In the theoretical analyses, we drop the additional individual features $z$ for simplicity and focus on demographic conditioning, but our analyses can be extended to incorporate these additional features. 
Consequently, we assume that $P_\star(X \mid d, g) = P_\star(X \mid d)$.
By the law of total expectation, the trajectory distribution for region $g$, $Q_\star(\cdot\mid g)$, becomes simply a weighted sum over demographic-conditioned distributions:
\[
Q_\star(\cdot\mid g) ~=~ \sum_{d=1}^K p(d\mid g)\, P_\star(\cdot\mid d).
\]
Similarly, the region-level aggregates $\nu_\star(g)$ becomes
\[
    \nu_\star(g) = \sum_{d=1}^K p(d\mid g)\,\mu_\star(d),
\]
where $\mu_\star(d) := \mathbb{E}_{X\sim P_\star(\cdot\mid d)}[\phi(X)]\in\mathbb{R}^m$ denotes the group-level feature mean.
Both $\nu_\star(g)$ and $\mu_\star(d)$ are expected values of the feature map $\phi$; we refer to the region-level quantities as \emph{aggregates} because they aggregate across demographic groups within a region, while the group-level quantities are \emph{feature means} for a single demographic group.

To simplify notation, we organize the vectors into matrix form.
Let $M_\star \in \mathbb{R}^{m \times K}$ denote the matrix of group-level feature means with columns $\mu_\star(d)$, and let $V_\star \in \mathbb{R}^{G \times m}$ denote the matrix of region-level aggregates with rows $\nu_\star(g)^\top$.
The demographic composition matrix $P \in \mathbb{R}^{G \times K}$ has rows $p_g^\top$, representing the demographic composition of each region; thus, $V_\star = P M_\star^\top$.
We use analogous notation $M_\theta$ and $V_\theta$ for model-implied quantities (see Table~\ref{tab:notation} in Appendix for a summary).


\subsection{Demographic Diversity Across Regions}
We begin by stating a key structural condition that enables us to recover group-level feature means from regional aggregates.

\begin{condition}[Demographic diversity across regions]
\label{ass:full-rank}
The demographic composition matrix $P \in \mathbb{R}^{G \times K}$ with rows $p_g^\top$ has full column rank:
\[
  \operatorname{rank}(P) = K.
\]
Equivalently, the set of regional demographic vectors $\{p_g\}_{g=1}^G$ contains $K$ linearly independent vectors.
\end{condition}
Intuitively, Condition~\ref{ass:full-rank} requires sufficient demographic heterogeneity across regions.
If all regions have nearly identical demographic compositions, $P$ is ill-conditioned and it becomes impossible to disentangle the contributions of different demographic groups from aggregates alone.
This condition is directly verifiable since $P$ is observed in our problem setting. 
Under Condition~\ref{ass:full-rank} alone, we can establish strong results about recovering demographic feature means from regional aggregates.

\begin{lemma}[Uniqueness of group-level feature means]
\label{lem:moment-identifiability}
Suppose Condition~\ref{ass:full-rank} holds. If the model-implied and true regional aggregates match, $\nu_\theta(g) = \nu_\star(g)$ for all regions $g$, then the demographic group-level feature means coincide:
\[
  \mu_\theta(d) = \mu_\star(d) \quad \text{for all } d.
\]
\end{lemma}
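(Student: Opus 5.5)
The plan is to reduce the statement to a linear-algebra fact about the composition matrix $P$. First I will write the model-implied aggregates in the same mixture form as the true ones. By construction of \NAME{}, $\nu_\theta(g)$ is computed by sampling $d\sim p(\cdot\mid g)$ and then $X\sim P_\theta(\cdot\mid d)$. Taking expectations of the bounded feature $\phi$ and applying the law of total expectation gives
\[
\nu_\theta(g)=\sum_{d=1}^K p(d\mid g)\,\mu_\theta(d),\qquad \mu_\theta(d):=\mathbb{E}_{X\sim P_\theta(\cdot\mid d)}[\phi(X)].
\]
Boundedness of $\phi$ guarantees that all these expectations are finite. In matrix form this reads $V_\theta=PM_\theta^\top$, which parallels $V_\star=PM_\star^\top$ from the preliminaries.

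Next I will use the hypothesis. If $\nu_\theta(g)=\nu_\star(g)$ for every region $g$, then $V_\theta=V_\star$, and hence
\[
P\,(M_\theta-M_\star)^\top=0\in\mathbb{R}^{G\times m}.
\]
Read column by column, this says that each column of $(M_\theta-M_\star)^\top$ lies in $\ker P$. Column $j$ is the vector $\bigl(\mu_\theta(d)_j-\mu_\star(d)_j\bigr)_{d=1}^K\in\mathbb{R}^K$.

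Finally I will invoke Condition~\ref{ass:full-rank}. Since $\operatorname{rank}(P)=K$ equals the number of columns, rank–nullity gives $\ker P=\{0\}$. Equivalently, $P^\top P$ is invertible, so one can left-multiply by $(P^\top P)^{-1}P^\top$. Either way every column of $(M_\theta-M_\star)^\top$ vanishes, so $M_\theta=M_\star$, which is exactly $\mu_\theta(d)=\mu_\star(d)$ for all $d$.

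None of these steps is a real obstacle. The only point needing care is the first step: the model-implied aggregate must be defined as the population mixture expectation under the region's composition $p_g$, rather than as a finite Monte Carlo estimate, so that the linear structure holds exactly. I will state this interpretation explicitly. I will also note that the conditional-independence assumption $P(X\mid d,g)=P(X\mid d)$ is applied to the model, just as the preliminaries apply it to the truth.
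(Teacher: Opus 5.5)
Your proposal is correct and follows the paper's proof essentially verbatim: write $V_\theta = PM_\theta^\top$ and $V_\star = PM_\star^\top$, deduce $P(M_\theta - M_\star)^\top = 0$, and apply the trivial kernel of the full-column-rank $P$ column by column to get $M_\theta = M_\star$. The paper simply takes the mixture form of $\nu_\theta(g)$ as given by construction, so your explicit justification of that form is a sensible addition. That justification is the law of total expectation together with reading $\nu_\theta(g)$ as a population expectation rather than a Monte Carlo estimate.
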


\begin{proof}[Proof sketch]
By construction, $V_\theta = P M_\theta^\top$ and $V_\star = P M_\star^\top$, where $M_\theta, M_\star\in\mathbb{R}^{m\times K}$ have columns $\mu_\theta(d), \mu_\star(d)$.
The condition $V_\theta = V_\star$ implies $P(M_\theta - M_\star)^\top = 0$.
Since $P$ has full column rank by Condition~\ref{ass:full-rank}, its null space is trivial, yielding $M_\theta = M_\star$.
\end{proof}

Lemma~\ref{lem:moment-identifiability} establishes that matching regional aggregates uniquely determines the group-level feature means $\{\mu_\star(d)\}$, provided the demographic composition matrix has sufficient diversity.
However, in practice we may not be able to match the regional aggregates perfectly. 
Next, we formalize how remaining distance in regional aggregates bound the distance in group-level feature means.

\begin{lemma}[Stability under aggregate perturbations]
\label{lem:stability}
Suppose Condition~\ref{ass:full-rank} holds. Then for any $V_1,V_2\in\mathbb R^{G\times m}$,
if $M_i^\top := P^\dagger V_i$, we have
\[
\|M_1 - M_2\|_F \le \frac{1}{\sigma_{\min}(P)}\|V_1 - V_2\|_F.
\]
\end{lemma}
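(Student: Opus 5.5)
The plan is to reduce the claim to an operator-norm bound on the Moore--Penrose pseudoinverse $P^\dagger\in\mathbb{R}^{K\times G}$ under Condition~\ref{ass:full-rank}. First, by linearity of $P^\dagger$,
\[
M_1^\top - M_2^\top = P^\dagger (V_1 - V_2).
\]
Transposition preserves the Frobenius norm, so $\|M_1-M_2\|_F = \|P^\dagger(V_1-V_2)\|_F$. It therefore suffices to show $\|P^\dagger A\|_F \le \|P^\dagger\|_2\,\|A\|_F$ for any $A\in\mathbb{R}^{G\times m}$, and then to identify $\|P^\dagger\|_2 = 1/\sigma_{\min}(P)$.

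For the first inequality, I will apply $P^\dagger$ to $A$ column by column. Writing $A=[a_1,\dots,a_m]$, we have
\[
\|P^\dagger A\|_F^2=\sum_j \|P^\dagger a_j\|_2^2\le \|P^\dagger\|_2^2\sum_j\|a_j\|_2^2=\|P^\dagger\|_2^2\|A\|_F^2 .
\]

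For the spectral norm of $P^\dagger$, I will use the thin SVD $P = U\Sigma W^\top$, where $U\in\mathbb{R}^{G\times K}$ has orthonormal columns, $W\in\mathbb{R}^{K\times K}$ is orthogonal, and $\Sigma=\mathrm{diag}(\sigma_1,\dots,\sigma_K)$. Condition~\ref{ass:full-rank} (rank $K$) is exactly what guarantees all $K$ singular values are strictly positive, so $\sigma_{\min}(P)=\sigma_K>0$ and the right-hand side of the lemma is finite. In this case $P^\dagger = (P^\top P)^{-1}P^\top = W\Sigma^{-1}U^\top$. Its nonzero singular values are $1/\sigma_i$, giving $\|P^\dagger\|_2 = 1/\sigma_K$. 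Combining the two steps yields the lemma.

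The only point requiring care is this last identification. Without full column rank, $\sigma_{\min}$ would be zero, and one would instead need the smallest nonzero singular value. Under Condition~\ref{ass:full-rank} this issue does not arise. As a consistency check, when $V_i = PM_i^\top$ one has $P^\dagger P = I_K$, which recovers the exact $M_i$, in line with Lemma~\ref{lem:moment-identifiability}.
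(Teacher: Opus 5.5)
Your proposal is correct and follows the paper's proof essentially step for step. Both write $M_1^\top - M_2^\top = P^\dagger(V_1-V_2)$, apply $\|AX\|_F \le \|A\|_2\|X\|_F$, and use $\|P^\dagger\|_2 = 1/\sigma_{\min}(P)$ for full-column-rank $P$; you additionally supply the column-wise Frobenius bound and the thin-SVD computation of $P^\dagger$, which the paper invokes as standard facts.
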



\begin{proof}[Proof sketch]
The result follows from standard linear algebra. Since $M^\top = P^\dagger V$, where $P^\dagger$ is the Moore-Penrose pseudoinverse, the error in $M$ is bounded by $\|P^\dagger\|_2 \|V_1 - V_2\|_F$. The spectral norm $\|P^\dagger\|_2$ is exactly $1/\sigma_{\min}(P)$.
\end{proof}

\subsubsection{Finite sample bounds}
In practice, we do not have access to the true population expectations $V_*$. Instead, we observe empirical aggregates computed from finite samples in each region. We now quantify how sampling noise propagates to the recovery of demographic group-level feature means.

Let $n_g$ be the number of samples observed in region $g$. Let $\{x_{g,i}\}_{i=1}^{n_g} \stackrel{i.i.d.}{\sim} Q_*^{(g)}$ be the observed trajectories. The empirical regional feature vector is given by:
\begin{equation}
    \hat{v}(g) := \frac{1}{n_g} \sum_{i=1}^{n_g} \phi(x_{g,i}).
\end{equation}
We stack these vectors into the empirical matrix $\hat{V} \in \mathbb{R}^{G \times m}$. We assume the feature map is bounded, i.e., $\|\phi(x)\|_2 \le B$ almost surely.

\begin{lemma}[Finite Sample Error Bound]
\label{lem:finite-sample-bound}
    Suppose Condition~\ref{ass:full-rank} holds. Let $n_{\min} = \min_{g} n_g$ be the minimum sample size across regions. For any $\delta \in (0, 1)$, with probability at least $1 - \delta$, the error between the recovered demographic group-level feature means $\hat{M}$ (derived from $\hat{V}$) and the true means $M_*$ satisfies:
    \begin{equation}
        \|\hat{M} - M_*\|_F \le \frac{1}{\sigma_{\min}(P)} \left( \frac{B (\sqrt{m} + \sqrt{2\log(G/\delta)})}{\sqrt{n_{\min}}} \right) \cdot \sqrt{G},
    \end{equation}
    where $\sigma_{\min}(P)$ is the smallest singular value of the demographic composition matrix $P$.
\end{lemma}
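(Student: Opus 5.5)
The plan is to reduce everything to a per-region concentration bound and then transfer it through the stability result. Define $\hat M^\top := P^\dagger \hat V$. Since $\operatorname{rank}(P)=K$, we have $P^\dagger P = I_K$, so $M_\star^\top = P^\dagger V_\star$. Lemma~\ref{lem:stability}, applied with $V_1=\hat V$ and $V_2=V_\star$, then gives
\[
\|\hat M - M_\star\|_F \le \frac{1}{\sigma_{\min}(P)}\|\hat V - V_\star\|_F .
\]
It therefore remains to show that, with probability at least $1-\delta$,
\[
\|\hat V - V_\star\|_F \le \sqrt{G}\, B(\sqrt m + \sqrt{2\log(G/\delta)})/\sqrt{n_{\min}} .
\]

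Because $\|\hat V - V_\star\|_F^2 = \sum_g \|\hat v(g) - \nu_\star(g)\|_2^2$, it suffices to prove one fact per region. For each region $g$ we want
\[
\|\hat v(g) - \nu_\star(g)\|_2 \le B(\sqrt m + \sqrt{2\log(G/\delta)})/\sqrt{n_g}
\]
with probability at least $1-\delta/G$. A union bound over the $G$ regions then makes all $G$ events hold simultaneously with probability at least $1-\delta$. On that event we use $n_g \ge n_{\min}$ and sum the squares, which produces exactly the factor $\sqrt G$. Note that $\hat v(g)$ is an average of i.i.d. draws with mean $\nu_\star(g)$, since $Q_\star(\cdot\mid g)$ is precisely the law whose $\phi$-mean defines $\nu_\star(g)$.

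For the per-region concentration, write $Y_i = \phi(x_{g,i}) - \nu_\star(g)$. These are i.i.d., mean zero, and satisfy $\|Y_i\|_2 \le 2B$. I would handle them in two steps.
\begin{enumerate}
\item \textbf{Expectation.} By Jensen's inequality and independence,
\[
\mathbb E\bigl\|\tfrac1{n_g}\textstyle\sum_i Y_i\bigr\|_2 \le \sqrt{\mathbb E\|\phi(X)-\nu_\star(g)\|_2^2/n_g} \le B/\sqrt{n_g},
\]
using that the variance is at most $\mathbb E\|\phi\|_2^2 \le B^2$. This is even sharper than the $B\sqrt m/\sqrt{n_g}$ term in the statement. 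If one instead argues coordinatewise with sub-Gaussian coordinates, the $\sqrt m$ factor appears naturally, so the stated form leaves some slack.
\item \textbf{Deviation.} The norm $f(x_1,\dots,x_{n_g}) = \|\tfrac1{n_g}\sum_i Y_i\|_2$ has bounded differences $2B/n_g$. McDiarmid's inequality then gives a deviation of $2B\sqrt{\log(G/\delta)/(2n_g)} = B\sqrt{2\log(G/\delta)}/\sqrt{n_g}$ with probability at least $1-\delta/G$.
\end{enumerate}
Combining the two pieces yields
\[
\|\hat v(g) - \nu_\star(g)\|_2 \le B\bigl(1 + \sqrt{2\log(G/\delta)}\bigr)/\sqrt{n_g} \le B\bigl(\sqrt m + \sqrt{2\log(G/\delta)}\bigr)/\sqrt{n_g},
\]
where the last step uses $m \ge 1$.

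The main obstacle is matching the constants exactly, in particular the bounded-difference constant. The centered variables satisfy $\|Y_i\|_2 \le 2B$, but the difference $\|\phi(x)-\phi(x')\|_2/n_g \le 2B/n_g$ is what enters McDiarmid. I checked that this gives exactly $\sqrt{2\log(G/\delta)}$ because $(2B)^2 n_g / n_g^2 \cdot \log(G/\delta)/2 = 2B^2\log(G/\delta)/n_g$. Everything else is routine: the union bound and the transfer through Lemma~\ref{lem:stability}, which uses $\|P^\dagger\|_2 = 1/\sigma_{\min}(P)$.
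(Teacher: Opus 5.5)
Your proof is correct. It shares the paper's first step, the reduction $\|\hat M - M_\star\|_F \le \sigma_{\min}(P)^{-1}\|\hat V - V_\star\|_F$ via Lemma~\ref{lem:stability}. You also make explicit that $M_\star^\top = P^\dagger V_\star$ because $P^\dagger P = I_K$, which the paper uses tacitly. Where you differ is in bounding $\|\hat V - V_\star\|_F$.

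The paper invokes ``a standard vector concentration bound (e.g., coordinate-wise Hoeffding plus a union bound over $G$ regions and $m$ coordinates).'' Carried out literally, that example bounds each coordinate deviation by $B\sqrt{2\log(2Gm/\delta)/n_g}$. This gives $\|\hat v(g)-\nu_\star(g)\|_2 \le B\sqrt{m}\,\sqrt{2\log(2Gm/\delta)}/\sqrt{n_g}$, where $\sqrt m$ multiplies the logarithmic term rather than adding to it. For every $m\ge 2$ this is strictly larger than the stated $B(\sqrt m+\sqrt{2\log(G/\delta)})/\sqrt{n_g}$.

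Your dimension-free argument is what actually produces the additive ``mean plus deviation'' form in the statement. You use Jensen to get $\mathbb{E}\|\hat v(g)-\nu_\star(g)\|_2\le B/\sqrt{n_g}$, McDiarmid with bounded differences $2B/n_g$ for the fluctuation, and a union bound over only the $G$ regions. It even yields the sharper factor $1+\sqrt{2\log(G/\delta)}$. So the paper's suggested route is simpler, using only scalar Hoeffding, but it does not reach the claimed constants, while yours fully justifies the lemma as stated.

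One detail to add: $\|\phi\|_2\le B$ is assumed only almost surely, and McDiarmid's bounded-differences condition is pointwise. You should therefore redefine $\phi$ on a null set, or restrict to the support, before applying it.
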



\begin{theorem}[Overall bound (optimization + sampling)]
\label{thm:overall-bound}
Suppose Condition~\ref{ass:full-rank} holds.
Let $\hat V$ denote the observed (empirical) region-level aggregates and define
\[
\varepsilon_{\mathrm{opt}} := \|V_\theta - \hat V\|_F,
\qquad
\varepsilon_{\mathrm{samp}} :=
\left(
\frac{B(\sqrt{m}+\sqrt{2\log(G/\delta)})}{\sqrt{n_{\min}}}
\right)\sqrt{G}.
\]
Then for any $\delta\in(0,1)$, with probability at least $1-\delta$,
\begin{equation}
\label{eq:overall-bound}
\|M_\theta - M_\star\|_F
\;\le\;
\frac{\varepsilon_{\mathrm{opt}} + \varepsilon_{\mathrm{samp}}}{\sigma_{\min}(P)}.
\end{equation}
That is, the total error in recovering demographic group-level feature means is bounded by the sum of optimization error and sampling error in the regional aggregates, amplified by $1/\sigma_{\min}(P)$.
\end{theorem}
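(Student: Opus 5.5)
The plan is a triangle inequality in aggregate space, followed by the stability argument of Lemma~\ref{lem:stability} and the concentration bound behind Lemma~\ref{lem:finite-sample-bound}. The model-implied aggregates satisfy $V_\theta = P M_\theta^\top$ by construction, and the true aggregates satisfy $V_\star = P M_\star^\top$. Since $P$ has full column rank under Condition~\ref{ass:full-rank}, we have $P^\dagger P = I_K$. Hence $M_\theta^\top = P^\dagger V_\theta$ and $M_\star^\top = P^\dagger V_\star$, and Lemma~\ref{lem:stability} applies directly with $V_1 = V_\theta$ and $V_2 = V_\star$:
\[
\|M_\theta - M_\star\|_F \le \frac{1}{\sigma_{\min}(P)}\,\|V_\theta - V_\star\|_F .
\]

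Next I split the aggregate discrepancy through the observed aggregates:
\[
\|V_\theta - V_\star\|_F \le \|V_\theta - \hat V\|_F + \|\hat V - V_\star\|_F = \varepsilon_{\mathrm{opt}} + \|\hat V - V_\star\|_F .
\]
The first term is deterministic and is exactly $\varepsilon_{\mathrm{opt}}$. It remains to show that $\|\hat V - V_\star\|_F \le \varepsilon_{\mathrm{samp}}$ with probability at least $1-\delta$.

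For that step I reuse the event underlying Lemma~\ref{lem:finite-sample-bound}. For each region $g$, the row $\hat v(g) - \nu_\star(g)$ is an average of $n_g$ i.i.d.\ mean-zero vectors whose norm is at most $2B$ (or at most $B$ after centering, with the constants absorbed as in that lemma). A vector concentration inequality bounds it: either McDiarmid's inequality applied to $\|\cdot\|_2$, with $\mathbb{E}\|\cdot\|_2 \le B\sqrt{m/n_g}$ by Jensen, or a sub-Gaussian norm bound. This gives
\[
\|\hat v(g) - \nu_\star(g)\|_2 \le \frac{B(\sqrt m + \sqrt{2\log(G/\delta)})}{\sqrt{n_g}}
\]
with probability at least $1-\delta/G$. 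A union bound over the $G$ regions makes all rows hold simultaneously with probability at least $1-\delta$. Using $n_g \ge n_{\min}$ and summing squared row norms then yields $\|\hat V - V_\star\|_F \le \varepsilon_{\mathrm{samp}}$. This is precisely the intermediate inequality inside the proof of Lemma~\ref{lem:finite-sample-bound}, before the factor $1/\sigma_{\min}(P)$ is applied, so I would cite it rather than redo it.

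Combining the three displays on this event gives \eqref{eq:overall-bound}. The one real subtlety is that the high-probability event depends only on the data $\hat V$, not on $\theta$. The bound therefore holds simultaneously for every $\theta$, including one chosen by optimizing against $\hat V$, and no additional uniform-convergence argument is needed. I would state this explicitly. The main technical obstacle is the per-region vector concentration constant, but it is inherited from Lemma~\ref{lem:finite-sample-bound}.
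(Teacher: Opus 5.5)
Your proof is correct and is essentially the paper's. The paper applies the triangle inequality in $M$-space through $\hat M^\top := P^\dagger\hat V$ and uses Lemma~\ref{lem:stability} on the pairs $(V_\theta,\hat V)$ and $(\hat V,V_\star)$, which is equivalent to your split in $V$-space by linearity of $P^\dagger$; your observation that the good event depends only on $\hat V$, so the bound holds uniformly in $\theta$, is a useful explicit addition.

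Do carry out your McDiarmid-plus-Jensen step rather than citing the paper's intermediate display, because that step is what actually delivers the stated constant. Bounded differences $2B/n_g$ give exactly the deviation $B\sqrt{2\log(G/\delta)/n_g}$, and Jensen gives $\mathbb{E}\|\hat v(g)-\nu_\star(g)\|_2\le B/\sqrt{n_g}$. The paper's suggested coordinate-wise Hoeffding with a union bound over the $m$ coordinates would instead give a multiplicative factor of order $\sqrt{m\log(Gm/\delta)}$, not the additive $\sqrt m+\sqrt{2\log(G/\delta)}$.
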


The bound highlights a fundamental constraint: recovery success is a joint function of data quality and regional structure. While increasing the sample size ($n_{\min}$) or improving the model fit ($\varepsilon_{\mathrm{opt}}$) reduces error, these improvements are linearly amplified by $1/\sigma_{\min}(P)$. This implies that even with "infinite" data and perfect optimization, demographic recovery is theoretically impossible if regions lack sufficient heterogeneity ($\sigma_{\min}(P) \to 0$).

\begin{remark}[Implications]
The results in this subsection depend only on Condition~\ref{ass:full-rank}, which is verifiable from observed data. Given a specific partitioning of regions and their demographic compositions:
\begin{itemize}
  \item One can compute $\sigma_{\min}(P)$ to assess whether the partition has sufficient demographic diversity.
  \item Lemma~\ref{lem:finite-sample-bound} provides a concrete bound on the error in recovering $\{\mu_\star(d)\}$ as a function of sample sizes $\{n_g\}$ and the feature bound $B$.
  \item These bounds can guide experimental design: e.g., choosing region granularity to maximize $\sigma_{\min}(P)$ while maintaining adequate sample sizes per region.
\end{itemize}
\end{remark}

\subsection{From Feature Means to Distributions}

Now, we move from matching group-level feature means to recovering the underlying group-level trajectory distributions.

\subsubsection{Recovery in the feature-induced metric without additional conditions.}
We begin by asking: what is the strongest notion of recovery that can be guaranteed
when supervision is provided only through aggregate feature constraints?
Previously, we analyzed the group-level feature means, showing that if Condition~\ref{ass:full-rank} holds, matching region-level aggregates ensures equality of group-level feature means.
However, without further assumptions on $\phi$, equality of feature
means does \emph{not} imply equality of the underlying distributions.
Thus, any recovery statement must be formulated relative to the
information encoded by $\phi$ itself.

To formalize this idea, we introduce a distributional metric that
measures discrepancies only through functions of $\phi$.

\begin{definition}[Feature-Induced Integral Probability Metric]
Let $\phi:\mathcal{X}\to\mathbb{R}^m$ be a measurable feature map.
Define the function class
\[
\mathcal{F}_\phi
=
\left\{
f_w(x) = \langle w,\phi(x)\rangle : \|w\|_2 \le 1
\right\}.
\]
The corresponding integral probability metric (IPM) is
\[
d_\phi(P,Q)
:=
\sup_{f\in\mathcal{F}_\phi}
\left|
\mathbb{E}_{X\sim P}[f(X)]
-
\mathbb{E}_{X\sim Q}[f(X)]
\right|.
\]
\end{definition}

This IPM compares distributions only through linear functionals of the
features $\phi(X)$, and therefore captures exactly the behavioral
information controlled by aggregate supervision.





We now state the formal recovery guarantee under aggregate supervision.

\begin{theorem}[Recovery up to $\phi$-IPM]
\label{thm:recovery-ipm}
Fix a feature map $\phi$.
If the model matches population-level regional aggregates,
$V_\theta^\phi = V_\star^\phi$,
then for every demographic group $d$,
\[
d_\phi\!\left(P_\theta(\cdot\mid d), P_\star(\cdot\mid d)\right)=0.
\]
\end{theorem}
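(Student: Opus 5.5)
The plan is to reduce the statement to Lemma~\ref{lem:moment-identifiability} and then compute the IPM in closed form. One point is implicit in the statement: it does not repeat Condition~\ref{ass:full-rank}, but without it the conclusion can fail. For example, if all $p_g$ coincide, any reshuffling of mass across groups that preserves the mixture also preserves $V$. We therefore assume Condition~\ref{ass:full-rank} throughout, as in the rest of the subsection. We also note that $\phi$ bounded guarantees that all expectations below are finite.

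\emph{Step 1 (reduction to feature means).} The hypothesis $V_\theta^\phi = V_\star^\phi$ says exactly that $\nu_\theta(g)=\nu_\star(g)$ for every region $g$. Lemma~\ref{lem:moment-identifiability} then gives $\mu_\theta(d)=\mu_\star(d)$ for every $d$, where $\mu_\theta(d):=\mathbb{E}_{X\sim P_\theta(\cdot\mid d)}[\phi(X)]$. Equivalently, $P(M_\theta-M_\star)^\top=0$, and the trivial null space of $P$ forces $M_\theta=M_\star$.

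\emph{Step 2 (closed form of $d_\phi$).} For any $f_w\in\mathcal{F}_\phi$, linearity of expectation gives
\[
\mathbb{E}_{P}[f_w]-\mathbb{E}_{Q}[f_w]=\bigl\langle w,\ \mathbb{E}_P[\phi(X)]-\mathbb{E}_Q[\phi(X)]\bigr\rangle .
\]
Interchanging the inner product with the expectation is justified because $\phi$ is bounded and measurable, so it is integrable. Taking the supremum over $\|w\|_2\le 1$ and applying Cauchy--Schwarz, with equality at the normalized difference vector, yields
\[
d_\phi(P,Q)=\bigl\|\mathbb{E}_P[\phi]-\mathbb{E}_Q[\phi]\bigr\|_2 .
\]

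\emph{Step 3 (conclusion).} Apply Step 2 with $P=P_\theta(\cdot\mid d)$ and $Q=P_\star(\cdot\mid d)$. This gives $d_\phi = \|\mu_\theta(d)-\mu_\star(d)\|_2$, which is $0$ by Step 1.

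The same argument, combined with Theorem~\ref{thm:overall-bound}, yields a quantitative version: $\max_d d_\phi \le \|M_\theta-M_\star\|_F$.

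The only real obstacle is not technical. It is making explicit that the full-rank condition, or some equivalent identifiability assumption, is needed for the theorem to hold. With it, every step is routine linear algebra.
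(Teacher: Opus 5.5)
Your proposal is correct and follows the paper's proof essentially verbatim. Lemma~\ref{lem:moment-identifiability} gives $\mu_\theta(d)=\mu_\star(d)$, and the closed form $d_\phi(P,Q)=\|\mathbb{E}_P[\phi]-\mathbb{E}_Q[\phi]\|_2$ then gives zero; you justify this closed form via Cauchy--Schwarz, whereas the paper simply asserts it. You are also right that the paper's proof silently relies on Condition~\ref{ass:full-rank} through that lemma, even though the theorem statement omits it.
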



Theorem~\ref{thm:recovery-ipm} characterizes what can be recovered from aggregate supervision
without additional identifiability assumptions on $\phi$.
Rather than claiming equality of full trajectory distributions, it
formalizes that matching regional aggregates ensures the model and true distributions
are indistinguishable under any linear functional \citep{rynne2008linear} of $\phi$.
Importantly, many empirical evaluation metrics (e.g., travel
distance) are themselves functionals of features similar to $\phi$, making this
characterization practically relevant.


\subsubsection{Full identifiability}
If we want to fully recover the group-level distributions, then we require an additional condition about the identifiability of the distributions from the feature map $\phi$.

\begin{condition}[$\phi$-identifiability]
\label{ass:phi-identifiable}
The feature map $\phi$ is rich enough that the conditional distributions are identifiable from their feature expectations: if two families
$\{P_d\}_{d=1}^K$ and $\{P'_d\}_{d=1}^K$ satisfy
\[
  \mathbb{E}_{X \sim P_d}\bigl[\phi(X)\bigr]
  \;=\;
  \mathbb{E}_{X \sim P'_d}\bigl[\phi(X)\bigr]
  \quad \text{for all } d \in \{1,\dots,K\},
\]
then $P_d = P'_d$ for all $d$.
\end{condition}


\begin{theorem}[Equivalence of aggregate matching and demographic recovery]
\label{thm:consistency-aggregates}
Fix a feature map $\phi$ and let $V_\theta, V_\star \in \mathbb{R}^{G\times m}$ denote the
population region-level aggregates with
\[
v_\theta(g)=\sum_{d=1}^K p_{g,d}\,\mu_\theta(d),
\qquad
v_\star(g)=\sum_{d=1}^K p_{g,d}\,\mu_\star(d),
\]
where $\mu_\theta(d)=\mathbb{E}_{X\sim P_\theta(\cdot\mid d)}[\phi(X)]$ and similarly for $\mu_\star(d)$.
Assume:
(i) $P$ has full column rank (Condition~\ref{ass:full-rank}), and
(ii) $\phi$ is identifying on the family of conditional distributions (Condition~\ref{ass:phi-identifiable}).
Then the following are equivalent:
\begin{enumerate}[label=(\roman*), leftmargin=*]
\item $V_\theta = V_\star$ (equivalently, $\nu_\theta(g)=\nu_\star(g)$ for all $g$);
\item $\mu_\theta(d)=\mu_\star(d)$ for all $d\in\{1,\dots,K\}$;
\item $P_\theta(\cdot\mid d)=P_\star(\cdot\mid d)$ for all $d\in\{1,\dots,K\}$.
\end{enumerate}
\end{theorem}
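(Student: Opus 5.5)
We will prove the equivalence by establishing the cycle of implications (i)$\Rightarrow$(ii)$\Rightarrow$(iii)$\Rightarrow$(i). Each arrow uses at most one of the two conditions, and each is essentially a one-line consequence of earlier results or definitions.

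\emph{(i)$\Rightarrow$(ii).} This is exactly Lemma~\ref{lem:moment-identifiability}. The matrices satisfy $V_\theta = P M_\theta^\top$ and $V_\star = P M_\star^\top$, so $V_\theta = V_\star$ gives $P(M_\theta - M_\star)^\top = 0$. By Condition~\ref{ass:full-rank}, $P$ has trivial null space, and this holds column by column for each of the $m$ columns of $(M_\theta - M_\star)^\top$. Hence $M_\theta = M_\star$, that is, $\mu_\theta(d)=\mu_\star(d)$ for every $d$.

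\emph{(ii)$\Rightarrow$(iii).} Take $P_d := P_\theta(\cdot\mid d)$ and $P'_d := P_\star(\cdot\mid d)$. Condition (ii) says these two families have equal feature expectations for every $d\in\{1,\dots,K\}$. Condition~\ref{ass:phi-identifiable} then gives $P_\theta(\cdot\mid d) = P_\star(\cdot\mid d)$ for all $d$. The only point needing care is that the model family $\{P_\theta(\cdot\mid d)\}$ must lie in the class of families to which Condition~\ref{ass:phi-identifiable} applies. We read the condition as quantifying over all families of probability measures on $(\Xcal,\mathcal{B})$, or at least over the model class together with the truth, and we will state this explicitly. We also note that $\mu_\theta(d)$ is well defined because $\phi$ is bounded and measurable.

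\emph{(iii)$\Rightarrow$(i).} If the conditional distributions coincide, their expectations of the bounded measurable map $\phi$ coincide, so $\mu_\theta(d)=\mu_\star(d)$ for all $d$. Substituting into the defining mixtures gives $v_\theta(g)=\sum_d p_{g,d}\mu_\theta(d)=\sum_d p_{g,d}\mu_\star(d)=v_\star(g)$ for each $g$, so $V_\theta=V_\star$. This direction needs neither condition.

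The main obstacle is not technical; it is in (ii)$\Rightarrow$(iii), where we must make precise the scope of the identifiability condition. Condition~\ref{ass:phi-identifiable} only yields equality of distributions when it covers the specific pair of families being compared. The rest is linear algebra already carried out in Lemma~\ref{lem:moment-identifiability}. We will also remark that (ii)$\Leftrightarrow$(i) uses only Condition~\ref{ass:full-rank}, so Theorem~\ref{thm:recovery-ipm} is the special case obtained without Condition~\ref{ass:phi-identifiable}.
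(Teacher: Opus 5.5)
Your proposal is correct and matches the paper's proof: (i)$\Rightarrow$(ii) via Lemma~\ref{lem:moment-identifiability}, (ii)$\Rightarrow$(iii) via Condition~\ref{ass:phi-identifiable}, and your single arrow (iii)$\Rightarrow$(i) merges the paper's separate (iii)$\Rightarrow$(ii) and (ii)$\Rightarrow$(i) steps. Your caveat that the identifiability condition must cover the model family as well as the true conditionals is a useful precision that the paper leaves implicit; it only notes that identifiability is ``unlikely to hold for arbitrary distributions.''
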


\begin{proof}[Proof sketch]
(i)$\Rightarrow$(ii): By Lemma~\ref{lem:moment-identifiability}.
(ii)$\Rightarrow$(iii): By Condition~\ref{ass:phi-identifiable}.
(iii)$\Rightarrow$(ii): Immediate from equality of distributions implies equality of expectations.
(ii)$\Rightarrow$(i): Plug $\mu_\theta=\mu_\star$ into $V=PM^\top$.
\end{proof}

Theorem~\ref{thm:consistency-aggregates} clarifies that matching regional aggregates (and thus, matching group-level feature means, under Condition~\ref{ass:full-rank}) recovers the group-level distributions if $\phi$ is identifying for the family of group-level trajectory distributions, $\mathcal{P}_j$.
While this is unlikely to hold for arbitrary distributions, identifiability is feasible if demographic heterogeneity is expressed primarily through low-order behavioral differences.
For example, if the group-level distribution takes the form $P_\star(x \mid d) \propto P_\star(x) \cdot \Pi_{t=1}^T \delta_{x_t,d}$, where $\delta_{x_t,d}$ are group-specific POI scaling factors and $P_\star(x)$ is the unconditional baseline, then POI visit counts form sufficient statistics and uniquely identify the demographic-conditioned distribution.

In the Appendix, we provide full proofs; additional identifiability examples, including minimal exponential families (Appendix~\ref{sec:app-identifiability-examples}); and an interpretation of our two-phase training procedure as an I-projection of the baseline, finding demographic-specific distributions that match the regional aggregate features but otherwise produce minimal changes to the baseline (Appendix~\ref{sec:i-projection}).

\subsection{Practical Implications}

Our theoretical results provide actionable guidance:

\textbf{Demographic diversity matters.} Performance depends on $\sigma_{\min}(P)$, which is directly computable from the demographic composition matrix. Partitions with diverse demographic compositions yield larger $\sigma_{\min}(P)$ and more stable recovery; practitioners can use this diagnostic to assess or optimize their regional design.
We test this in the following sections with experiments on real trajectories that we partition in different ways (Section~\ref{sec:demo-diversity-results}).

\textbf{Feature choice is critical.} 
The choice of feature map $\phi$ determines what aspects of the trajectories can be learned.
Furthermore, opportunities for identifiability are improved if we can assume that demographic heterogeneity manifests through low-order behavioral differences.
For a practitioner, this means that they should consider whether the key characteristics of trajectories that matter for their application can be captured through the feature map $\phi$ or functions of $\phi$.
We test this in the following sections with varying choices of $\phi$ that capture different aspects of the trajectories, such as POI marginals and category transitions (Section~\ref{sec:phi-results}).

\textbf{Importantly, we do not expect these ideal conditions to hold perfectly in practice.}
Rather, they characterize the theoretical limits of aggregate-based learning.
As we demonstrate empirically in Section~\ref{sec:phi-results}, \NAME{} performs increasingly better as the data moves closer to satisfying these conditions, and even when the conditions are not fully met, \NAME{} still improves substantially over baselines without demographic conditioning.


\section{Empirical Set-Up}
\label{sec:empirical}

In this section, we describe our concrete instantiation of \NAME{}, with a specific mobility dataset, model architecture, and model training procedure.
We provide additional details in Appendix~\ref{sec:app-empirical}.

\subsection{Mobility Dataset}
We use the Embee dataset \citep{bouzaghrane2025tracking}, which fuses passive POI check-in data from the SimilarWeb smartphone panel with longitudinal survey responses (six waves, Aug 2020--Sep 2022), providing self-reported demographics.
Check-ins are preprocessed into geographic ``places'' via DBSCAN clustering.
We focus on $K=8$ demographic groups ($4$ age bins $\times$ $2$ genders).
We also include individual home and work locations, which are obfuscated within a 1km radius for privacy protection, as additional individual features $z$ that the model conditions on during trajectory generation.
In our experiments, we use Embee data from two U.S. states---\textbf{Virginia (3,745 individuals, 76,999 trajectories)} and \textbf{California (9,114 individuals, 99,191 trajectories)}---enabling us to test \NAME{} across different environments and populations.
Since we have individual-level trajectories and demographic information, we can systematically test \NAME{} under different conditions, such as different regional partitions of individuals and different choices of aggregate feature.
Further details on data preprocessing and trajectory statistics are provided in Appendix~\ref{sec:app-datasets}.

\subsection{Model Architecture}
\label{sec:model-architecture}
We instantiate the trajectory generation model using a latent diffusion framework that decomposes generation into two phases: trajectory encoding and latent denoising. 
In the encoding step, we follow recent advances in latent diffusion models for sequential data \citep{lovelace2023latent,guo2025leveraging} and employ a BART autoencoder \citep{lewis2020bart} to map variable-length POI sequences into fixed-length continuous latent representations.
The encoded latent representations are then modeled via a Diffusion Transformer (DiT) \citep{peebles2023scalable}, which performs iterative denoising through a reverse diffusion process.
The DiT employs adaptive layer normalization to incorporate conditioning information, such as the home and work coordinates.
The architecture supports flexible conditioning, enabling the incorporation of demographic group embeddings during aggregate supervision. 

\subsection{Model Training}
\label{sec:model-training}
\subsubsection{Baseline training (phase 1).}
Following the two-phase approach outlined in Section~\ref{sec:method}, we first establish a baseline model that learns to generate trajectories conditioned on the individual's home and work locations, but not their demographics.
The BART autoencoder undergoes pretraining on individual trajectories via masked language modeling with span masking, learning to reconstruct trajectory sequences from partially observed inputs.
Subsequently, we train the DiT on latent representations extracted from the frozen autoencoder.
The DiT learns to reverse the forward diffusion process that progressively corrupts latent representations with Gaussian noise, using standard denoising objectives.

\subsubsection{Aggregate supervision (phase 2)}
To incorporate demographic conditioning, we fine-tune the baseline model using only region-level demographic compositions $\{p(\cdot\mid g)\}_{g=1}^G$ and region-level feature aggregates $\{\nu_\star(g)\}_{g=1}^G$.
For each batch, we: 
\begin{enumerate}[label=(\roman*), leftmargin=*, itemsep=0.1em]
  \item Sample a region $g$ uniformly from $\{1,\dots,G\}$;
  \item Sample demographic groups $d\sim p(\cdot\mid g)$ from the region's demographic composition;
  \item Generate synthetic trajectories $x\sim P_\theta(\cdot\mid d,z)$ via the reverse diffusion process, conditioned on demographic embedding $d$ and home/work coordinates $z$ sampled uniformly from training individuals in region $g$;
  \item Decode latent representations to POI sequences and compute empirical aggregates $\hat{\nu}_\theta(g)$ using the feature map $\phi$;
  \item Update $\theta$ via gradient descent to minimize the aggregate loss $\operatorname{dist}(\hat{\nu}_\theta(g), \nu_\star(g))$. We try both Jensen--Shannon \citep{lin2002divergence} and total variation divergences \citep{canonne2022short} in the aggregate loss.
\end{enumerate}

\subsubsection{Evaluation.}
We split trajectories at the user-level into 80\% train, 10\% validation, and 10\% test. In phase 1, we train the baseline model on trajectories from train users.
In phase 2, we use aggregated features within each region computed over trajectories from train users.
During evaluation, we compare our model's generated trajectories per demographic group to trajectories from test individuals in that demographic group.
\section{Experiments}
\label{sec:experiments}

We evaluate \NAME{} through extensive experiments designed to answer the following research questions:
\begin{itemize}[leftmargin=*, itemsep=0.2em]
    \item \textbf{RQ1:} How does diversity in demographic compositions across regions affect \NAME{} performance?
    \item \textbf{RQ2:} How does the choice of feature map $\phi$ affect what aspects of trajectory behavior can be learned from aggregates?
    \item \textbf{RQ3:} How well can \NAME{} improve on important downstream mobility tasks, such as next-POI prediction?
\end{itemize}

\subsection{Experimental Setup}


\subsubsection{Baselines and model variants}
We compare three training regimes, all sharing the same model architecture (Section~\ref{sec:model-architecture}):
\begin{itemize}[leftmargin=*, itemsep=0.1em]
  \item \textbf{Baseline}: The model from phase 1 of our procedure, conditioned on home/work coordinates but not demographics.
  \item \textbf{Strong}: The strongly supervised model trained directly on trajectory-demographic pairs (and home/work coordinates).
  \item \textbf{\NAME{} (ours)}: Fine-tuned from the baseline model using only region-level demographic mixtures $p_g$ and aggregate features.
\end{itemize}

\subsubsection{Evaluation metrics}
\label{sec:eval-metrics}
We evaluate trajectory quality using the following standard trajectory statistics \citep{zhu2023difftraj, guo2025leveraging}:
\begin{itemize}[leftmargin=*, itemsep=0.1em]
    \item Spatial: We partition the area into a uniform grid ($40 \times 40$ for Virginia, $100 \times 100$ for California to account for its larger size), then calculate the distribution of trajectory points within each grid cell.
    \item Travel Distance: For each trajectory, we compute the total Haversine distance traveled and bin the distances into a histogram.
    \item Trip: We use the same spatial grid as in Spatial, represent each trajectory as a trip from the origin cell to the destination cell, and compute the distribution over origin-destination cell pairs. 
    \item POI Frequency: We compute the distribution of visits across all POI tokens in the vocabulary.
\end{itemize}
For each statistic, we compute the Jensen--Shannon divergence (JSD) \citep{lin2002divergence} between the feature distribution from real vs. synthetic trajectories.
JSD is symmetric, where lower is better, and equals zero if and only if the distributions are equivalent.
All JSD metrics are computed per demographic group, comparing only to real trajectories from test individuals in that group, and we report JSDs per group and averaged over all groups.

\begin{figure*}[t]
    \centering
    \includegraphics[width=\linewidth]{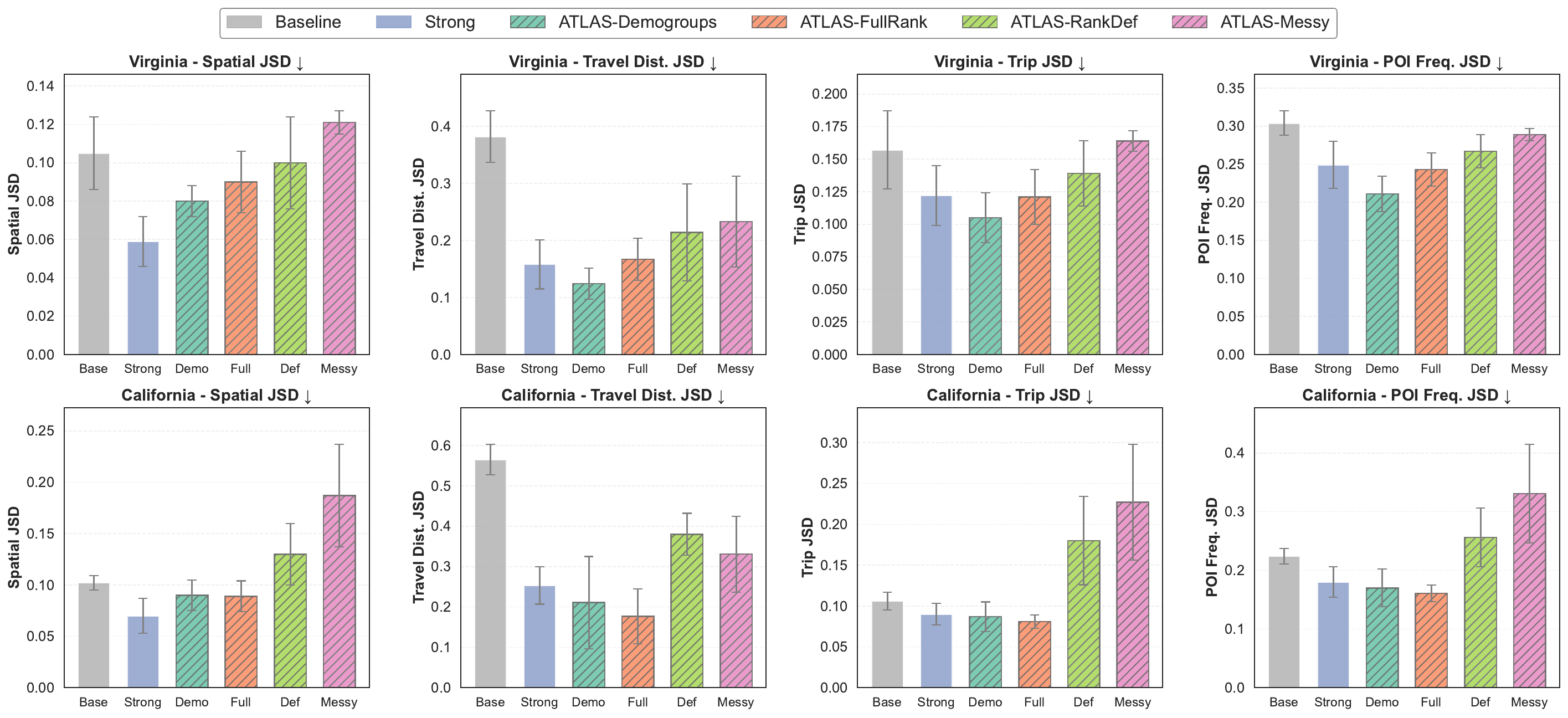}
    \caption{\textbf{Effect of demographic diversity on \NAME{} performance (RQ1).} JSD (lower is better) across four metrics on Virginia (top) and California (bottom). Bars indicate average JSD over 8 demographic groups; errors indicate standard deviation over groups. \NAME{} substantially improves over the baseline under well-conditioned regional partitions, often approaching strongly supervised performance, and degrades gracefully as partitions become ill-conditioned.}
    \label{fig:rq1-results}
\end{figure*}

\subsection{RQ1: Effect of Demographic Diversity}
\label{sec:demo-diversity-results}
First, we evaluate \NAME{} on four different regional partitions that result in varying degrees of diversity in the demographic composition matrix $P$ (following Condition~\ref{ass:full-rank}):
\begin{itemize}[leftmargin=*,
  itemsep=0.1em,
  topsep=0pt,
  parsep=0pt,
  partopsep=0pt]
  \item \textbf{Demogroups}: 8 regions, each with a single demographic group (best case, $P = I_8$, $\sigma_{\min}(P)=1$).
  \item \textbf{Full Rank}: 8 regions with balanced demographic pairs ($\operatorname{rank}(P)=8$, well-conditioned).
  \item \textbf{Rank-Deficient}: 8 regions with mixed demographics ($\operatorname{rank}(P)=7 < 8$, moderately ill-conditioned).
  \item \textbf{Messy}: 8 regions with severe demographic mixing across regions ($\operatorname{rank}(P)=4 \ll 8$, severely ill-conditioned).
\end{itemize}
We include the exact matrices used to instantiate these partitions in Appendix~\ref{sec:app-partitions}.
In these experiments, we use POI counts as the aggregate feature, but test out different features in RQ2 (Section~\ref{sec:phi-results}).


Figure~\ref{fig:rq1-results} summarizes performance across all partition setups for both Virginia and California, revealing how demographic composition affects \NAME{}' ability to recover group-specific trajectory distributions from regional aggregates.
We provide numerical results for each state in the appendix, including average JSDs (Tables~\ref{tab:main_results}, \ref{tab:ca_main_results}) and JSDs per demographic group (Tables~\ref{tab:spatial_jsd_pergroup}-\ref{tab:poi_freq_pergroup}, \ref{tab:ca_spatial_jsd_pergroup}-\ref{tab:ca_poi_freq_pergroup}).

\textbf{Performance on well-conditioned partitions.}
For the Demogroups partition, \NAME{} achieves the strongest overall performance, outperforming the baseline by 12-68\% with an average of 34\% (comparing averages in Tables~\ref{tab:main_results} and \ref{tab:ca_main_results}, range over states and trajectory statistics) and matching strong supervision on several metrics (Figure~\ref{fig:rq1-results}).
On the FullRank partition, \NAME{} maintains strong performance across metrics, outperforming the baseline by 13-69\%, with an average of 31\%.
While performance degrades slightly compared to Demogroups, the model continues to recover meaningful demographic-specific distributions, showing robustness to moderate increases in composition complexity.

\textbf{Progressive degradation with ill-conditioned partitions.}
As predicted by Theorem~\ref{thm:overall-bound}, performance degrades systematically as the demographic composition matrix $P$ becomes less well-conditioned.
For the Rank-Deficient partition, \NAME{} shows modest improvement on spatial metrics but maintains stronger gains on travel distance and POI frequency.
The Messy partition exhibits the most uneven recovery, consistent with severe ill-conditioning preventing reliable disentanglement (Condition~\ref{ass:full-rank}). However, travel distance remains reliably improved relative to the Baseline even in this setting. A plausible explanation is that matching regional aggregates constrains \emph{where} mass is placed in the POI vocabulary: enforcing realistic visit-frequency structure tends to concentrate synthetic trajectories on frequently visited POIs and reduce spurious long-range excursions. In contrast, spatial and trip fidelity degrade under severe ill-conditioning.

\subsection{RQ2: Choice of Feature Map $\phi$}
\label{sec:phi-results}
The choice of aggregate feature $\phi$ determines what aspects of trajectory behavior can be recovered from the feature means, as predicted by our theoretical analysis (Theorem~\ref{thm:recovery-ipm} and Condition~\ref{ass:phi-identifiable}).
Here we consider three types of aggregate features, all normalized histograms over trajectories in each region:
\begin{itemize}[leftmargin=*,
  itemsep=0.1em,
  topsep=0pt,
  parsep=0pt,
  partopsep=0pt]
  \item \textbf{POI-Histogram}: a normalized histogram of POI counts.
  \item \textbf{Cate}: a normalized histogram of POI category counts.
  \item \textbf{Cate-Trans}: a normalized histogram over consecutive POI category bigrams.
\end{itemize}
These features capture essential aspects of demographic-specific mobility behavior: which places are visited, what types of activities are performed, and how activities are sequenced.
To isolate the effect of feature choice, here we fix the regional partition (using Demogroups) and focus on varying the feature.

\begin{table}[h]
\centering
\caption{\textbf{Effect of feature choice on \NAME{} performance (RQ2).} Average JSD$\downarrow$ across 8 demographic groups ($\pm$ std across groups), for Virginia (VA) and California (CA).}
\label{tab:rq2_summary}
\resizebox{\columnwidth}{!}{%
\begin{tabular}{llcccc}
\toprule
State & Feature Map $\phi$ & Spatial & Travel Dist. & Trip & POI Freq. \\
\midrule
\multirow{5}{*}{VA}
& Baseline & $0.105 \pm \scriptstyle 0.019$ & $0.382 \pm \scriptstyle 0.045$ & $0.157 \pm \scriptstyle 0.030$ & $0.304 \pm \scriptstyle 0.016$ \\
& Strong & $0.059 \pm \scriptstyle 0.013$ & $0.158 \pm \scriptstyle 0.043$ & $0.122 \pm \scriptstyle 0.023$ & $0.249 \pm \scriptstyle 0.031$ \\
& POI-Histogram & $0.080 \pm \scriptstyle 0.008$ & $0.124 \pm \scriptstyle 0.027$ & $0.105 \pm \scriptstyle 0.019$ & $0.211 \pm \scriptstyle 0.023$ \\
& Cate-Trans & $0.109 \pm \scriptstyle 0.028$ & $0.227 \pm \scriptstyle 0.039$ & $0.129 \pm \scriptstyle 0.028$ & $0.257 \pm \scriptstyle 0.027$ \\
& Cate & $0.103 \pm \scriptstyle 0.026$ & $0.344 \pm \scriptstyle 0.119$ & $0.149 \pm \scriptstyle 0.036$ & $0.278 \pm \scriptstyle 0.040$ \\
\midrule
\multirow{5}{*}{CA}
& Baseline & $0.102 \pm \scriptstyle 0.007$ & $0.565 \pm \scriptstyle 0.038$ & $0.106 \pm \scriptstyle 0.011$ & $0.224 \pm \scriptstyle 0.013$ \\
& Strong & $0.070 \pm \scriptstyle 0.017$ & $0.253 \pm \scriptstyle 0.046$ & $0.090 \pm \scriptstyle 0.013$ & $0.180 \pm \scriptstyle 0.026$ \\
& POI-Histogram & $0.090 \pm \scriptstyle 0.015$ & $0.211 \pm \scriptstyle 0.114$ & $0.087 \pm \scriptstyle 0.018$ & $0.170 \pm \scriptstyle 0.032$ \\
& Cate-Trans & $0.119 \pm \scriptstyle 0.027$ & $0.326 \pm \scriptstyle 0.063$ & $0.123 \pm \scriptstyle 0.028$ & $0.225 \pm \scriptstyle 0.034$ \\
& Cate & $0.178 \pm \scriptstyle 0.031$ & $0.371 \pm \scriptstyle 0.116$ & $0.308 \pm \scriptstyle 0.036$ & $0.321 \pm \scriptstyle 0.029$ \\
\bottomrule
\end{tabular}}
\end{table}

\textbf{POI features substantially outperform category alternatives.}
Table~\ref{tab:rq2_summary} summarizes performance across three levels of feature granularity, with full per-group breakdowns in Tables~\ref{tab:js_cate_trans_pergroup} and~\ref{tab:ca_cate_ablation_pergroup}.
We find that POI-Histogram achieves the best performance across all evaluation metrics, indicating that fine-grained POI identity provides the most informative aggregate constraints for demographic recovery.
POI-level features encode fine-grained demographic differences in mobility behavior (e.g., which specific stores, gyms, or restaurants each group frequents) that are essential for recovering group-specific trajectory distributions.
Category-level features, by aggregating over all POIs of the same type, discard the spatial and behavioral distinctions that differentiate demographic groups.
Category transitions add some sequential structure, but the loss of POI identity still limits what demographic-conditioned distributions can be recovered from aggregates alone.

\textbf{Imperfect identifiability still helps.}
Importantly, even POI counts do not \emph{fully} satisfy the identifiability requirements of Condition~\ref{ass:phi-identifiable}, yet they still yield improvements over the unconditioned Baseline.
This demonstrates that aggregate supervision provides value even when Condition~\ref{ass:phi-identifiable} is only partially satisfied: richer features provide stronger constraints that push the model closer to demographic-specific patterns, consistent with our theoretical framing of conditions as ideals that guide progressive improvement rather than strict requirements.


\subsection{RQ3: Utility in Downstream Tasks}
Lastly, we evaluate whether improvements in demographic-specific trajectory generation translate into a key downstream task, next POI prediction.
We compare a next-POI predictor trained on synthetic trajectories generated by \NAME{} to one trained on synthetic trajectories from the baseline model and one trained on real trajectories, then evaluate their performance on held-out real trajectories within each demographic group (see Appendix~\ref{sec:app-downstream} for details).

\begin{table}[h]
  \centering
  \caption{\textbf{Next-POI prediction (RQ3).} Accuracy$\uparrow$ and GeoError$\downarrow$ (km) per group for Virginia (VA) and California (CA).}
  \label{tab:rq3_summary}
  \resizebox{\columnwidth}{!}{%
  \begin{tabular}{llccccccccc}
  \toprule
  & Setup & {$<$30, M} & {$<$30, F} & {30--40, M} & {30--40, F} & {40--50, M} & {40--50, F} & {$>$50, M} & {$>$50, F} & Avg \\
  \midrule
  \multicolumn{11}{l}{\textit{Accuracy $\uparrow$}} \\
  \multirow{3}{*}{VA}
  & Real    & 0.587 & 0.638 & 0.549 & 0.568 & 0.570 & 0.523 & 0.565 & 0.521 & 0.565 \\
  & Baseline & 0.480 & 0.473 & 0.520 & 0.555 & 0.474 & 0.440 & 0.417 & 0.437 & 0.475 \\
  & \NAME{} & 0.578 & 0.627 & 0.539 & 0.560 & 0.548 & 0.511 & 0.535 & 0.508 & 0.551 \\
  \cmidrule{2-11}
  \multirow{3}{*}{CA}
  & Real    & 0.610 & 0.657 & 0.604 & 0.618 & 0.620 & 0.554 & 0.596 & 0.581 & 0.605 \\
  & Baseline & 0.599 & 0.625 & 0.596 & 0.562 & 0.587 & 0.539 & 0.530 & 0.509 & 0.581 \\
  & \NAME{} & 0.599 & 0.645 & 0.597 & 0.613 & 0.617 & 0.540 & 0.583 & 0.560 & 0.594 \\
  \midrule
  \multicolumn{11}{l}{\textit{GeoError (km) $\downarrow$}} \\
  \multirow{3}{*}{VA}
  & Real    & 54.5 & 41.8 & 54.5 & 42.9 & 44.3 & 53.6 & 37.9 & 63.9 & 49.2 \\
  & Baseline & 62.4 & 51.2 & 61.5 & 43.8 & 49.2 & 58.1 & 46.3 & 71.0 & 55.5 \\
  & \NAME{} & 56.5 & 41.9 & 56.7 & 42.1 & 49.2 & 53.8 & 41.6 & 66.0 & 51.0 \\
  \cmidrule{2-11}
  \multirow{3}{*}{CA}
  & Real    & 117.3 & 97.7 & 114.1 & 122.0 & 113.6 & 142.7 & 132.3 & 130.2 & 121.2 \\
  & Baseline & 121.4 & 106.0 & 119.8 & 128.8 & 118.5 & 149.7 & 141.8 & 138.0 & 128.0 \\
  & \NAME{} & 119.4 & 101.8 & 115.8 & 123.6 & 114.4 & 147.4 & 136.9 & 134.9 & 124.3 \\
  \bottomrule
  \end{tabular}}
  \end{table}

\textbf{Downstream gains from recovered trajectories.}
Table~\ref{tab:rq3_summary} summarizes results for each state and demographic group, reporting next-POI accuracy and geographic error (in km), with additional metrics (HR@10 and NDCG@10) reported Tables~\ref{tab:next_poi_prediction} and~\ref{tab:next_poi_prediction_ca}.
Training on \NAME{} synthetic data substantially improves next-POI utility compared to the Baseline on both states.
In Virginia, most of the gap in accuracy from Baseline (0.475) to Real (0.565) is closed by \NAME{} (0.551), and GeoError decreases from 55.5 km (Baseline) to 51.0 km (\NAME{}) to 49.2 km (Real).
In California, the same pattern holds: Accuracy improves from 0.581 (Baseline) to 0.594 (\NAME{}) toward 0.605 (Real), and GeoError decreases from 128.0\,km to 124.3\,km toward 121.2\,km.
These consistent improvements demonstrate that aggregate supervision recovers meaningful demographic-specific patterns that transfer to downstream prediction tasks.

\section{Related Work}

\textbf{Mobility trajectory generation.}
Trajectory generation has emerged as a critical tool for privacy-preserving mobility analysis and simulation.
Prior work spans autoregressive models including RNN \citep{kulkarni2017generating}, Transformers \citep{hsu2024trajgpt}, and LLMs \citep{li2024geo,li2024poi,wang2024urban}, as well as non-autoregressive approaches including VAEs \citep{long2023practical,huang2019variational}, GANs \citep{jiang2023continuous}, and diffusion models \citep{zhu2023difftraj,song2024controllable,zhu2024controltraj,wei2024diff,guo2025leveraging} for road- or POI-level trajectory synthesis.
These methods balance validity constraints (e.g., geographic coherence, temporal consistency) with distributional fidelity.

However, most trajectory generation models fail to condition on demographics, despite large demographic heterogeneity in mobility patterns and important consequences of such heterogeneity, such as disparate health or access to resources \citep{chang2021mobility,nilforoshan2023segregation}.
One work that does incorporate demographic conditioning into trajectory generation \citep{song2024controllable} requires strongly supervised learning, with access to individual-level demographic labels.
However, individual-level labels are often unavailable or cannot be released due to privacy concerns.
Differentially private generative models \citep{abadi2016deep,dockhorn2022differentially,igamberdiev-habernal-2023-dp,ponomareva-etal-2022-training} offer formal privacy guarantees but still require access to individual-level demographic data during training.
Our work provides a complementary direction: when individual demographic labels are not available, we provide an alternate path for learning demographic-conditioned trajectory generation from more available, aggregated data. 
To the best of our knowledge, our work is the first to propose learning conditional trajectory generation with aggregate supervision. 

\textbf{Learning from aggregates / label proportions.}
Learning from label proportions (LLP) \citep{quadrianto2009llp} addresses settings where only group-level label statistics are observed rather than individual labels.
A closely related problem is ecological inference \citep{schuessler1999eco,king2004eco,wakefield2008two}, which seeks to infer individual-level behavior from aggregate regional statistics---particularly voting patterns and demographic data in social science applications.
Related work in computer science seeks to infer transition matrices or networks from aggregated information (e.g., node-level marginals, timeseries data) \citep{kumar2015inverting,hallac2017network,maystre2017choicerank,chang2024ipf}.
Both LLP and ecological inference have focused predominantly on discriminative or regression tasks \citep{zhang2020learning,chen2024general,wei2023universal}.
However, the generative modeling counterpart remains unexplored.
Our work bridges this gap by extending aggregate learning principles from discriminative to generative settings.

\section{Discussion}
In this work, we introduced \NAME{}, a novel method for learning demographic-conditioned trajectory generation with aggregate supervision, using region-level demographic compositions and aggregated mobility features.
Our theoretical foundations formalized data conditions that enable \NAME{} to work well, with actionable guidance for practitioners around regional partitions and feature choice. 
Our experiments with real individual trajectories and demographics demonstrated the efficacy of \NAME{}, showing that it outperforms baselines without demographic conditioning and approaches the performance for a strongly supervised model. 

\textbf{Future work.} A strength of \NAME{} is that it is model-agnostic.
While we instantiated \NAME{} with a diffusion model, future work could apply \NAME{} to other model architectures, such as LLMs \citep{li2024geo,wang2024urban} or VAEs \citep{long2023practical}.
Furthermore, our aggregate supervision framework could be extended to generative models for other data types, beyond mobility trajectories, and/or to other types of conditioning, beyond demographics.
Finally, future work could explore learning conditioned distributions that share weights (e.g., between ``< 30, M'' and ``< 30, F'') or scaling \NAME{} to more trajectory data and to more regions, including those outside of the U.S.

\section{Acknowledgements}
We thank Carlos Guirado and Prof.~Joan Walker for providing access to the Embee dataset \cite{bouzaghrane2025tracking} and for helpful discussions about the data and its collection. This work was supported in part by the Google Research Scholar Program and by compute resources at UC Berkeley's Center for Human-Compatible AI (CHAI). 

\bibliography{references}
\bibliographystyle{unsrt}

\appendix

\counterwithin{figure}{section}
\counterwithin{table}{section}
\renewcommand\thefigure{\thesection\arabic{figure}}
\renewcommand\thetable{\thesection\arabic{table}}

\section{Extended Theory}
\label{sec:app-extended-theory}

This appendix provides additional theoretical details complementary to Section~\ref{sec:theory}. In particular, we (i)~provide the core notation, and (ii)~provide full proofs for results stated as proof sketches in the main text.

\subsection{Notation}
\label{sec:app-theory-notation}
We use the same notation as Section~\ref{sec:theory}. Let $K$ denote the number of demographic groups and $G$ the number of regions. Let $\phi:\mathcal{X}\to\mathbb{R}^m$ be the aggregate feature map. By construction, $V_\star = P M_\star^\top$ and $V_\theta = P M_\theta^\top$.

\begin{table}[ht]
\centering
\caption{Key Notation.}
\label{tab:notation}
\small
\begin{tabular}{ll}
\toprule
\textbf{Symbol} & \textbf{Definition} \\
\midrule
$p_g\in\Delta^{K-1}$ & region-$g$ demographic composition (row of $P$) \\
$P\in\mathbb{R}^{G\times K}$ & demographic composition matrix with rows $p_g^\top$ \\
$\mu_\star(d)\in\mathbb{R}^m$ & group-$d$ feature mean $\mathbb{E}_{P_\star(\cdot\mid d)}[\phi(X)]$ \\
$\mu_\theta(d)\in\mathbb{R}^m$ & model group-$d$ feature mean $\mathbb{E}_{P_\theta(\cdot\mid d)}[\phi(X)]$ \\
$M_\star\in\mathbb{R}^{m\times K}$ & columns are $\mu_\star(d)$ \\
$M_\theta\in\mathbb{R}^{m\times K}$ & columns are $\mu_\theta(d)$ \\
$\nu_\star(g)\in\mathbb{R}^m$ & region-$g$ aggregate $\sum_d p_g(d)\mu_\star(d)$ \\
$\nu_\theta(g)\in\mathbb{R}^m$ & model region-$g$ aggregate $\sum_d p_g(d)\mu_\theta(d)$ \\
$V_\star\in\mathbb{R}^{G\times m}$ & rows are $\nu_\star(g)^\top$ \\
$V_\theta\in\mathbb{R}^{G\times m}$ & rows are $\nu_\theta(g)^\top$ \\
\bottomrule
\end{tabular}
\normalsize
\end{table}

\subsection{Proofs omitted from the main text}
\label{sec:app-theory-proofs}

\subsubsection{Proof of Lemma~\ref{lem:moment-identifiability}}
\begin{lemma*}[Uniqueness of group-level feature means]
Suppose Condition~\ref{ass:full-rank} holds. If the model-implied and true regional aggregates match, $\nu_\theta(g) = \nu_\star(g)$ for all regions $g$, then the demographic group-level feature means coincide:
\[
  \mu_\theta(d) = \mu_\star(d) \quad \text{for all } d.
\]
\end{lemma*}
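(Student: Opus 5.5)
The plan is to rewrite the aggregate-matching hypothesis as a single matrix identity and then use the full-column-rank property of $P$ to cancel it.

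First I will check that the model-implied aggregates have the same linear form as the true ones. Phase 2 generates region-$g$ trajectories by drawing $d\sim p(\cdot\mid g)$ and then sampling from $P_\theta(\cdot\mid d)$. By the law of total expectation,
\[
\nu_\theta(g)=\sum_{d=1}^K p(d\mid g)\,\mu_\theta(d),
\]
which parallels the expression for $\nu_\star(g)$ in the Preliminaries. Stacking rows over $g$ then gives $V_\theta=PM_\theta^\top$ and $V_\star=PM_\star^\top$. Here $\mu_\theta(d)$ and $\mu_\star(d)$ are finite vectors because $\phi$ is bounded.

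Next I will subtract the two identities. Set $\Delta:=M_\theta-M_\star\in\mathbb{R}^{m\times K}$. The hypothesis $\nu_\theta(g)=\nu_\star(g)$ for every $g$ is exactly $V_\theta=V_\star$, so $P\Delta^\top=0$. Reading this column by column, each column $\Delta^\top e_j\in\mathbb{R}^K$, for $j=1,\dots,m$, satisfies $P(\Delta^\top e_j)=0$. That is, each column lies in $\ker P$.

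The last step invokes Condition~\ref{ass:full-rank}. Since $\operatorname{rank}(P)=K$ equals the number of columns, rank--nullity gives $\ker P=\{0\}$. Hence every column of $\Delta^\top$ vanishes, so $\Delta=0$, which means $\mu_\theta(d)=\mu_\star(d)$ for all $d$.

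An equivalent route is to left-multiply by $(P^\top P)^{-1}P^\top$. This matrix exists because $P^\top P$ is invertible under full column rank.

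None of these steps is a real obstacle. The only point needing care is the first one: the model's regional aggregate must be defined with the same mixture weights $p_g$ as the truth, and the conditional generator $P_\theta(\cdot\mid d)$ must not depend on $g$. This mirrors the assumption $P_\star(X\mid d,g)=P_\star(X\mid d)$ made in the Preliminaries, with the extra features $z$ dropped.
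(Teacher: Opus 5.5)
Your proposal is correct and matches the paper's proof: both rewrite the hypothesis as $P(M_\theta - M_\star)^\top = 0$ and apply the trivial null space of the full-column-rank $P$ column by column to conclude $M_\theta = M_\star$. Your preliminary check that $\nu_\theta(g)=\sum_d p(d\mid g)\,\mu_\theta(d)$ is something the paper simply takes as the definition of the model-implied aggregate (see its notation table), so that step is a harmless addition.
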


\begin{proof}
The assumption $\nu_\theta(g)=\nu_\star(g)$ for all $g$ is equivalent to $V_\theta=V_\star$.
Using $V_\theta = P M_\theta^\top$ and $V_\star = P M_\star^\top$, we have
\[
P(M_\theta - M_\star)^\top = 0.
\]
Since $P$ has full column rank (Condition~\ref{ass:full-rank}), the only vector $u\in\mathbb{R}^K$ satisfying $Pu=0$ is $u=0$.
Applying this argument column-wise to $(M_\theta-M_\star)^\top$ yields $M_\theta=M_\star$, i.e., $\mu_\theta(d)=\mu_\star(d)$ for all $d$.
\end{proof}

\subsubsection{Proof of Lemma~\ref{lem:stability}}
\begin{lemma*}[Stability under aggregate perturbations]
Suppose Condition~\ref{ass:full-rank} holds. Then for any $V_1,V_2\in\mathbb R^{G\times m}$,
if $M_i^\top := P^\dagger V_i$, we have
\[
\|M_1 - M_2\|_F \le \frac{1}{\sigma_{\min}(P)}\|V_1 - V_2\|_F.
\]
\end{lemma*}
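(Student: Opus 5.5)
The argument is a direct operator-norm bound via the singular value decomposition of $P$. By linearity of the pseudoinverse, $M_1^\top - M_2^\top = P^\dagger V_1 - P^\dagger V_2 = P^\dagger (V_1 - V_2)$. Since the Frobenius norm is invariant under transposition, $\|M_1 - M_2\|_F = \|P^\dagger(V_1-V_2)\|_F$. It then suffices to use the submultiplicative inequality $\|AB\|_F \le \|A\|_2\,\|B\|_F$, applied with $A = P^\dagger$ and $B = V_1 - V_2$. To verify this inequality, write $B$ column-wise as $B=[b_1,\dots,b_m]$. Then
\[
\|AB\|_F^2=\sum_j\|Ab_j\|_2^2\le\|A\|_2^2\sum_j\|b_j\|_2^2=\|A\|_2^2\,\|B\|_F^2 .
\]

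The remaining step is to identify $\|P^\dagger\|_2 = 1/\sigma_{\min}(P)$, and this is where Condition~\ref{ass:full-rank} enters. Take the thin SVD $P = U\Sigma W^\top$ with $U\in\mathbb{R}^{G\times K}$ having orthonormal columns, $W\in\mathbb{R}^{K\times K}$ orthogonal, and $\Sigma=\operatorname{diag}(\sigma_1,\dots,\sigma_K)$. Because $\operatorname{rank}(P)=K$, all $\sigma_i>0$, so $\Sigma$ is invertible. Then $P^\dagger = W\Sigma^{-1}U^\top$, which coincides with $(P^\top P)^{-1}P^\top$. Its singular values are $1/\sigma_i$, so its largest singular value is $1/\sigma_{\min}(P)$. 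Combining the two steps gives the stated bound.

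The only point requiring care is the role of the full-rank condition. Without it, $\sigma_{\min}(P)$ (the $K$-th singular value) would be zero and the bound would be vacuous. Moreover, $P^\dagger$ would then invert only the nonzero singular values, so the identity $\|P^\dagger\|_2=1/\sigma_{\min}(P)$ would fail as stated. I would therefore state explicitly that $\sigma_{\min}(P)$ denotes $\sigma_K(P)>0$ under Condition~\ref{ass:full-rank}.

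A remark also connects the lemma to Lemma~\ref{lem:moment-identifiability}. Since $P^\dagger P = I_K$ under full column rank, whenever $V_i = PM_i^\top$ exactly, the definition $M_i^\top := P^\dagger V_i$ recovers the true $M_i$. Hence the lemma applies to genuine group-level means. Otherwise it applies to their least-squares estimates, which is the form used later in Lemma~\ref{lem:finite-sample-bound} and Theorem~\ref{thm:overall-bound}.
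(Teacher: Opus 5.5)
Your proposal is correct and follows essentially the same argument as the paper: write $M_1^\top - M_2^\top = P^\dagger(V_1-V_2)$, apply $\|AX\|_F \le \|A\|_2\|X\|_F$, and use $\|P^\dagger\|_2 = 1/\sigma_{\min}(P)$ under full column rank. The difference is that you also verify the norm inequality column by column and derive $\|P^\dagger\|_2$ from the thin SVD, whereas the paper simply cites both as standard facts.
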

\begin{proof}
Let $V_1,V_2\in\mathbb{R}^{G\times m}$ and define $M_i^\top := P^\dagger V_i$.
Then
\[
M_1^\top - M_2^\top = P^\dagger (V_1 - V_2).
\]
Taking Frobenius norms and using $\|AX\|_F \le \|A\|_2\|X\|_F$ gives
\[
\|M_1 - M_2\|_F = \|M_1^\top - M_2^\top\|_F \le \|P^\dagger\|_2\,\|V_1-V_2\|_F.
\]
For a full-column-rank matrix $P$, $\|P^\dagger\|_2 = 1/\sigma_{\min}(P)$, yielding the claim.
\end{proof}

\subsubsection{Proof of Lemma~\ref{lem:finite-sample-bound} and Theorem~\ref{thm:overall-bound}}
\begin{lemma*}[Finite sample error bound]
    Suppose Condition~\ref{ass:full-rank} holds. Let $n_{\min} = \min_{g} n_g$ be the minimum sample size across regions. For any $\delta \in (0, 1)$, with probability at least $1 - \delta$, the error between the recovered demographic group-level feature means $\hat{M}$ (derived from $\hat{V}$) and the true means $M_*$ satisfies:
    \[
        \|\hat{M} - M_*\|_F \le \frac{1}{\sigma_{\min}(P)} \left( \frac{B (\sqrt{m} + \sqrt{2\log(G/\delta)})}{\sqrt{n_{\min}}} \right) \cdot \sqrt{G},
    \]
    where $\sigma_{\min}(P)$ is the smallest singular value of the demographic composition matrix $P$.
\end{lemma*}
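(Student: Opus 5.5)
We reduce the statement to Lemma~\ref{lem:stability} plus a concentration bound on each row of $\hat V - V_\star$. Take $\hat M^\top := P^\dagger \hat V$. By Condition~\ref{ass:full-rank}, $P^\dagger P = I_K$, so $M_\star^\top = P^\dagger P M_\star^\top = P^\dagger V_\star$. Lemma~\ref{lem:stability} with $V_1=\hat V$, $V_2=V_\star$ then gives
\[
\|\hat M - M_\star\|_F \le \frac{1}{\sigma_{\min}(P)}\,\|\hat V - V_\star\|_F .
\]
It remains to show that, with probability at least $1-\delta$,
\[
\|\hat V - V_\star\|_F \le \sqrt{G}\,\frac{B(\sqrt m+\sqrt{2\log(G/\delta)})}{\sqrt{n_{\min}}}.
\]

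\textbf{Reduction to a single region.} Since $\|\hat V - V_\star\|_F^2=\sum_g \|\hat v(g)-\nu_\star(g)\|_2^2 \le G\max_g \|\hat v(g)-\nu_\star(g)\|_2^2$, it suffices to show
\[
\|\hat v(g)-\nu_\star(g)\|_2 \le \frac{B(\sqrt m + \sqrt{2\log(G/\delta)})}{\sqrt{n_g}} \quad\text{with probability at least } 1-\delta/G,
\]
for each fixed $g$. A union bound over the $G$ regions, together with $n_g\ge n_{\min}$, then finishes the proof.

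\textbf{Per-region concentration.} Write $\hat v(g)-\nu_\star(g)=\frac1{n_g}\sum_i Y_i$ with $Y_i=\phi(x_{g,i})-\nu_\star(g)$. The $Y_i$ are i.i.d., mean zero, and bounded. The deviation splits into an expectation term and a fluctuation term.

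\emph{Expectation term.} By Jensen,
\[
\mathbb{E}\Big\|\tfrac1{n_g}\sum_i Y_i\Big\|_2 \le \sqrt{\tfrac{\mathbb{E}\|Y_1\|^2}{n_g}} \le \frac{B}{\sqrt{n_g}},
\]
since $\mathbb{E}\|Y_1\|^2 = \mathbb{E}\|\phi\|^2-\|\nu_\star\|^2\le B^2$. This is in fact tighter than the $B\sqrt m/\sqrt{n_g}$ the statement allows. The $\sqrt m$ in the statement suggests the authors instead used coordinatewise sub-Gaussianity, treating each coordinate as bounded by $B$ and summing $m$ variances. Either route yields a term at most $B\sqrt m/\sqrt{n_g}$.

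\emph{Fluctuation term.} Apply McDiarmid's bounded-differences inequality to $f(x_1,\dots,x_{n_g})=\|\frac1{n_g}\sum_i Y_i\|_2$. Replacing one sample changes $f$ by at most $2B/n_g$, which gives
\[
\Pr\big(f-\mathbb{E}f\ge t\big)\le \exp\!\big(-n_g t^2/(2B^2)\big).
\]
Setting this equal to $\delta/G$ gives $t = B\sqrt{2\log(G/\delta)}/\sqrt{n_g}$, exactly the second term in the statement.

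\textbf{Main obstacle.} The only delicate point is matching the constants exactly. Using $\|\phi\|\le B$ gives a centered bound of $\|Y_i\|\le 2B$; the bounded difference is still $2B/n_g$, and McDiarmid's exponent $2t^2/\sum c_i^2 = 2t^2 n_g/(4B^2)$ yields exactly $2B^2$ in the denominator, so the constant works out. If a constant slipped elsewhere, I would fall back to a vector Hoeffding/Pinelis bound, which has the same form. Everything else is bookkeeping.

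Theorem~\ref{thm:overall-bound} follows by the same route. Note $M_\theta^\top = P^\dagger V_\theta$ because $V_\theta=PM_\theta^\top$. Lemma~\ref{lem:stability} with the triangle inequality $\|V_\theta-V_\star\|_F\le \|V_\theta-\hat V\|_F+\|\hat V-V_\star\|_F$ gives $\|M_\theta-M_\star\|_F\le(\varepsilon_{\mathrm{opt}}+\varepsilon_{\mathrm{samp}})/\sigma_{\min}(P)$ on the same high-probability event.
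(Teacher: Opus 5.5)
Your proof is correct and shares the paper's skeleton. Lemma~\ref{lem:stability} reduces everything to bounding $\|\hat V - V_\star\|_F$, which is then controlled region by region with a union bound. Your derivation of Theorem~\ref{thm:overall-bound} applies the triangle inequality on the $V$ side rather than the $M$ side, which is equivalent to the paper's.

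The difference is in the concentration step. The paper only cites ``a standard vector concentration bound (e.g., coordinate-wise Hoeffding plus a union bound over $G$ regions and $m$ coordinates).'' You instead concentrate the Euclidean norm of each regional average directly: Jensen controls its expectation, McDiarmid controls its fluctuation, and you union-bound over the $G$ regions only.

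Your route gains more than rigor. Taken literally, coordinate-wise Hoeffding (each coordinate lies in $[-B,B]$) with a union bound over $Gm$ events yields $\|\hat v(g)-\nu_\star(g)\|_2 \le B\sqrt{m}\,\sqrt{2\log(2Gm/\delta)}/\sqrt{n_g}$. There the dimension and confidence factors multiply instead of adding, and the result is weaker than the stated bound whenever $m\ge 2$. So the paper's suggested route does not actually deliver the constant in the statement. Your argument delivers it exactly. It even proves a stronger dimension-free bound with $\sqrt m$ replaced by $1$, since $\mathbb{E}\|\hat v(g)-\nu_\star(g)\|_2\le B/\sqrt{n_g}$.

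The additive form $\sqrt m+\sqrt{2\log(G/\delta)}$ is the signature of a norm-concentration argument like yours (or a sub-Gaussian vector-norm bound), not of a coordinate-wise one.
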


\begin{theorem*}[Overall bound (optimization + sampling)]
Suppose Condition~\ref{ass:full-rank} holds.
Let $\hat V$ denote the observed (empirical) region-level aggregates and define
\[
\varepsilon_{\mathrm{opt}} := \|V_\theta - \hat V\|_F,
\qquad
\varepsilon_{\mathrm{samp}} :=
\left(
\frac{B(\sqrt{m}+\sqrt{2\log(G/\delta)})}{\sqrt{n_{\min}}}
\right)\sqrt{G}.
\]
Then for any $\delta\in(0,1)$, with probability at least $1-\delta$,
\[
\|M_\theta - M_\star\|_F
\;\le\;
\frac{\varepsilon_{\mathrm{opt}} + \varepsilon_{\mathrm{samp}}}{\sigma_{\min}(P)}.
\]
That is, the total error in recovering demographic group-level feature means is bounded by the sum of optimization error and sampling error in the regional aggregates, amplified by $1/\sigma_{\min}(P)$.
\end{theorem*}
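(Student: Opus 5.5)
The plan is to establish Lemma~\ref{lem:finite-sample-bound} first and then derive Theorem~\ref{thm:overall-bound} from it with one triangle inequality. Throughout we use two facts. First, under Condition~\ref{ass:full-rank}, $P^\dagger P = I_K$. Second, $V_\star = P M_\star^\top$. Together these give $M_\star^\top = P^\dagger V_\star$. We define the recovered means by $\hat M^\top := P^\dagger \hat V$. Lemma~\ref{lem:stability} then gives directly
\[
\|\hat M - M_\star\|_F \le \frac{1}{\sigma_{\min}(P)}\|\hat V - V_\star\|_F ,
\]
so everything reduces to a high-probability bound on $\|\hat V - V_\star\|_F$.

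\textbf{Concentration step.} Row $g$ of $\hat V - V_\star$ is $\hat v(g) - \nu_\star(g)$. This is an average of $n_g$ i.i.d. centered random vectors $\phi(x_{g,i}) - \nu_\star(g)$ in $\mathbb{R}^m$. Each vector has norm at most $2B$, and each coordinate is bounded. We aim to show that for a fixed $g$, with probability at least $1-\delta/G$,
\[
\|\hat v(g) - \nu_\star(g)\|_2 \le \frac{B(\sqrt m + \sqrt{2\log(G/\delta)})}{\sqrt{n_g}} .
\]
There are two natural routes.
\begin{enumerate}
\item Bound the expectation by Jensen: $\mathbb{E}\|\hat v(g)-\nu_\star(g)\|_2 \le \sqrt{\mathbb{E}\|\cdot\|_2^2} \le B/\sqrt{n_g}$, or $B\sqrt{m}/\sqrt{n_g}$ if we first bound coordinates. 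Then control the deviation from the expectation with McDiarmid's bounded-differences inequality. The function $(x_1,\dots,x_{n_g}) \mapsto \|\hat v(g) - \nu_\star(g)\|_2$ changes by at most $2B/n_g$ when one sample is replaced.
\item Use a sub-Gaussian vector norm bound: treat $\phi(X)-\mathbb{E}\phi(X)$ as sub-Gaussian with parameter $B$ per direction. A covering or trace argument then gives the $\sqrt m$ term, and a tail term $\sqrt{2\log(1/\delta')}$ follows.
\end{enumerate}
I would try the second route first, because it produces the stated form $B(\sqrt m + \sqrt{2\log(\cdot)})/\sqrt{n_g}$ most directly.

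\textbf{Main obstacle: constants.} The hardest step is matching the constants exactly as stated. McDiarmid gives a tail term $2B\sqrt{\log(G/\delta)/(2n_g)} = B\sqrt{2\log(G/\delta)}/\sqrt{n_g}$, which fits the stated tail term. The expectation term then needs $\le B\sqrt m/\sqrt{n_g}$. This holds because $\mathbb{E}\|\bar Z\|_2^2 = \mathrm{tr}\,\mathrm{Cov}(\phi)/n_g \le \mathbb{E}\|\phi\|_2^2/n_g \le B^2/n_g \le B^2 m/n_g$. So route 1 actually works cleanly with the stated constants, and I would fall back on it if route 2 is messy.

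\textbf{Assembling the lemma.} A union bound over the $G$ regions makes all row bounds hold simultaneously with probability at least $1-\delta$. Since $n_g \ge n_{\min}$, we can replace each $n_g$ by $n_{\min}$. Summing squared row norms gives
\[
\|\hat V - V_\star\|_F \le \sqrt{G}\,\frac{B(\sqrt m + \sqrt{2\log(G/\delta)})}{\sqrt{n_{\min}}} = \varepsilon_{\mathrm{samp}} .
\]
Combining this with the stability inequality above proves Lemma~\ref{lem:finite-sample-bound}.

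\textbf{Theorem~\ref{thm:overall-bound}.} Here we write $M_\theta^\top = P^\dagger V_\theta$, which is valid because $V_\theta = P M_\theta^\top$ by construction. Lemma~\ref{lem:stability} then gives
\[
\|M_\theta - M_\star\|_F \le \frac{\|V_\theta - V_\star\|_F}{\sigma_{\min}(P)} .
\]
The triangle inequality gives $\|V_\theta - V_\star\|_F \le \|V_\theta - \hat V\|_F + \|\hat V - V_\star\|_F$. The first term is $\varepsilon_{\mathrm{opt}}$. On the same event of probability at least $1-\delta$ from the lemma, the second term is at most $\varepsilon_{\mathrm{samp}}$, which yields \eqref{eq:overall-bound}. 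One point to note is that $\theta$ may depend on $\hat V$. This causes no problem, because the event is a property of $\hat V$ alone and the bound holds uniformly over all $\theta$.
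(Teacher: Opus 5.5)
Your proof is correct, and its skeleton matches the paper's. The stability lemma reduces everything to bounding $\|\hat V - V_\star\|_F$. Per-region bounds are combined by a union bound over the $G$ regions and summed in squares to give the $\sqrt{G}$ factor. A triangle inequality then separates out $\varepsilon_{\mathrm{opt}}$. You apply the triangle inequality to $V_\theta - V_\star$ and invoke stability once, whereas the paper inserts $\hat M$ and invokes stability twice. Since $M^\top = P^\dagger V$ is linear, these are the same argument.

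The genuine difference is the concentration step, and there your route is the better one. The paper only cites ``coordinate-wise Hoeffding plus a union bound over $G$ regions and $m$ coordinates.'' With $|\phi_j|\le B$, that route gives a per-region deviation of $B\sqrt{m}\,\sqrt{2\log(2Gm/\delta)}/\sqrt{n_g}$. This scales like $\sqrt{m\log(Gm/\delta)}$ rather than $\sqrt{m}+\sqrt{\log(G/\delta)}$, so it does not reproduce the stated $\varepsilon_{\mathrm{samp}}$. For $m$ on the order of a POI vocabulary it is much weaker.

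Your route 1 uses Jensen to get $\mathbb{E}\|\hat v(g)-\nu_\star(g)\|_2\le B/\sqrt{n_g}$, then McDiarmid with bounded differences $2B/n_g$. This gives $\|\hat v(g)-\nu_\star(g)\|_2 \le B(1+\sqrt{2\log(G/\delta)})/\sqrt{n_g}$ with probability at least $1-\delta/G$. That implies the stated bound with room to spare ($1$ in place of $\sqrt{m}$), and it needs the union bound only over regions. So your argument actually justifies the constants that the paper's sketch leaves unsupported.

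Two small suggestions. First, present route 1 as the proof rather than as a fallback. A covering or sub-Gaussian-norm argument (your route 2) generally introduces absolute constants larger than one and would not match the statement as written. Second, keep your remark that the good event depends only on $\hat V$, so the bound holds uniformly over a data-dependent $\theta$. The paper's proof relies on this without stating it.
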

\begin{proof}
We follow the notation in Section~\ref{sec:theory}. Let $\hat V$ be the empirical region-level aggregates formed by averaging $\phi(X)$ over $n_g$ samples in each region $g$.
Let $V_\star$ denote the population aggregates.
By Lemma~\ref{lem:stability},
\[
\|\hat M - M_\star\|_F \le \frac{1}{\sigma_{\min}(P)}\|\hat V - V_\star\|_F.
\]
It remains to upper-bound $\|\hat V - V_\star\|_F$.
Since $\|\phi(X)\|_2\le B$ almost surely, each coordinate of $\hat v(g)-v_\star(g)$ is an average of bounded random variables, and a standard vector concentration bound (e.g., coordinate-wise Hoeffding plus a union bound over $G$ regions and $m$ coordinates) yields that with probability at least $1-\delta$,
\[
\|\hat V - V_\star\|_F
\;\le\;
\left(\frac{B(\sqrt{m}+\sqrt{2\log(G/\delta)})}{\sqrt{n_{\min}}}\right)\sqrt{G}.
\]
Combining the two displays gives Lemma~\ref{lem:finite-sample-bound}.

For Theorem~\ref{thm:overall-bound}, we decompose
\begin{align*}
\|M_\theta - M_\star\|_F
&\;\le\;
\|M_\theta-\hat M\|_F + \|\hat M - M_\star\|_F \\
&\;\le\;
\frac{1}{\sigma_{\min}(P)}\|V_\theta-\hat V\|_F + \frac{1}{\sigma_{\min}(P)}\|\hat V-V_\star\|_F,
\end{align*}
where the second inequality again uses Lemma~\ref{lem:stability} (applied to $(V_\theta,\hat V)$ and $(\hat V,V_\star)$).
Identifying $\varepsilon_{\mathrm{opt}}:=\|V_\theta-\hat V\|_F$ and the high-probability bound on $\|\hat V-V_\star\|_F$ as $\varepsilon_{\mathrm{samp}}$ yields \eqref{eq:overall-bound}.
\end{proof}

\subsubsection{Proof of Theorem~\ref{thm:recovery-ipm}}
\begin{theorem*}[Recovery up to $\phi$-IPM]
Fix a feature map $\phi$.
If the model matches population-level regional aggregates,
$V_\theta^\phi = V_\star^\phi$,
then for every demographic group $d$,
\[
d_\phi\!\left(P_\theta(\cdot\mid d), P_\star(\cdot\mid d)\right)=0.
\]
\end{theorem*}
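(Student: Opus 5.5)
The plan is to reduce the statement to Lemma~\ref{lem:moment-identifiability} and then compute the $\phi$-IPM explicitly. The IPM is defined only through linear functionals of $\phi$, so it should collapse to a Euclidean distance between feature means. I will work under Condition~\ref{ass:full-rank}. The theorem as stated does not list it, but it is needed: if $P$ is rank deficient, pick a nonzero $u\in\ker P$ and perturb the columns of $M_\theta$ along $u$ (realizable, e.g., by shifting mass between groups' distributions). The aggregates $V_\theta=PM_\theta^\top$ are then unchanged while some $\mu_\theta(d)\neq\mu_\star(d)$, so the conclusion fails. I will state this assumption explicitly, reading "Fix a feature map $\phi$" as being in the setting of the preceding subsection.

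\textbf{Step 1 (means coincide).} From $V^\phi_\theta=V^\phi_\star$ and Condition~\ref{ass:full-rank}, Lemma~\ref{lem:moment-identifiability} gives $\mu_\theta(d)=\mu_\star(d)$ for every $d$.

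\textbf{Step 2 (closed form of $d_\phi$).} For any $P,Q$ and any $f_w\in\mathcal{F}_\phi$, boundedness of $\phi$ ($\|\phi\|_2\le B$) makes all expectations finite. Linearity of expectation then gives
\[
\mathbb{E}_P[f_w]-\mathbb{E}_Q[f_w]=\bigl\langle w,\ \mathbb{E}_P[\phi]-\mathbb{E}_Q[\phi]\bigr\rangle .
\]
By Cauchy--Schwarz, the supremum over $\|w\|_2\le1$ is at most $\|\mathbb{E}_P[\phi]-\mathbb{E}_Q[\phi]\|_2$. Taking $w$ equal to the normalized difference (or any $w$ if the difference is zero) attains this bound. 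Hence
\[
d_\phi(P,Q)=\|\mathbb{E}_P[\phi]-\mathbb{E}_Q[\phi]\|_2 .
\]

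\textbf{Step 3 (conclude).} Applying Step 2 to $P_\theta(\cdot\mid d)$ and $P_\star(\cdot\mid d)$ gives $d_\phi = \|\mu_\theta(d)-\mu_\star(d)\|_2$, which is $0$ by Step 1. As a remark, Step 2 combined with Theorem~\ref{thm:overall-bound} yields a quantitative version:
\[
\max_d d_\phi\bigl(P_\theta(\cdot\mid d),P_\star(\cdot\mid d)\bigr)\le \|M_\theta-M_\star\|_F\le(\varepsilon_{\mathrm{opt}}+\varepsilon_{\mathrm{samp}})/\sigma_{\min}(P)
\]
with high probability.

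\textbf{Main obstacle.} The computation itself is routine. The only delicate point is the hypothesis: the argument genuinely depends on Condition~\ref{ass:full-rank} through Step 1. I would make that dependence explicit and include the kernel counterexample above to justify adding it.
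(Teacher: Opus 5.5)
Your proposal is correct and follows the paper's proof: Lemma~\ref{lem:moment-identifiability} gives $\mu_\theta(d)=\mu_\star(d)$, and the closed form $d_\phi(P,Q)=\|\mathbb{E}_P[\phi]-\mathbb{E}_Q[\phi]\|_2$ then yields zero. You are also right that, for non-constant $\phi$, the statement genuinely needs Condition~\ref{ass:full-rank}, which the paper uses only implicitly through that lemma; your Cauchy--Schwarz argument with attainment also supplies the justification of the closed form, which the paper only asserts.
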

\begin{proof}
Aggregate matching $V_\theta=V_\star$ implies $\mu_\theta(d)=\mu_\star(d)$ for all $d$ by Lemma~\ref{lem:moment-identifiability}.
For the feature-induced IPM $d_\phi$, the closed form is
\begin{align*}
d_\phi\!\left(P_\theta(\cdot\mid d),P_\star(\cdot\mid d)\right)
&=
\left\|\mathbb{E}_{P_\theta(\cdot\mid d)}[\phi(X)]-\mathbb{E}_{P_\star(\cdot\mid d)}[\phi(X)]\right\|_2 \\
&=
\|\mu_\theta(d)-\mu_\star(d)\|_2
=0,
\end{align*}
which holds for each $d$.
\end{proof}

\subsubsection{Proof of Theorem~\ref{thm:consistency-aggregates}}
\begin{theorem*}[Equivalence of aggregate matching and demographic recovery]
Fix a feature map $\phi$ and let $V_\theta, V_\star \in \mathbb{R}^{G\times m}$ denote the
population region-level aggregates with
\[
v_\theta(g)=\sum_{d=1}^K p_{g,d}\,\mu_\theta(d),
\qquad
v_\star(g)=\sum_{d=1}^K p_{g,d}\,\mu_\star(d),
\]
where $\mu_\theta(d)=\mathbb{E}_{X\sim P_\theta(\cdot\mid d)}[\phi(X)]$ and similarly for $\mu_\star(d)$.
Assume:
(i) $P$ has full column rank (Condition~\ref{ass:full-rank}), and
(ii) $\phi$ is identifying on the family of conditional distributions (Condition~\ref{ass:phi-identifiable}).
Then the following are equivalent:
\begin{enumerate}[label=(\roman*), leftmargin=*]
\item $V_\theta = V_\star$ (equivalently, $\nu_\theta(g)=\nu_\star(g)$ for all $g$);
\item $\mu_\theta(d)=\mu_\star(d)$ for all $d\in\{1,\dots,K\}$;
\item $P_\theta(\cdot\mid d)=P_\star(\cdot\mid d)$ for all $d\in\{1,\dots,K\}$.
\end{enumerate}
\end{theorem*}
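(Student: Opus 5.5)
I will prove the equivalence by establishing the cycle of implications (i)$\Rightarrow$(ii)$\Rightarrow$(iii)$\Rightarrow$(ii)$\Rightarrow$(i). Since (ii) sits in the middle, it suffices to show (i)$\Leftrightarrow$(ii) and (ii)$\Leftrightarrow$(iii). Throughout, I use the matrix form $V_\theta = P M_\theta^\top$ and $V_\star = P M_\star^\top$. These hold by definition of $v_\theta(g)$ and $v_\star(g)$ as $p_g$-weighted combinations of the columns $\mu_\theta(d)$ and $\mu_\star(d)$.

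For (i)$\Rightarrow$(ii), I invoke Lemma~\ref{lem:moment-identifiability} directly. If $V_\theta = V_\star$, then $P(M_\theta - M_\star)^\top = 0$. Because $P$ has trivial null space under Condition~\ref{ass:full-rank}, each column of $(M_\theta - M_\star)^\top$ vanishes, so $\mu_\theta(d) = \mu_\star(d)$ for every $d$. The converse (ii)$\Rightarrow$(i) needs no condition: if $M_\theta = M_\star$, then $V_\theta = P M_\theta^\top = P M_\star^\top = V_\star$.

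For (ii)$\Rightarrow$(iii), I apply Condition~\ref{ass:phi-identifiable} with $P_d := P_\theta(\cdot\mid d)$ and $P'_d := P_\star(\cdot\mid d)$. Their $\phi$-expectations agree for all $d$, so the condition forces $P_\theta(\cdot\mid d) = P_\star(\cdot\mid d)$. For (iii)$\Rightarrow$(ii), equal distributions give equal expectations of any bounded measurable function. Since $\phi$ is bounded and measurable by Definition~\ref{def:agg-feature}, $\mu_\theta(d) = \mu_\star(d)$ is well defined and the two means coincide.

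There is no real obstacle here. The only point needing care is (ii)$\Rightarrow$(iii), where Condition~\ref{ass:phi-identifiable} must be read as ranging over a family of distributions that contains both the model family $\{P_\theta(\cdot\mid d)\}$ and the true family $\{P_\star(\cdot\mid d)\}$. I will state this explicitly, since identifiability is only meaningful relative to such a family. I will also note that Condition~\ref{ass:full-rank} is used only in (i)$\Rightarrow$(ii), and Condition~\ref{ass:phi-identifiable} only in (ii)$\Rightarrow$(iii).
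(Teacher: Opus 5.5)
Your proposal is correct and follows the paper's proof essentially verbatim: (i)$\Rightarrow$(ii) via Lemma~\ref{lem:moment-identifiability}, (ii)$\Rightarrow$(iii) via Condition~\ref{ass:phi-identifiable}, (iii)$\Rightarrow$(ii) by taking expectations of $\phi$, and (ii)$\Rightarrow$(i) by substituting into $V = PM^\top$. You also point out that the identifiability condition must range over a family containing both the model and the true conditionals, which the paper leaves implicit; that clarification is worth keeping.
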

\begin{proof}
(i)$\Rightarrow$(ii) follows from Lemma~\ref{lem:moment-identifiability}.
(ii)$\Rightarrow$(iii) follows from Condition~\ref{ass:phi-identifiable}, which states that equality of feature expectations for all groups implies equality of the conditional distributions.
(iii)$\Rightarrow$(ii) is immediate by taking expectations of $\phi(X)$ under both sides.
(ii)$\Rightarrow$(i) follows by plugging $\mu_\theta=\mu_\star$ into $V=PM^\top$.
\end{proof}

\subsection{Identifiability examples}
\label{sec:app-identifiability-examples}

\subsubsection{Example 1: Minimal exponential families.}
A canonical example of a low-order structured family satisfying Condition~\ref{ass:phi-identifiable} is the minimal exponential family.
Let
\[
\mathcal{P}_j
=
\left\{
P_\eta(x)
=
\exp\!\left(\langle\eta,\phi_j(x)\rangle - A(\eta)\right)\rho(x)
: \eta \in \Omega
\right\},
\]
where $\rho(x)$ is a base measure, $\Omega \subset \mathbb{R}^m$ is the natural parameter space,
and the family is minimal (i.e., no nontrivial affine dependence among the components of $\phi_j$).

In a minimal exponential family,
the log-partition function $A(\eta)$ is strictly convex.
The moment map satisfies
\[
\nabla_\eta A(\eta)=\mathbb{E}_{P_\eta}[\phi_j(X)],
\]
which is therefore injective by the strict convexity of $A$.
Equality of expectations implies equality of parameters,
hence equality of distributions.

\subsubsection{Example 2: Scaling the unconditioned distribution.}
In the \NAME{} setting, we have access to individual trajectories without demographic information, allowing us to learn the unconditional trajectory distribution $P_\star(X)$.
Even if $P_\star(X)$ is arbitrarily complex, we can recover the demographic-conditioned distribution from regional aggregates, provided that demographic effects enter through a low-order perturbation.
Let $x = x^{(1)}, x^{(2)}, \cdots, x^{(T)}$ represent a trajectory of length $T$.
If the group-level distribution takes the form
\begin{align}
    P_\star(x | d) \propto P_\star(x) \cdot \Pi_{t=1}^T \delta_{x_t,d},
\end{align}
where $\delta_{x_t,d}$ represents group-specific POI scaling factors, then POI visit counts form sufficient statistics and choosing $\phi(x)$ to be the POI visit count vector uniquely identifies the demographic-conditioned distribution.

\subsubsection{Connection between examples.}
Examples 1 and 2 are unified by the I-projection framework (Section~\ref{sec:i-projection} below).
The I-projection has the form $Q_d^\dagger(x) \propto \exp(\langle\lambda_d,\phi(x)\rangle) P_\star(x)$, which is an exponential family (Example 1) with base measure $\rho = P_\star$.
In Example 2, the scaling factors $\Pi_{t=1}^T \delta_{x_t,d}$ can be written as $\exp(\langle\lambda_d, \phi(x)\rangle)$ where $\phi(x)$ is the POI visit count vector and $\lambda_d$ encodes the log-scaling factors.
This reveals that both examples instantiate the same mathematical structure: demographic conditioning introduces an exponential tilt of the unconditional baseline $P_\star$, parameterized by feature expectations.

\subsection{I-projection view of learning from aggregates}
\label{sec:i-projection}
This section formalizes a population-level selection principle that complements the identifiability results: when the aggregate constraints do not uniquely specify $P_\star(\cdot\mid d)$, a natural target is the minimum-change distribution relative to a pretrained base.

\subsubsection{Constraint set and information projection}
Assume phase~1 training yields an unconditional (or non-demographic) base distribution $P_\star$ over trajectories.
For each demographic group $d$, define the constraint set
\[
\mathcal{C}_d := \left\{ Q \in \mathcal{P}(\mathcal{X}) : \mathbb{E}_{Q}[\phi(X)] = \mu_\star(d) \right\}.
\]
The information projection (I-projection) of $P_\star$ onto $\mathcal{C}_d$ is
\[
Q_d^\dagger := \arg\min_{Q\in\mathcal{C}_d} \mathrm{KL}(Q\|P_\star).
\]

\subsubsection{Exponential-tilt form}
\begin{theorem}[Exponential-tilt characterization of the I-projection]
\label{thm:iproj-tilt}
Assume $\mathcal{C}_d$ is nonempty and $\phi$ is finite-dimensional.
Then $Q_d^\dagger$ exists and has the form
\[
Q_d^\dagger(x) = \frac{\exp(\langle \lambda_d,\phi(x)\rangle)}{Z(\lambda_d)}\,P_\star(x),
\]
for some $\lambda_d\in\mathbb{R}^m$, where $Z(\lambda_d)=\mathbb{E}_{P_\star}[\exp(\langle \lambda_d,\phi(X)\rangle)]$.
\end{theorem}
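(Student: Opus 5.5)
The plan is to follow the standard convex-duality (Csiszár) argument for I-projections. Because $\phi$ is bounded, $Z(\lambda)=\mathbb{E}_{P_\star}[\exp(\langle\lambda,\phi(X)\rangle)]$ is finite and smooth on all of $\mathbb{R}^m$. This removes all domain issues for the log-partition function $A(\lambda):=\log Z(\lambda)$.

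\emph{Step 1 (existence of $Q_d^\dagger$).} Restrict to the set $\mathcal{C}_d\cap\{Q:\mathrm{KL}(Q\|P_\star)<\infty\}$, which I take to be nonempty; this is the natural reading of ``$\mathcal{C}_d$ nonempty'' for the problem, since otherwise the minimum value is $+\infty$. Every such $Q$ has a density $q=dQ/dP_\star$. The sublevel sets of $q\mapsto\int q\log q\,dP_\star$ are uniformly integrable, hence weakly compact in $L^1(P_\star)$. The moment constraint $\int q\,\phi\,dP_\star=\mu_\star(d)$ is preserved under weak $L^1$ limits because $\phi$ is bounded. KL is weakly lower semicontinuous and strictly convex. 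Together these give a unique minimizer $q^\dagger$.

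\emph{Step 2 (duality and the tilt form).} Consider the dual function $D(\lambda)=\langle\lambda,\mu_\star(d)\rangle-A(\lambda)$. For any $Q\in\mathcal{C}_d$ and any $\lambda$, let $Q_\lambda\propto e^{\langle\lambda,\phi\rangle}P_\star$. Then the identity
\[
\mathrm{KL}(Q\|P_\star)=\mathrm{KL}(Q\|Q_\lambda)+D(\lambda)
\]
holds, since $\mathbb{E}_Q\phi=\mu_\star(d)$. This gives weak duality, $\mathrm{KL}(Q\|P_\star)\ge\sup_\lambda D(\lambda)$. If $D$ attains its maximum at some $\lambda_d\in\mathbb{R}^m$, the first-order condition $\nabla A(\lambda_d)=\mathbb{E}_{Q_{\lambda_d}}[\phi]=\mu_\star(d)$ shows that $Q_{\lambda_d}\in\mathcal{C}_d$. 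The identity then makes $Q_{\lambda_d}$ the minimizer, with equality precisely when $Q=Q_{\lambda_d}$. By uniqueness from Step 1, $Q_d^\dagger=Q_{\lambda_d}$, which is the claimed form.

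\emph{Step 3 (attainment of $\sup D$ at a finite $\lambda_d$): the main obstacle.} $D$ is concave and smooth. Along any direction $w$ in which $\langle w,\phi\rangle$ is $P_\star$-a.s.\ constant, $D$ is affine. Since $\mathcal{C}_d$ is nonempty, that affine term is in fact constant, so I quotient out these directions, which is harmless because they leave $Q_\lambda$ unchanged. On the remaining subspace, I need coercivity: $D(tw)\to-\infty$ as $t\to\infty$ for each unit $w$. Now $D(tw)/t\to\langle w,\mu_\star(d)\rangle-\operatorname{ess\,sup}_{P_\star}\langle w,\phi\rangle$, and this limit is negative exactly when $\mu_\star(d)$ lies in the relative interior of the convex hull of the $P_\star$-essential range of $\phi$.

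So the first thing I would try is to show that a finite-KL element of $\mathcal{C}_d$ forces this interior condition. In the boundary case the limit is $0$, and the supremum of $D$ is approached only as $|\lambda|\to\infty$. There I would attempt to pass to the exposed face $F=\{\langle w,\phi\rangle=\operatorname{ess\,sup}\}$, where $Q_d^\dagger$ must concentrate, and repeat Steps 2--3 with $P_\star$ restricted to $F$.

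I expect this last step to be the genuine sticking point. A finite-$\lambda$ tilt has a strictly positive density, since $\phi$ is bounded, so it cannot vanish off $F$. The statement with $\lambda_d\in\mathbb{R}^m$ therefore appears to require the relative-interior condition, which I would make explicit as the operative meaning of the nonemptiness hypothesis.
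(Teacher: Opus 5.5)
Your Steps 1--2 are correct, and they are the rigorous version of the paper's argument. The paper's proof is only the formal Lagrangian computation: multipliers for the moment and normalization constraints, a functional derivative set to zero, and $P_\star e^{\langle\lambda_d,\phi\rangle}/Z(\lambda_d)$ read off. That computation never establishes existence, and it presumes rather than proves that the optimizer has a positive density and that the multiplier $\lambda_d$ is finite. Your weak-compactness argument supplies existence. The identity $\mathrm{KL}(Q\|P_\star)=\mathrm{KL}(Q\|Q_\lambda)+D(\lambda)$ turns formal stationarity into verified global optimality and uniqueness. Your Step 3 is exactly the finiteness of the multiplier that the paper takes for granted. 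Directionwise coercivity on the quotient does suffice there, since a closed concave function with negative recession slope in every direction has bounded superlevel sets.

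Your suspicion in Step 3 is right: the statement as written is false, so that step cannot be closed. In particular, drop the plan to show that a finite-KL element of $\mathcal{C}_d$ forces the interior condition.

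\emph{Counterexample.} Take $\mathcal{X}=\{0,1\}$, $P_\star$ uniform, $\phi(x)=x$, $\mu_\star(d)=1$. Then $\mathcal{C}_d=\{\delta_1\}$ with $\mathrm{KL}(\delta_1\|P_\star)=\log 2$, so $Q_d^\dagger=\delta_1$, which is not $e^{\lambda x}P_\star/Z(\lambda)$ for any $\lambda\in\mathbb{R}$.

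\emph{This is not about how nonemptiness is read.} In the paper's own setting, suppose $P_\star=\sum_d\pi_d P_\star(\cdot\mid d)$ with $\pi_d>0$. Then $P_\star(\cdot\mid d)\in\mathcal{C}_d$ has density at most $1/\pi_d$, so finite-KL elements always exist. Yet with POI-count features, a POI that other groups visit but group $d$ never does puts $\mu_\star(d)$ on the face $\{\phi_j=0\}$. That forces $Q_d^\dagger$ to vanish on a set of positive $P_\star$-mass, which no finite tilt can do.

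\emph{How to repair it.} The interior condition must be stated as an added hypothesis, not as an interpretation of nonemptiness. A clean equivalent is that $\mathcal{C}_d$ contains some $Q$ mutually absolutely continuous with $P_\star$. Indeed, if $\mu_\star(d)$ lay on a proper face $\{\langle w,\phi\rangle=s\}$ with $s=\operatorname{ess\,sup}_{P_\star}\langle w,\phi\rangle$, then $\mathbb{E}_Q[s-\langle w,\phi\rangle]=0$ with a nonnegative integrand would confine $Q$ to that face. Under this hypothesis your Steps 2--3 prove the theorem.

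Without it, your face-restriction idea gives the correct general result, Csisz\'ar's generalized exponential form. For measures supported in a set $N$, one has $\mathrm{KL}(Q\|P_\star)=\mathrm{KL}(Q\|P_N)-\log P_\star(N)$ with $P_N:=P_\star(\cdot\cap N)/P_\star(N)$. Each restriction lowers the dimension of the affine hull of the essential range of $\phi$ under the current reference measure. So at most $m$ iterations yield $dQ_d^\dagger/dP_\star=\mathbf{1}_N e^{\langle\lambda_d,\phi\rangle}/\mathbb{E}_{P_\star}[\mathbf{1}_N e^{\langle\lambda_d,\phi\rangle}]$ for some finite $\lambda_d$ and some intersection $N$ of exposed faces.
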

\begin{proof}
This is a standard convex duality result for moment-constrained KL minimization. Introducing Lagrange multipliers for the moment constraints and normalization and taking the functional derivative with respect to $Q$ yields an optimizer proportional to $P_\star(x)\exp(\langle \lambda_d,\phi(x)\rangle)$, with $Z(\lambda_d)$ enforcing normalization.
\end{proof}

\subsubsection{Interpretation}
Theorem~\ref{thm:iproj-tilt} provides an information-theoretic interpretation of aggregate supervision: among all conditional distributions that match the demographic-level feature constraints, the I-projection selects the one that deviates minimally from the base mobility behavior encoded by $P_\star$.
This lens motivates pretraining and clarifies why richer feature maps $\phi$ lead to stronger, more specific conditional adaptations.

\subsubsection{Practical implication: a strong baseline model is useful}
The I-projection implies that identifiability can be achieved when demographic conditioning acts as a perturbation of a known baseline distribution.
In practice, this means that learning a strong unconditional model on available trajectory data---even without demographic labels---provides a valuable backbone for recovering group-specific distributions.
The fine-tuning phase can be interpreted as finding the group-level distributions that satisfy the regional aggregate constraints while producing minimal-change adaptations of the unconditional distribution.
This interpretation justifies our two-phase training approach in \NAME{} (Section~\ref{sec:model-training}).


\section{Empirical Details}
\label{sec:app-empirical}

This section provides additional details about our empirical set-up.
We describe the mobility dataset used throughout our experiments, the preprocessing pipeline that transforms raw visit logs into trajectory segments, and the model architecture and training protocol.
All experiments share the same data splits, model architecture, and evaluation metrics; the primary experimental variations are the demographic compositions resulting from the choice of regional partition, the divergence function used in the aggregate loss, and the choice of aggregate feature. See the following Section~\ref{sec:app-experiments} for experimental details.

\subsection{Mobility Datasets}
\label{sec:app-datasets}
\subsubsection{Data source.}
We use the proprietary Embee mobility dataset from a longitudinal COVID-19 panel study~\citep{bouzaghrane2025tracking}, which combines passively collected point-of-interest (POI) visit logs with self-reported demographics from repeated surveys administered between August 2020 and September 2022.
The collection and analysis of the Embee dataset were approved by the UC Berkeley Committee for Protection of Human Subjects (CPHS). 
The passive mobility data is collected via a mobile analytics platform that recruits a panel of US smartphone users; importantly, the POI records represent \emph{inferred check-ins} at discrete locations rather than continuous GPS traces.
Each record corresponds to a detected visit event and includes geographic coordinates (latitude and longitude), arrival and departure timestamps in local time, a POI name with brand and category metadata, the distance and travel time to reach the location, and flags indicating whether the location corresponds to a user's inferred home or workplace.
For privacy, home and work coordinates are obfuscated within a fixed radius by the data provider.

We evaluate \NAME{} on two geographic U.S. states covered in this dataset: Virginia and California.
Evaluating on both states enables us to test \NAME{} across different urban environments, population densities, and mobility patterns.
Virginia provides a mix of urban, suburban, and rural areas concentrated in the Mid-Atlantic region, while California offers a larger, more diverse population with distinct metropolitan areas (San Francisco Bay Area, Los Angeles Basin).
Analyses in both states are restricted to panelists with complete demographic labels for the attributes used in this study.

\subsubsection{Demographic attributes.}
The longitudinal survey captures a rich set of sociodemographic variables, including age, gender, household income, race/ethnicity, education level, and household size.
For simplicity and to ensure sufficient sample sizes per group, we use only age and gender in this work, discretizing demographics into $K=8$ groups defined by four age bins crossed with two genders: $\{\text{<30},\,\text{30--40},\,\text{40--50},\,\text{>50}\}\times\{\text{M},\text{F}\}$.
These labels are used exclusively for evaluation and for constructing controlled region partitions; our aggregate-supervision setting assumes that individual demographic labels are unobserved during training.

\begin{figure}[!ht]
\centering
\includegraphics[width=\columnwidth]{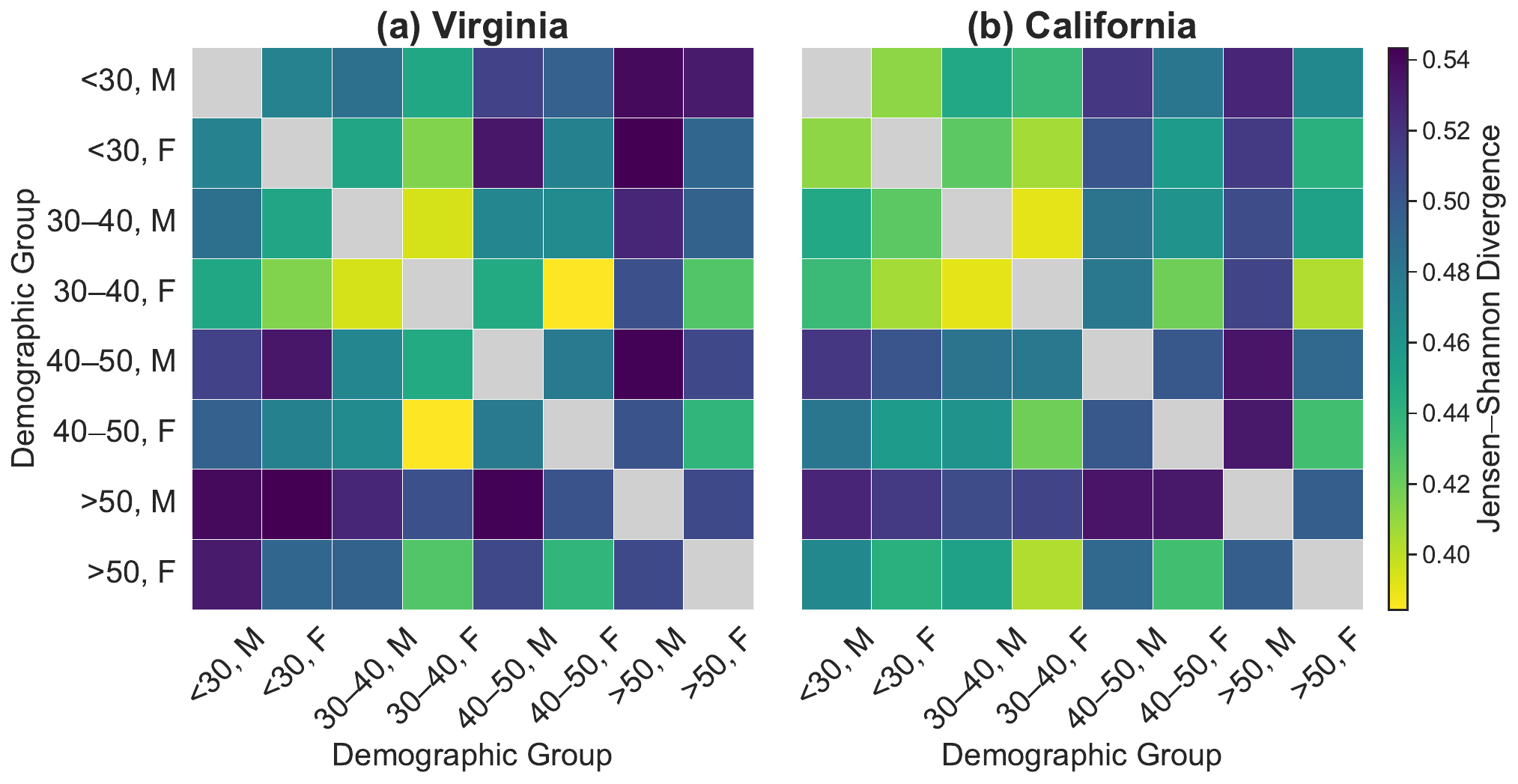}
\caption{\textbf{Pairwise JSD between demographic groups.}
Heatmaps showing pairwise Jensen--Shannon divergence (JSD) of POI visit distributions between $K{=}8$ age$\times$gender groups for Virginia (left) and California (right). JSD values range from approximately 0.40 to 0.54, with warmer colors indicating larger divergence.}
\label{fig:appendix-heat-jsd}
\end{figure}

Figure~\ref{fig:appendix-heat-jsd} quantifies the \textbf{distinguishability of demographic groups based on their POI visit patterns}.
JSD values across all group pairs range from approximately 0.40 to 0.54, indicating substantial but not extreme separation---groups are distinguishable yet share common mobility structures.

\textbf{High-similarity pairs.}
The smallest JSDs tend to occur between nearby age brackets, especially within the 30--40 vs 40--50 range (e.g., 30--40 Male vs 30--40 Female; 30--40 Female vs 40--50 Female). This indicates that mid-life groups have relatively similar POI visit distributions, with gradual changes across adjacent life stages rather than abrupt shifts.

\textbf{High-divergence pairs.}
The largest JSDs are concentrated in comparisons involving the oldest group ($>$50, especially $>$50 Male) against younger and mid-age groups. More broadly, cross-age contrasts (youngest vs oldest) are consistently among the most divergent, reflecting distinct POI preference profiles at opposite life stages.

\textbf{Cross-region consistency.}
Virginia and California show highly similar qualitative structure: (i) mid-age adjacency is relatively similar, (ii) youngest--oldest contrasts are most different, and (iii) the $>$50 Male row/column stands out as especially divergent in both panels. This consistency supports the interpretation that demographic groups exhibit systematic, measurable differences in mobility behavior that generalize across states—motivating demographic-conditioned trajectory generation.

\subsubsection{Preprocessing pipeline.}
We apply a preprocessing pipeline to transform raw visit logs into fixed-length trajectory segments suitable for generative modeling.
The core steps are shared across both states: we remove entries with missing coordinates or flagged as non-POI visits, and split overnight visits at midnight to enable day-level quality assessment.
Each POI is assigned a unique identifier, which we then use to construct the model vocabulary: home and work locations are mapped to dedicated special tokens, and all remaining POIs outside the top-$K$ most frequent locations are collapsed into a single out-of-vocabulary token to mitigate extreme sparsity.

For each user-day, we compute a daily travel distance (approximated by the bounding-box diagonal of visited locations, converted to kilometers) and a temporal coverage ratio (total observed minutes divided by 1440, capped at 1).
Days with daily distance exceeding 800\,km are flagged as spatially unreliable, while days with coverage ratio at least 0.2 are flagged as high-quality.
We extract 14-day trajectory windows; a window is retained only if at most 10\% of its days are spatially unreliable and at least 80\% are high-quality.
For Virginia, we use overlapping windows with a stride of 7 days; for California, we use non-overlapping windows (stride of 14 days).

To reduce sequence length and emphasize behavioral transitions over dwell-time repetitions, we apply run-length encoding: consecutive identical POI tokens are collapsed into a single token.
Finally, we restrict to users with both home and work coordinates available, ensuring consistent spatial conditioning across all models.

\subsubsection{Data splits and statistics.}
To prevent information leakage across segments from the same user, we partition users (not segments) into training, validation, and test sets using an 80/10/10 split based on a deterministic hash of the user identifier.

\begin{itemize}[leftmargin=*, itemsep=0.1em]
    \item \textbf{Virginia.}
    The Virginia dataset contains 4,576 users with complete home/work locations, split into 3,621/484/471 (train/val/test) users, yielding 69,120/9,366/8,995 trajectory segments.
    Of these, 3,745 users have demographic labels, producing 60,978/8,333/7,688 segments with demographic information.
    The median sequence length after run-length encoding is 32 tokens (p95: 68).
    The vocabulary contains 6,981 POI tokens.
    \item \textbf{California.}
    The California dataset contains 11,143 users, split into 8,931/1,097/1,115 (train/val/test) users, yielding 91,037 /11,326 /10,937 trajectory segments.
    Of these, 9,114 users have demographic labels, producing 79,884/10,032/9,275 (train/val/test) segments with demographic information.
    The median number of visits per segment is 43 (p95: 74).
    The vocabulary contains 9,506 POI tokens.
\end{itemize}

\subsubsection{Overview of public mobility datasets.}
We briefly survey three widely used mobility datasets to contextualize our data and highlight the general absence of ground-truth demographic labels in publicly available trajectory datasets.

\begin{itemize}[leftmargin=*, itemsep=0.1em]
    \item \textbf{GeoLife}~\citep{zheng1geolife} is a GPS trajectory dataset collected by Microsoft Research Asia from 178 users over more than four years (April 2007 to October 2011).
    It contains 17,621 trajectories totaling approximately 1.25 million kilometers and 48,000 hours of travel, primarily in Beijing, China.
    Each record is a raw GPS point with latitude, longitude, altitude, and timestamp; 91\% of the data is sampled at high frequency (every 1--5 seconds or 5--10 meters).
    Transportation mode labels (walk, bike, bus, car, etc.) are available for a subset of 69 users, but no demographic attributes (age, gender, income) are provided.
    \item \textbf{YJMob100K}~\citep{yabe2024yjmob100k} is an open-source, anonymized dataset of 100,000 individuals' mobility trajectories collected from mobile phone GPS data in an undisclosed Japanese metropolitan area over 75 days.
    To preserve privacy, locations are discretized into 500\,m $\times$ 500\,m grid cells, timestamps are binned into 30-minute intervals, and the city name and exact coordinates are withheld.
    The dataset includes auxiliary POI category counts per grid cell (85 categories) but explicitly excludes individual-level attributes: gender, age, and occupation are unknown by design.
    \item \textbf{Veraset}~\citep{veraset2022visits} is a commercial mobility data product that provides POI visit logs inferred from smartphone location signals across the United States.
    Each record includes a device identifier, POI name, category (NAICS code), brand, dwell time, and census block group.
    While Veraset data is widely used in urban analytics and epidemiology, demographic attributes of individual devices are not released; any demographic analysis must rely on census-level proxies derived from home-location inference.
\end{itemize}
The absence of ground-truth demographics in these datasets reflects realistic deployment constraints: privacy regulations and data-provider policies typically prohibit linking trajectories to individual-level sociodemographic labels.
Our aggregate-supervision setting is designed precisely for this scenario, where individual demographics are unavailable but region-level demographic compositions and aggregate mobility statistics remain accessible.

\subsection{Model Architecture}
\label{sec:app-architecture}

Our trajectory generation framework builds on the segment-level latent diffusion architecture introduced by Cardiff~\citep{guo2025leveraging}, which decomposes trajectory synthesis into a discrete-to-continuous autoencoder followed by a diffusion transformer operating in latent space.
We adopt the first phase of this cascaded pipeline (segment-level latent diffusion) without the second-phase GPS-level refinement, as our POI-based trajectories do not require continuous coordinate generation.

\subsubsection{Autoencoder.}
We use a BART-style~\citep{lewis2020bart} autoencoder to compress discrete POI token sequences into continuous latent representations.
The encoder is a bidirectional transformer that maps a variable-length POI sequence into a fixed-length latent sequence; the decoder is an autoregressive transformer that reconstructs the original token sequence from the latent.

\subsubsection{Diffusion Transformer (DiT).}
The generative backbone is a Diffusion Transformer~\citep{peebles2023scalable} that denoises latent trajectories using a cosine noise schedule.
The DiT applies a stack of transformer blocks with adaptive layer normalization (adaLN) to condition on the diffusion timestep and auxiliary features.

\subsubsection{Conditioning.}
The DiT accepts several conditioning signals:
(i)~\emph{home/work coordinates}: latitude and longitude of the user's home and workplace, embedded via a small MLP and added to the timestep embedding;
(ii)~\emph{demographic attributes}: age bin and gender, each embedded via a learnable lookup table and concatenated to the condition vector (only used during aggregate-supervised finetuning, not during unconditional pretraining).
During aggregate-supervised training, demographic conditioning is sampled from the region's demographic composition $\pi_g$ to enable learning from aggregates.

\subsection{Model Training}
\label{sec:app-training}
Training proceeds in two phases: (i)~unconditional pretraining of the autoencoder and DiT, followed by (ii)~aggregate-supervised finetuning described in the main text.

\subsubsection{Phase 1: Pretraining.}
The BART autoencoder is pretrained on the full training corpus with masked language modeling using the HuggingFace Trainer, which employs AdamW with weight decay 0.01, a linear warmup over 1000 steps, and BF16 mixed precision.
The DiT is then pretrained to denoise latent representations extracted from the frozen autoencoder, using the standard diffusion MSE objective (predicting $x_0$).
DiT pretraining uses the Adam optimizer with a linear warmup over 1000 steps and gradient clipping at norm 1.0.

\subsubsection{Phase 2: Aggregate-supervised finetuning.}
Starting from the pretrained DiT checkpoint, we finetune with a combination of the diffusion MSE loss and the aggregate divergence loss (Section~\ref{sec:method}).
The aggregate loss weight $\lambda_{\text{agg}}$ is set to 1.0, and we experiment with divergence functions including Jensen--Shannon (JS) and Total Variation (TV).
Finetuning uses the Adam optimizer with a lower learning rate ($10^{-5}$--$10^{-6}$), batch size 256--1024, and gradient clipping at norm 1.0.
Larger batch sizes are preferred during finetuning because each batch samples trajectories from a single region; a larger batch provides a more representative view of the region's population distribution, yielding more stable gradient estimates for the aggregate loss.
We use DDIM sampling~\citep{song2020denoising} with 50 steps for efficient generation during the aggregate loss computation.
To prevent the aggregate loss from over-fitting to high-frequency tokens (home, work, and the catch-all ``other'' token), we optionally mask these tokens from the POI histogram before computing the divergence.

\subsubsection{Implementation.}
All experiments are conducted on a single NVIDIA RTX A6000 GPU using Python 3.9.5, PyTorch 2.6.0, and CUDA 12.4. Table~\ref{tab:hyperparams} summarizes the key hyperparameters used in our experiments.

\begin{table}[t]
\centering
\caption{Hyperparameters of our trajectory generation model.}
\label{tab:hyperparams}
\small
\begin{tabular}{ll|ll}
\toprule
\multicolumn{4}{c}{\textbf{Shared Diffusion Settings}} \\
\midrule
Diffusion steps & 1000 & Noise schedule & Cosine \\
Optimizer & Adam & Prediction type & $x_0$ \\
\midrule
\multicolumn{2}{c|}{\textbf{BART}} & \multicolumn{2}{c}{\textbf{DiT}} \\
\midrule
Encoder layers & 4 & DiT depth & 8--12 \\
Decoder layers & 2 & Hidden size & 128--512 \\
Model dimension & 256 & Attention heads & 4 \\
FFN dimension & 1024 & Input length & 64 \\
Attention heads & 4 & Input dimension & 256 \\
Max sequence length & 64 & Demo embedding dim & 256 \\
\midrule
\multicolumn{2}{c|}{\textbf{Pretraining}} & \multicolumn{2}{c}{\textbf{Finetuning}} \\
\midrule
Learning rate & $10^{-4}$ & Learning rate & $10^{-5}$--$10^{-6}$ \\
Batch size & 512 & Batch size & 256--1024 \\
\bottomrule
\end{tabular}
\end{table}

\subsubsection{Evaluation protocol.}
We report results across multiple random seeds and compute per-demographic-group JSD metrics on the held-out test set.
For each seed, we generate synthetic trajectories by sampling from the finetuned DiT with DDIM (50 steps, $\eta=0$) and decode through the frozen autoencoder.
Metrics are computed by comparing the empirical distributions of synthetic and real trajectories within each demographic group.

\subsubsection{Training dynamics.}
Figures~\ref{fig:va-training-avg}--\ref{fig:ca-training-group} show the evolution of the validation loss and test metrics throughout finetuning for Virginia and California, respectively.
While we did not use the test metrics during model training, comparing how minimizing the training loss (the aggregate loss per region based on train individuals) relates to the validation loss (the aggregate loss per region based on validation individuals) and the test metrics (the JSD per \textit{demographic group} and trajectory statistic based on the test individuals) allows us to investigate how well the training signal, i.e., what the model observes, corresponds with the test metrics, i.e., what the model seeks to optimize. 
This correspondence from train loss to test metric is not guaranteed given the shift from region-level to group-level and the shift in metric (from aggregate feature loss to trajectory statistic), but luckily we see decent correspondence between the two, as described below, which explains the success of \NAME{}.

For Virginia (Figures~\ref{fig:va-training-avg}--\ref{fig:va-training-group}), finetuning runs for 30k steps. Across metrics, the average JSD decreases substantially over the first $\sim$10k steps, with additional gains around 15--20k steps; afterward, the curves largely stabilize with mild upward drift in spatial, trip, and POI frequency JSD. The per-group breakdown shows heterogeneous difficulty across demographic groups, with some groups (e.g., 30--40 F) tending to exhibit higher JSD and some groups displaying non-monotonic trajectories during early finetuning.

For California (Figures~\ref{fig:ca-training-avg}--\ref{fig:ca-training-group}), finetuning runs for 10k steps. The average JSD drops sharply during the first $\sim$1--2k steps for spatial, trip, and POI frequency metrics, while travel distance JSD continues improving through $\sim$4--6k steps before stabilizing. The per-group curves show consistent early improvement across all eight demographic groups, followed by group-dependent plateaus and mild rebounds in some metrics.

\begin{figure}[!ht]
\centering
\includegraphics[width=\columnwidth]{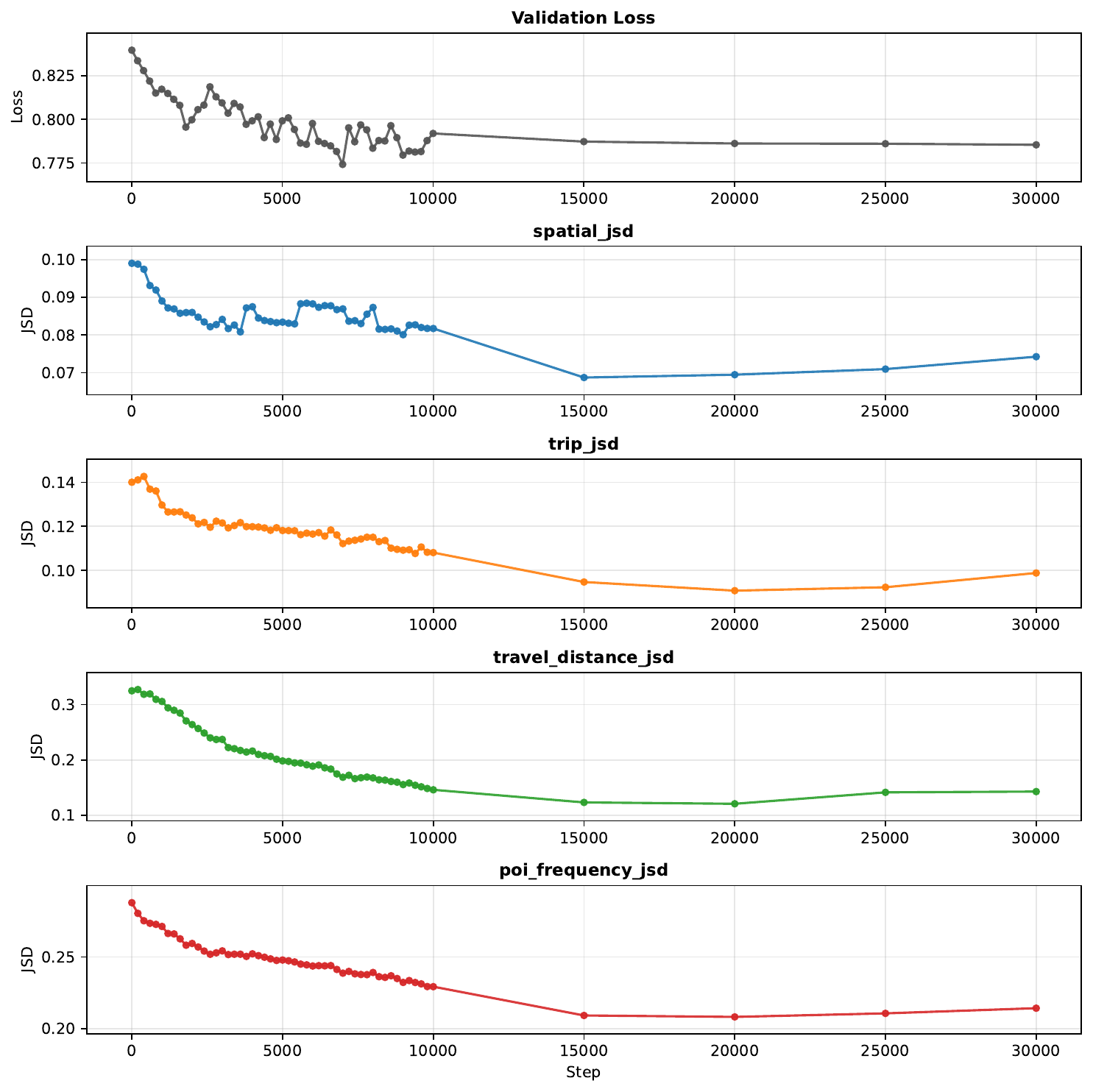}
\caption{\textbf{Virginia: average JSD during finetuning.} Validation loss (TV) and per-metric JSD (averaged across demographic groups) over training steps.}
\label{fig:va-training-avg}
\end{figure}

\begin{figure}[!ht]
\centering
\includegraphics[width=\columnwidth]{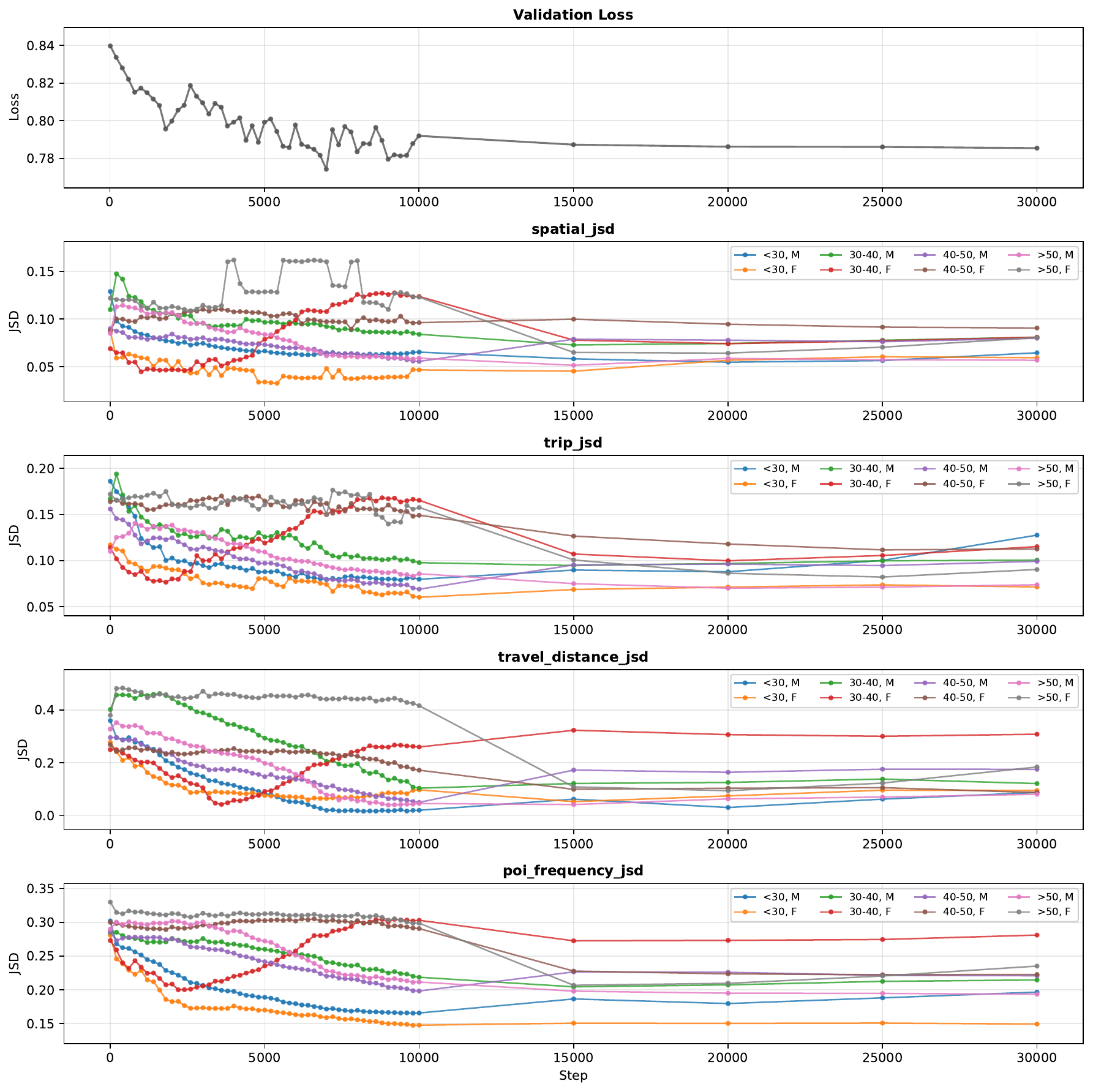}
\caption{\textbf{Virginia: per-group JSD during finetuning.} Per-metric JSD broken down by demographic group over training steps.}
\label{fig:va-training-group}
\end{figure}

\begin{figure}[!ht]
\centering
\includegraphics[width=\columnwidth]{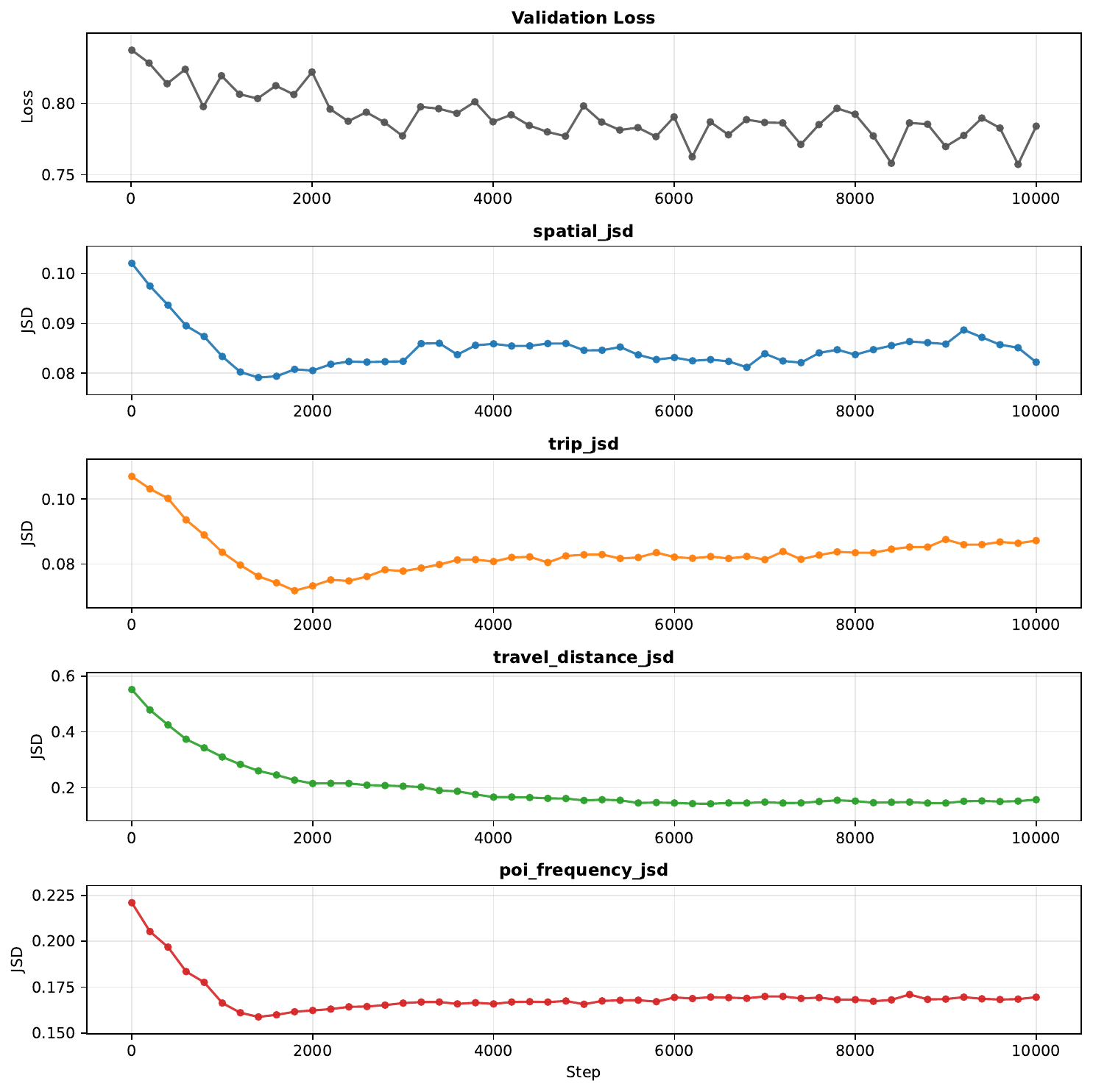}
\caption{\textbf{California: average JSD during finetuning.} Validation loss (TV) and per-metric test JSD (averaged across demographic groups) over training steps.}
\label{fig:ca-training-avg}
\end{figure}

\begin{figure}[!ht]
\centering
\includegraphics[width=\columnwidth]{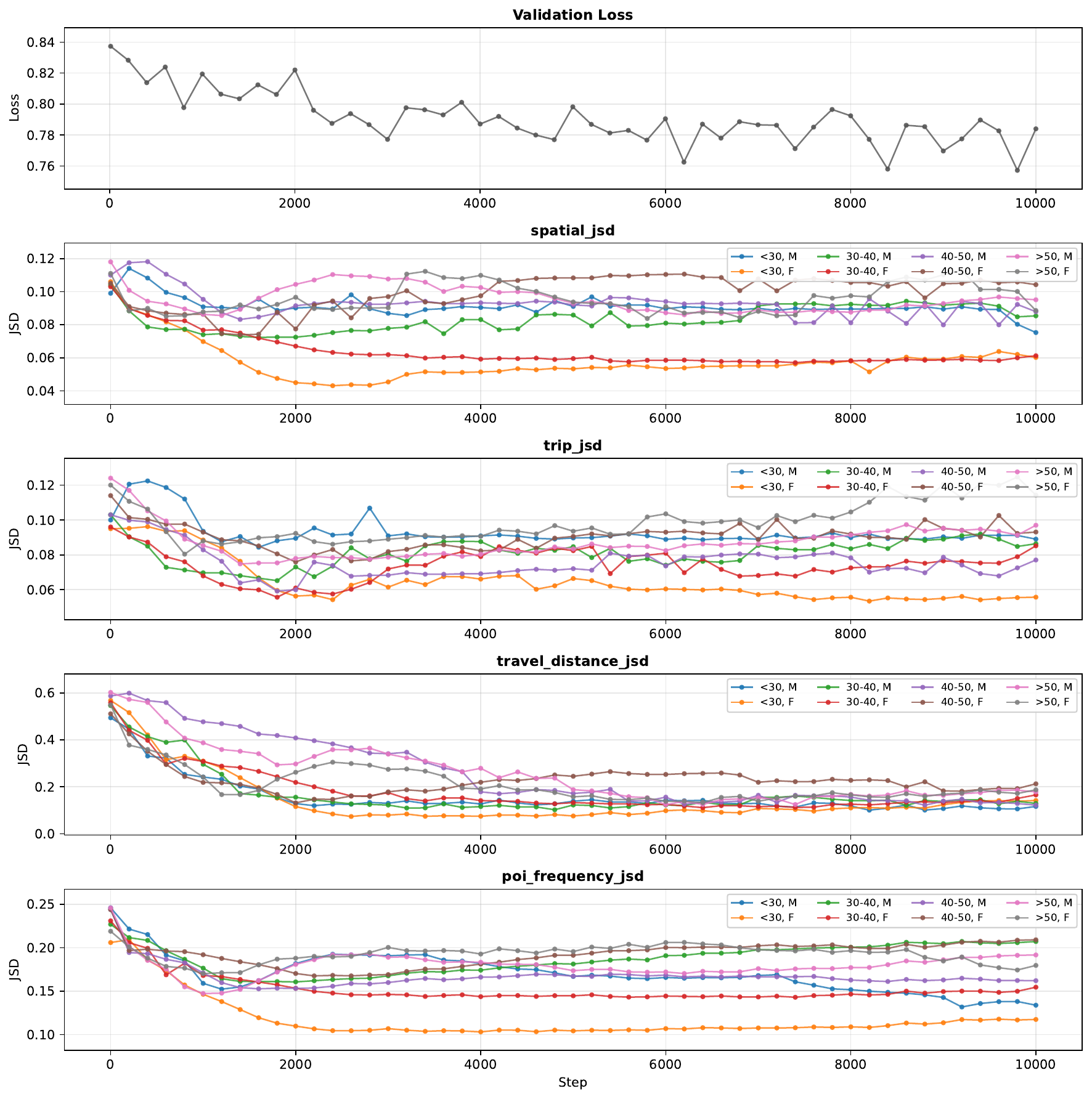}
\caption{\textbf{California: per-group JSD during finetuning.} Per-metric JSD broken down by demographic group over training steps.}
\label{fig:ca-training-group}
\end{figure}

\section{Experimental Results}
\label{sec:app-experiments}
In this section, we provide additional experimental details and results. In our first and second set of experiments, we use Jensen-Shannon divergence (JSD) \citep{lin2002divergence} to quantify the distance between the group-level synthetic trajectories (generated by \NAME{}, the baseline, or the strongly supervised model) and the real trajectories, by comparing their respective distributions over some discretized trajectory statistic, such as spatial grid cells or binned distances (Section~\ref{sec:eval-metrics}).
For discrete distributions $P$ and $Q$ over a finite space, JSD is defined as
\[
\mathrm{JSD}(P \| Q) := \frac{1}{2}\mathrm{KL}(P \| M) + \frac{1}{2}\mathrm{KL}(Q \| M),
\]
where $M = \frac{1}{2}(P + Q)$ is the mixture distribution and $\mathrm{KL}(P\|Q) = \sum_x P(x)\log\frac{P(x)}{Q(x)}$ is the Kullback--Leibler divergence \citep{kullback1951information}.

To quantify how much of the performance gap \NAME{} closes between the spatially-anchored baseline and strong supervision, we define:
\[
\text{Gap Closed} = 1 - \frac{\text{metric}(\NAME) - \text{metric}(\text{Strong})}{\text{metric}(\text{Baseline}) - \text{metric}(\text{Strong})}.
\]
A value of 1 indicates \NAME{} matches strong supervision; 0 indicates no improvement over baseline; negative values indicate worse-than-baseline performance.

\subsection{RQ1: Effect of Demographic Diversity}
\subsubsection{Partition construction matrices.}
\label{sec:app-partitions}
In our demographic-diversity experiments (Section~\ref{sec:demo-diversity-results}), we regroup the same labeled individuals into $G=8$ synthetic regions to obtain different demographic-mixture matrices $P$ with controlled rank/conditioning.
Concretely, we first construct an integer region-by-group count matrix $C$ (how many labeled individuals of each demographic group are assigned to each region), and then row-normalize to obtain the mixture matrix
\[
P_{g,d} \;=\; p(d\mid g) \;=\; \frac{C_{g,d}}{\sum_{d'=1}^K C_{g,d'}}.
\]

Below we report the resulting $P$ matrices (row-normalized proportions).
Each entry is a demographic-mixture proportion $p(d\mid g)$ for one region--group cell.
We index age by $a_i$ and gender by $g_j$, where $a_0<30$, $a_1=30$--$40$, $a_2=40$--$50$, $a_3>50$, and $g_0=M$, $g_1=F$.
Thus, the column order $(a_0g_0,a_0g_1,a_1g_0,a_1g_1,a_2g_0,a_2g_1,a_3g_0,a_3g_1)$ matches the demographic-group order used throughout our per-group JSD tables.
Values are shown to 6 decimal places; they are computed by exact row-normalization of the underlying integer count matrices (so near-equal mixtures are not rounded to identical values).
\begin{itemize}[leftmargin=*, itemsep=0.1em]
\item DemoGroups (rank $=8$): $P = I_8$, i.e., the region-level and group-level partitions are identical. 
    \item FullRank (rank $=8$).
\end{itemize}
\scriptsize
\begin{verbatim}
Row   a0_g0     a0_g1     a1_g0     a1_g1     a2_g0     a2_g1     a3_g0     a3_g1
R1: 0.000000  0.000000  0.000000  0.500101  0.499899  0.000000  0.000000  0.000000
R2: 0.000000  0.000000  0.500101  0.000000  0.000000  0.000000  0.000000  0.499899
R3: 0.000000  0.500101  0.000000  0.000000  0.000000  0.000000  0.499899  0.000000
R4: 0.500101  0.000000  0.000000  0.000000  0.000000  0.499899  0.000000  0.000000
R5: 0.500101  0.000000  0.000000  0.499899  0.000000  0.000000  0.000000  0.000000
R6: 0.000000  0.000000  0.500101  0.000000  0.000000  0.499899  0.000000  0.000000
R7: 0.000000  0.500101  0.000000  0.000000  0.499899  0.000000  0.000000  0.000000
R8: 0.000000  0.000000  0.000000  0.000000  0.000000  0.000000  0.500101  0.499899
\end{verbatim}
\normalsize

\begin{itemize}[leftmargin=*, itemsep=0.1em]
    \item RankDef (rank $=7$).
\end{itemize}
\scriptsize
\begin{verbatim}
Row   a0_g0     a0_g1     a1_g0     a1_g1     a2_g0     a2_g1     a3_g0     a3_g1
R0: 0.500000  0.000000  0.000000  0.500000  0.000000  0.000000  0.000000  0.000000
R1: 0.000000  0.500000  0.000000  0.000000  0.500000  0.000000  0.000000  0.000000
R2: 0.000000  0.000000  0.500000  0.000000  0.000000  0.000000  0.000000  0.500000
R3: 0.000000  0.000000  0.000000  0.000000  0.000000  0.500000  0.500000  0.000000
R4: 0.500000  0.000000  0.000000  0.000000  0.000000  0.500000  0.000000  0.000000
R5: 0.000000  0.000000  0.000000  0.500000  0.000000  0.000000  0.500000  0.000000
R6: 0.125130  0.125130  0.125130  0.125130  0.124870  0.124870  0.124870  0.124870
R7: 0.299948  0.100156  0.000000  0.250000  0.099896  0.150104  0.099896  0.000000
\end{verbatim}
\normalsize

\begin{itemize}[leftmargin=*, itemsep=0.1em]
    \item Messy (rank $=4$).
\end{itemize}
\scriptsize
\begin{verbatim}
Row   a0_g0     a0_g1     a1_g0     a1_g1     a2_g0     a2_g1     a3_g0     a3_g1
R0: 0.250000  0.250000  0.250000  0.250000  0.000000  0.000000  0.000000  0.000000
R1: 0.200000  0.000000  0.200000  0.200000  0.200000  0.200000  0.000000  0.000000
R2: 0.000000  0.000000  0.000000  0.250000  0.250000  0.250000  0.250000  0.000000
R3: 0.000000  0.200000  0.200000  0.000000  0.000000  0.200000  0.200000  0.200000
R4: 0.225000  0.125000  0.225000  0.225000  0.100000  0.100000  0.000000  0.000000
R5: 0.000000  0.100000  0.100000  0.125000  0.125000  0.225000  0.225000  0.100000
R6: 0.140000  0.060000  0.200000  0.140000  0.140000  0.200000  0.060000  0.060000
R7: 0.150000  0.150000  0.150000  0.250000  0.100000  0.100000  0.100000  0.000000
\end{verbatim}
\normalsize

\subsubsection{Results by partition type}
\begin{table*}[t]
  \centering
  \caption{Effect of demographic diversity (RQ1), Virginia. Average metrics across demographic groups for all partition structures (mean $\pm$ std over 8 groups). Lower is better for JSD and higher is better for Gap Closed. Bold indicates best \NAME{} variant within each partition type.}
  \label{tab:main_results}
  \resizebox{0.95\textwidth}{!}{%
  \begin{tabular}{lcccccccc}
  \toprule
   & \multicolumn{2}{c}{Spatial JSD$\downarrow$} & \multicolumn{2}{c}{Travel Dist. JSD$\downarrow$} & \multicolumn{2}{c}{Trip JSD$\downarrow$} & \multicolumn{2}{c}{POI Freq. JSD$\downarrow$} \\
  \cmidrule(lr){2-3}\cmidrule(lr){4-5}\cmidrule(lr){6-7}\cmidrule(lr){8-9}
  Setup & Value & Gap Closed (\%) & Value & Gap Closed (\%) & Value & Gap Closed (\%) & Value & Gap Closed (\%) \\
  \midrule
  Baseline (no demo) & $0.105 \pm \scriptstyle 0.019$ & -- & $0.382 \pm \scriptstyle 0.045$ & -- & $0.157 \pm \scriptstyle 0.030$ & -- & $0.304 \pm \scriptstyle 0.016$ & -- \\
  Strong (supervised) & $0.059 \pm \scriptstyle 0.013$ & 100 & $0.158 \pm \scriptstyle 0.043$ & 100 & $0.122 \pm \scriptstyle 0.023$ & 100 & $0.249 \pm \scriptstyle 0.031$ & 100 \\
  \midrule
  \textbf{Demogroups-JS} & \textbf{$0.080 \pm \scriptstyle 0.008$} & \textbf{$54$} & \textbf{$0.124 \pm \scriptstyle 0.027$} & \textbf{$115$} & \textbf{$0.105 \pm \scriptstyle 0.019$} & \textbf{$149$} & \textbf{$0.211 \pm \scriptstyle 0.023$} & \textbf{$169$} \\
  Demogroups-TV & $0.087 \pm \scriptstyle 0.018$ & $39$ & $0.131 \pm \scriptstyle 0.067$ & $112$ & $0.125 \pm \scriptstyle 0.017$ & $91$ & $0.238 \pm \scriptstyle 0.026$ & $120$ \\
  \midrule
  \textbf{FullRank-JS} & \textbf{$0.090 \pm \scriptstyle 0.016$} & \textbf{$33$} & \textbf{$0.167 \pm \scriptstyle 0.037$} & \textbf{$96$} & \textbf{$0.121 \pm \scriptstyle 0.021$} & \textbf{$103$} & \textbf{$0.243 \pm \scriptstyle 0.022$} & \textbf{$111$} \\
  FullRank-TV & $0.095 \pm \scriptstyle 0.035$ & $22$ & $0.174 \pm \scriptstyle 0.073$ & $93$ & $0.126 \pm \scriptstyle 0.042$ & $89$ & $0.258 \pm \scriptstyle 0.066$ & $84$ \\
  \midrule
  \textbf{RankDef-JS} & \textbf{$0.100 \pm \scriptstyle 0.024$} & \textbf{$11$} & \textbf{$0.214 \pm \scriptstyle 0.085$} & \textbf{$75$} & \textbf{$0.139 \pm \scriptstyle 0.025$} & \textbf{$51$} & \textbf{$0.267 \pm \scriptstyle 0.022$} & \textbf{$67$} \\
  RankDef-TV & $0.142 \pm \scriptstyle 0.036$ & $-80$ & $0.214 \pm \scriptstyle 0.069$ & $75$ & $0.182 \pm \scriptstyle 0.040$ & $-71$ & $0.331 \pm \scriptstyle 0.049$ & $-49$ \\
  \midrule
  \textbf{Messy-JS} & \textbf{$0.121 \pm \scriptstyle 0.006$} & \textbf{$-35$} & $0.233 \pm \scriptstyle 0.080$ & $67$ & \textbf{$0.164 \pm \scriptstyle 0.008$} & \textbf{$-20$} & \textbf{$0.289 \pm \scriptstyle 0.008$} & \textbf{$27$} \\
  Messy-TV & $0.153 \pm \scriptstyle 0.011$ & $-104$ & \textbf{$0.231 \pm \scriptstyle 0.057$} & \textbf{$67$} & $0.201 \pm \scriptstyle 0.014$ & $-126$ & $0.342 \pm \scriptstyle 0.036$ & $-69$ \\
  \bottomrule
  \end{tabular}}
  \end{table*}

\begin{table*}[t]
  \centering
  \caption{Effect of demographic diversity (RQ1), Virginia - \textit{spatial JSD} per demographic group. Format: mean $\pm$ std across 3 seeds. The Avg column reports mean $\pm$ std across 8 demographic groups (matching Table~\ref{tab:main_results}). Bold indicates best \NAME{} variant within each partition type.}
  \label{tab:spatial_jsd_pergroup}
  \resizebox{\textwidth}{!}{%
  \begin{tabular}{lccccccccc}
  \toprule
  Setup & <30, M & <30, F & 30--40, M & 30--40, F & 40--50, M & 40--50, F & >50, M & >50, F & Avg \\
  \midrule
  Baseline & $0.131 \pm \scriptstyle 0.002$ & $0.089 \pm \scriptstyle 0.002$ & $0.119 \pm \scriptstyle 0.009$ & $0.077 \pm \scriptstyle 0.008$ & $0.105 \pm \scriptstyle 0.015$ & $0.099 \pm \scriptstyle 0.011$ & $0.093 \pm \scriptstyle 0.008$ & $0.123 \pm \scriptstyle 0.001$ & $0.105 \pm \scriptstyle 0.019$ \\
  Strong & $0.075 \pm \scriptstyle 0.006$ & $0.051 \pm \scriptstyle 0.007$ & $0.045 \pm \scriptstyle 0.001$ & $0.048 \pm \scriptstyle 0.000$ & $0.058 \pm \scriptstyle 0.004$ & $0.068 \pm \scriptstyle 0.002$ & $0.051 \pm \scriptstyle 0.000$ & $0.079 \pm \scriptstyle 0.006$ & $0.059 \pm \scriptstyle 0.013$ \\
  \midrule
  \textbf{Demogroups-JS} & $0.072 \pm \scriptstyle 0.018$ & $0.060 \pm \scriptstyle 0.008$ & $0.092 \pm \scriptstyle 0.011$ & $0.057 \pm \scriptstyle 0.029$ & $0.086 \pm \scriptstyle 0.010$ & $0.093 \pm \scriptstyle 0.008$ & $0.091 \pm \scriptstyle 0.008$ & $0.089 \pm \scriptstyle 0.037$ & \textbf{$0.080 \pm \scriptstyle 0.008$} \\
  Demogroups-TV & $0.075 \pm \scriptstyle 0.025$ & $0.072 \pm \scriptstyle 0.030$ & $0.110 \pm \scriptstyle 0.045$ & $0.066 \pm \scriptstyle 0.022$ & $0.101 \pm \scriptstyle 0.051$ & $0.095 \pm \scriptstyle 0.018$ & $0.106 \pm \scriptstyle 0.049$ & $0.071 \pm \scriptstyle 0.012$ & $0.087 \pm \scriptstyle 0.018$ \\
  \midrule
  \textbf{FullRank-JS} & $0.105 \pm \scriptstyle 0.028$ & $0.076 \pm \scriptstyle 0.033$ & $0.097 \pm \scriptstyle 0.011$ & $0.058 \pm \scriptstyle 0.006$ & $0.098 \pm \scriptstyle 0.045$ & $0.099 \pm \scriptstyle 0.014$ & $0.102 \pm \scriptstyle 0.009$ & $0.084 \pm \scriptstyle 0.001$ & \textbf{$0.090 \pm \scriptstyle 0.016$} \\
  FullRank-TV & $0.074 \pm \scriptstyle 0.005$ & $0.054 \pm \scriptstyle 0.006$ & $0.102 \pm \scriptstyle 0.017$ & $0.057 \pm \scriptstyle 0.009$ & $0.108 \pm \scriptstyle 0.035$ & $0.099 \pm \scriptstyle 0.004$ & $0.164 \pm \scriptstyle 0.042$ & $0.101 \pm \scriptstyle 0.013$ & $0.095 \pm \scriptstyle 0.035$ \\
  \midrule
  \textbf{RankDef-JS} & $0.092 \pm \scriptstyle 0.017$ & $0.052 \pm \scriptstyle 0.004$ & $0.104 \pm \scriptstyle 0.005$ & $0.122 \pm \scriptstyle 0.081$ & $0.099 \pm \scriptstyle 0.050$ & $0.125 \pm \scriptstyle 0.011$ & $0.116 \pm \scriptstyle 0.047$ & $0.087 \pm \scriptstyle 0.008$ & \textbf{$0.100 \pm \scriptstyle 0.024$} \\
  RankDef-TV & $0.200 \pm \scriptstyle 0.098$ & $0.119 \pm \scriptstyle 0.044$ & $0.160 \pm \scriptstyle 0.044$ & $0.087 \pm \scriptstyle 0.051$ & $0.144 \pm \scriptstyle 0.093$ & $0.122 \pm \scriptstyle 0.033$ & $0.178 \pm \scriptstyle 0.114$ & $0.124 \pm \scriptstyle 0.070$ & $0.142 \pm \scriptstyle 0.036$ \\
  \midrule
  \textbf{Messy-JS} & $0.068 \pm \scriptstyle 0.011$ & $0.069 \pm \scriptstyle 0.020$ & $0.202 \pm \scriptstyle 0.005$ & $0.083 \pm \scriptstyle 0.056$ & $0.263 \pm \scriptstyle 0.081$ & $0.118 \pm \scriptstyle 0.024$ & $0.102 \pm \scriptstyle 0.054$ & $0.064 \pm \scriptstyle 0.001$ & $0.121 \pm \scriptstyle 0.006$ \\
  Messy-TV & $0.076 \pm \scriptstyle 0.023$ & $0.179 \pm \scriptstyle 0.100$ & $0.122 \pm \scriptstyle 0.036$ & $0.207 \pm \scriptstyle 0.015$ & $0.103 \pm \scriptstyle 0.080$ & $0.228 \pm \scriptstyle 0.027$ & $0.086 \pm \scriptstyle 0.038$ & $0.225 \pm \scriptstyle 0.109$ & $0.153 \pm \scriptstyle 0.011$ \\
  \bottomrule
  \end{tabular}}
  \end{table*}

\begin{table*}[t]
\centering
\caption{Effect of demographic diversity (RQ1), Virginia - \textit{travel distance JSD} per demographic group. Format: mean $\pm$ std across 3 seeds. The Avg column reports mean $\pm$ std across 8 demographic groups (matching Table~\ref{tab:main_results}). Bold indicates best \NAME{} variant within each partition type.}
\label{tab:travel_distance_jsd_pergroup}
\resizebox{\textwidth}{!}{%
\begin{tabular}{lccccccccc}
\toprule
Setup & <30, M & <30, F & 30--40, M & 30--40, F & 40--50, M & 40--50, F & >50, M & >50, F & Avg \\
\midrule
Baseline & $0.398 \pm \scriptstyle 0.038$ & $0.353 \pm \scriptstyle 0.075$ & $0.455 \pm \scriptstyle 0.053$ & $0.324 \pm \scriptstyle 0.074$ & $0.366 \pm \scriptstyle 0.070$ & $0.350 \pm \scriptstyle 0.081$ & $0.387 \pm \scriptstyle 0.059$ & $0.425 \pm \scriptstyle 0.045$ & $0.382 \pm \scriptstyle 0.045$ \\
Strong & $0.125 \pm \scriptstyle 0.002$ & $0.178 \pm \scriptstyle 0.003$ & $0.140 \pm \scriptstyle 0.003$ & $0.134 \pm \scriptstyle 0.002$ & $0.147 \pm \scriptstyle 0.003$ & $0.164 \pm \scriptstyle 0.008$ & $0.121 \pm \scriptstyle 0.004$ & $0.253 \pm \scriptstyle 0.020$ & $0.158 \pm \scriptstyle 0.043$ \\
\midrule
\textbf{Demogroups-JS} & $0.127 \pm \scriptstyle 0.078$ & $0.121 \pm \scriptstyle 0.072$ & $0.146 \pm \scriptstyle 0.066$ & $0.151 \pm \scriptstyle 0.013$ & $0.160 \pm \scriptstyle 0.096$ & $0.084 \pm \scriptstyle 0.029$ & $0.106 \pm \scriptstyle 0.070$ & $0.097 \pm \scriptstyle 0.005$ & \textbf{$0.124 \pm \scriptstyle 0.027$} \\
Demogroups-TV & $0.079 \pm \scriptstyle 0.043$ & $0.099 \pm \scriptstyle 0.040$ & $0.081 \pm \scriptstyle 0.054$ & $0.282 \pm \scriptstyle 0.045$ & $0.149 \pm \scriptstyle 0.015$ & $0.090 \pm \scriptstyle 0.046$ & $0.155 \pm \scriptstyle 0.087$ & $0.116 \pm \scriptstyle 0.062$ & $0.131 \pm \scriptstyle 0.067$ \\
\midrule
\textbf{FullRank-JS} & $0.180 \pm \scriptstyle 0.117$ & $0.169 \pm \scriptstyle 0.096$ & $0.105 \pm \scriptstyle 0.058$ & $0.144 \pm \scriptstyle 0.023$ & $0.234 \pm \scriptstyle 0.138$ & $0.173 \pm \scriptstyle 0.009$ & $0.178 \pm \scriptstyle 0.135$ & $0.150 \pm \scriptstyle 0.028$ & \textbf{$0.167 \pm \scriptstyle 0.037$} \\
FullRank-TV & $0.075 \pm \scriptstyle 0.003$ & $0.114 \pm \scriptstyle 0.021$ & $0.150 \pm \scriptstyle 0.009$ & $0.146 \pm \scriptstyle 0.054$ & $0.210 \pm \scriptstyle 0.106$ & $0.162 \pm \scriptstyle 0.084$ & $0.313 \pm \scriptstyle 0.110$ & $0.219 \pm \scriptstyle 0.061$ & $0.174 \pm \scriptstyle 0.073$ \\
\midrule
\textbf{RankDef-JS} & $0.277 \pm \scriptstyle 0.150$ & $0.044 \pm \scriptstyle 0.012$ & $0.259 \pm \scriptstyle 0.097$ & $0.151 \pm \scriptstyle 0.018$ & $0.297 \pm \scriptstyle 0.097$ & $0.195 \pm \scriptstyle 0.170$ & $0.283 \pm \scriptstyle 0.020$ & $0.202 \pm \scriptstyle 0.024$ & \textbf{$0.214 \pm \scriptstyle 0.085$} \\
RankDef-TV & $0.083 \pm \scriptstyle 0.046$ & $0.202 \pm \scriptstyle 0.113$ & $0.243 \pm \scriptstyle 0.153$ & $0.186 \pm \scriptstyle 0.030$ & $0.226 \pm \scriptstyle 0.191$ & $0.312 \pm \scriptstyle 0.056$ & $0.183 \pm \scriptstyle 0.199$ & $0.277 \pm \scriptstyle 0.098$ & $0.214 \pm \scriptstyle 0.069$ \\
\midrule
\textbf{Messy-JS} & $0.295 \pm \scriptstyle 0.105$ & $0.247 \pm \scriptstyle 0.157$ & $0.115 \pm \scriptstyle 0.027$ & $0.283 \pm \scriptstyle 0.132$ & $0.178 \pm \scriptstyle 0.002$ & $0.213 \pm \scriptstyle 0.097$ & $0.364 \pm \scriptstyle 0.068$ & $0.171 \pm \scriptstyle 0.068$ & \textbf{$0.233 \pm \scriptstyle 0.080$} \\
Messy-TV & $0.170 \pm \scriptstyle 0.130$ & $0.270 \pm \scriptstyle 0.069$ & $0.045 \pm \scriptstyle 0.022$ & $0.209 \pm \scriptstyle 0.155$ & $0.124 \pm \scriptstyle 0.057$ & $0.344 \pm \scriptstyle 0.161$ & $0.222 \pm \scriptstyle 0.178$ & $0.458 \pm \scriptstyle 0.021$ & $0.231 \pm \scriptstyle 0.057$ \\
\bottomrule
\end{tabular}}
\end{table*}

\begin{table*}[t]
  \centering
  \caption{Effect of demographic diversity (RQ1), Virginia - \textit{trip JSD} per demographic group. Format: mean $\pm$ std across 3 seeds. The Avg column reports mean $\pm$ std across 8 demographic groups (matching Table~\ref{tab:main_results}). Bold indicates best \NAME{} variant within each partition type.}
  \label{tab:trip_jsd_pergroup}
  \resizebox{\textwidth}{!}{%
  \begin{tabular}{lccccccccc}
  \toprule
  Setup & <30, M & <30, F & 30--40, M & 30--40, F & 40--50, M & 40--50, F & >50, M & >50, F & Avg \\
  \midrule
  Baseline & $0.192 \pm \scriptstyle 0.006$ & $0.122 \pm \scriptstyle 0.005$ & $0.177 \pm \scriptstyle 0.010$ & $0.119 \pm \scriptstyle 0.005$ & $0.165 \pm \scriptstyle 0.009$ & $0.170 \pm \scriptstyle 0.006$ & $0.128 \pm \scriptstyle 0.018$ & $0.186 \pm \scriptstyle 0.014$ & $0.157 \pm \scriptstyle 0.030$ \\
  Strong & $0.146 \pm \scriptstyle 0.000$ & $0.108 \pm \scriptstyle 0.016$ & $0.113 \pm \scriptstyle 0.002$ & $0.092 \pm \scriptstyle 0.004$ & $0.127 \pm \scriptstyle 0.007$ & $0.154 \pm \scriptstyle 0.000$ & $0.098 \pm \scriptstyle 0.001$ & $0.140 \pm \scriptstyle 0.020$ & $0.122 \pm \scriptstyle 0.023$ \\
  \midrule
  \textbf{Demogroups-JS} & $0.112 \pm \scriptstyle 0.027$ & $0.068 \pm \scriptstyle 0.008$ & $0.101 \pm \scriptstyle 0.008$ & $0.095 \pm \scriptstyle 0.045$ & $0.121 \pm \scriptstyle 0.029$ & $0.127 \pm \scriptstyle 0.014$ & $0.115 \pm \scriptstyle 0.017$ & $0.100 \pm \scriptstyle 0.058$ & \textbf{$0.105 \pm \scriptstyle 0.019$} \\
  Demogroups-TV & $0.133 \pm \scriptstyle 0.031$ & $0.092 \pm \scriptstyle 0.030$ & $0.138 \pm \scriptstyle 0.050$ & $0.105 \pm \scriptstyle 0.032$ & $0.142 \pm \scriptstyle 0.088$ & $0.131 \pm \scriptstyle 0.033$ & $0.131 \pm \scriptstyle 0.061$ & $0.127 \pm \scriptstyle 0.052$ & $0.125 \pm \scriptstyle 0.017$ \\
  \midrule
  \textbf{FullRank-JS} & $0.153 \pm \scriptstyle 0.040$ & $0.098 \pm \scriptstyle 0.024$ & $0.113 \pm \scriptstyle 0.013$ & $0.096 \pm \scriptstyle 0.014$ & $0.136 \pm \scriptstyle 0.052$ & $0.141 \pm \scriptstyle 0.028$ & $0.122 \pm \scriptstyle 0.029$ & $0.108 \pm \scriptstyle 0.014$ & \textbf{$0.121 \pm \scriptstyle 0.021$} \\
  FullRank-TV & $0.118 \pm \scriptstyle 0.006$ & $0.082 \pm \scriptstyle 0.022$ & $0.112 \pm \scriptstyle 0.008$ & $0.071 \pm \scriptstyle 0.014$ & $0.144 \pm \scriptstyle 0.054$ & $0.139 \pm \scriptstyle 0.026$ & $0.207 \pm \scriptstyle 0.034$ & $0.135 \pm \scriptstyle 0.013$ & $0.126 \pm \scriptstyle 0.042$ \\
  \midrule
  \textbf{RankDef-JS} & $0.159 \pm \scriptstyle 0.018$ & $0.125 \pm \scriptstyle 0.040$ & $0.148 \pm \scriptstyle 0.024$ & $0.112 \pm \scriptstyle 0.003$ & $0.102 \pm \scriptstyle 0.000$ & $0.150 \pm \scriptstyle 0.003$ & $0.136 \pm \scriptstyle 0.006$ & $0.177 \pm \scriptstyle 0.019$ & \textbf{$0.139 \pm \scriptstyle 0.025$} \\
  RankDef-TV & $0.252 \pm \scriptstyle 0.087$ & $0.182 \pm \scriptstyle 0.070$ & $0.204 \pm \scriptstyle 0.071$ & $0.112 \pm \scriptstyle 0.043$ & $0.186 \pm \scriptstyle 0.101$ & $0.171 \pm \scriptstyle 0.053$ & $0.192 \pm \scriptstyle 0.081$ & $0.155 \pm \scriptstyle 0.075$ & $0.182 \pm \scriptstyle 0.040$ \\
  \midrule
  \textbf{Messy-JS} & $0.091 \pm \scriptstyle 0.013$ & $0.080 \pm \scriptstyle 0.030$ & $0.283 \pm \scriptstyle 0.032$ & $0.120 \pm \scriptstyle 0.065$ & $0.336 \pm \scriptstyle 0.052$ & $0.180 \pm \scriptstyle 0.054$ & $0.138 \pm \scriptstyle 0.056$ & $0.083 \pm \scriptstyle 0.001$ & $0.164 \pm \scriptstyle 0.008$ \\
  Messy-TV & $0.114 \pm \scriptstyle 0.017$ & $0.225 \pm \scriptstyle 0.100$ & $0.138 \pm \scriptstyle 0.052$ & $0.262 \pm \scriptstyle 0.015$ & $0.150 \pm \scriptstyle 0.094$ & $0.322 \pm \scriptstyle 0.043$ & $0.126 \pm \scriptstyle 0.036$ & $0.273 \pm \scriptstyle 0.160$ & $0.201 \pm \scriptstyle 0.014$ \\
  \bottomrule
\end{tabular}}
\end{table*}

\begin{table*}[t]
\centering
\caption{Effect of demographic diversity (RQ1), Virginia - \textit{POI frequency JSD} per demographic group. Format: mean $\pm$ std across 3 seeds. The Avg column reports mean $\pm$ std across 8 demographic groups (matching Table~\ref{tab:main_results}). Bold indicates best \NAME{} variant within each partition type.}
\label{tab:poi_freq_pergroup}
\resizebox{\textwidth}{!}{%
\begin{tabular}{lccccccccc}
\toprule
Setup & <30, M & <30, F & 30--40, M & 30--40, F & 40--50, M & 40--50, F & >50, M & >50, F & Avg \\
\midrule
Baseline & $0.307 \pm \scriptstyle 0.005$ & $0.291 \pm \scriptstyle 0.010$ & $0.292 \pm \scriptstyle 0.007$ & $0.291 \pm \scriptstyle 0.018$ & $0.308 \pm \scriptstyle 0.022$ & $0.318 \pm \scriptstyle 0.018$ & $0.290 \pm \scriptstyle 0.000$ & $0.332 \pm \scriptstyle 0.002$ & $0.304 \pm \scriptstyle 0.016$ \\
Strong & $0.252 \pm \scriptstyle 0.005$ & $0.243 \pm \scriptstyle 0.001$ & $0.189 \pm \scriptstyle 0.001$ & $0.242 \pm \scriptstyle 0.001$ & $0.262 \pm \scriptstyle 0.006$ & $0.279 \pm \scriptstyle 0.001$ & $0.232 \pm \scriptstyle 0.003$ & $0.291 \pm \scriptstyle 0.001$ & $0.249 \pm \scriptstyle 0.031$ \\
\midrule
\textbf{Demogroups-JS} & $0.186 \pm \scriptstyle 0.018$ & $0.168 \pm \scriptstyle 0.020$ & $0.220 \pm \scriptstyle 0.016$ & $0.202 \pm \scriptstyle 0.024$ & $0.227 \pm \scriptstyle 0.037$ & $0.224 \pm \scriptstyle 0.000$ & $0.229 \pm \scriptstyle 0.031$ & $0.231 \pm \scriptstyle 0.029$ & \textbf{$0.211 \pm \scriptstyle 0.023$} \\
Demogroups-TV & $0.211 \pm \scriptstyle 0.049$ & $0.204 \pm \scriptstyle 0.077$ & $0.223 \pm \scriptstyle 0.028$ & $0.261 \pm \scriptstyle 0.057$ & $0.262 \pm \scriptstyle 0.059$ & $0.245 \pm \scriptstyle 0.028$ & $0.273 \pm \scriptstyle 0.080$ & $0.222 \pm \scriptstyle 0.012$ & $0.238 \pm \scriptstyle 0.026$ \\
\midrule
\textbf{FullRank-JS} & $0.262 \pm \scriptstyle 0.041$ & $0.207 \pm \scriptstyle 0.065$ & $0.229 \pm \scriptstyle 0.010$ & $0.228 \pm \scriptstyle 0.009$ & $0.277 \pm \scriptstyle 0.087$ & $0.250 \pm \scriptstyle 0.028$ & $0.255 \pm \scriptstyle 0.042$ & $0.235 \pm \scriptstyle 0.010$ & \textbf{$0.243 \pm \scriptstyle 0.022$} \\
FullRank-TV & $0.215 \pm \scriptstyle 0.014$ & $0.169 \pm \scriptstyle 0.013$ & $0.246 \pm \scriptstyle 0.029$ & $0.212 \pm \scriptstyle 0.029$ & $0.318 \pm \scriptstyle 0.055$ & $0.274 \pm \scriptstyle 0.016$ & $0.378 \pm \scriptstyle 0.042$ & $0.255 \pm \scriptstyle 0.034$ & $0.258 \pm \scriptstyle 0.066$ \\
\midrule
\textbf{RankDef-JS} & $0.272 \pm \scriptstyle 0.026$ & $0.268 \pm \scriptstyle 0.094$ & $0.267 \pm \scriptstyle 0.010$ & $0.236 \pm \scriptstyle 0.000$ & $0.234 \pm \scriptstyle 0.020$ & $0.276 \pm \scriptstyle 0.000$ & $0.280 \pm \scriptstyle 0.018$ & $0.301 \pm \scriptstyle 0.025$ & \textbf{$0.267 \pm \scriptstyle 0.022$} \\
RankDef-TV & $0.349 \pm \scriptstyle 0.096$ & $0.299 \pm \scriptstyle 0.099$ & $0.366 \pm \scriptstyle 0.114$ & $0.265 \pm \scriptstyle 0.044$ & $0.332 \pm \scriptstyle 0.133$ & $0.313 \pm \scriptstyle 0.077$ & $0.423 \pm \scriptstyle 0.131$ & $0.303 \pm \scriptstyle 0.098$ & $0.331 \pm \scriptstyle 0.049$ \\
\midrule
\textbf{Messy-JS} & $0.170 \pm \scriptstyle 0.006$ & $0.194 \pm \scriptstyle 0.050$ & $0.410 \pm \scriptstyle 0.045$ & $0.260 \pm \scriptstyle 0.082$ & $0.489 \pm \scriptstyle 0.116$ & $0.310 \pm \scriptstyle 0.096$ & $0.262 \pm \scriptstyle 0.086$ & $0.215 \pm \scriptstyle 0.003$ & $0.289 \pm \scriptstyle 0.008$ \\
Messy-TV & $0.192 \pm \scriptstyle 0.033$ & $0.403 \pm \scriptstyle 0.195$ & $0.253 \pm \scriptstyle 0.050$ & $0.504 \pm \scriptstyle 0.051$ & $0.255 \pm \scriptstyle 0.047$ & $0.485 \pm \scriptstyle 0.075$ & $0.242 \pm \scriptstyle 0.034$ & $0.404 \pm \scriptstyle 0.139$ & $0.342 \pm \scriptstyle 0.036$ \\
\bottomrule
\end{tabular}}
\end{table*}

\begin{table*}[t]
  \centering
  \caption{Effect of demographic diversity (RQ1), California. Average metrics across demographic groups for all partition structures (mean $\pm$ std over 8 groups). Lower is better for JSD and higher is better for Gap Closed. Bold indicates best \NAME{} variant within each partition type.}
  \label{tab:ca_main_results}
  \resizebox{0.95\textwidth}{!}{%
  \begin{tabular}{lcccccccc}
  \toprule
   & \multicolumn{2}{c}{Spatial JSD$\downarrow$} & \multicolumn{2}{c}{Travel Dist. JSD$\downarrow$} & \multicolumn{2}{c}{Trip JSD$\downarrow$} & \multicolumn{2}{c}{POI Freq. JSD$\downarrow$} \\
  \cmidrule(lr){2-3}\cmidrule(lr){4-5}\cmidrule(lr){6-7}\cmidrule(lr){8-9}
  Setup & Value & Gap Closed (\%) & Value & Gap Closed (\%) & Value & Gap Closed (\%) & Value & Gap Closed (\%) \\
  \midrule
  Baseline & $0.102 \pm \scriptstyle 0.007$ & -- & $0.565 \pm \scriptstyle 0.038$ & -- & $0.106 \pm \scriptstyle 0.011$ & -- & $0.224 \pm \scriptstyle 0.013$ & -- \\
  Strong & $0.070 \pm \scriptstyle 0.017$ & 100 & $0.253 \pm \scriptstyle 0.046$ & 100 & $0.090 \pm \scriptstyle 0.013$ & 100 & $0.180 \pm \scriptstyle 0.026$ & 100 \\
  \midrule
  \textbf{Demogroups-JS} & $0.090 \pm \scriptstyle 0.015$ & $38$ & $0.211 \pm \scriptstyle 0.114$ & $113$ & $0.087 \pm \scriptstyle 0.018$ & $119$ & $0.170 \pm \scriptstyle 0.032$ & $123$ \\
  Demogroups-TV & $0.087 \pm \scriptstyle 0.013$ & $47$ & $0.210 \pm \scriptstyle 0.080$ & $114$ & $0.100 \pm \scriptstyle 0.025$ & $37$ & $0.173 \pm \scriptstyle 0.042$ & $116$ \\
  \midrule
  \textbf{FullRank-JS} & $0.089 \pm \scriptstyle 0.015$ & $41$ & $0.177 \pm \scriptstyle 0.068$ & $124$ & $0.081 \pm \scriptstyle 0.008$ & $156$ & $0.161 \pm \scriptstyle 0.014$ & $143$ \\
  FullRank-TV & $0.087 \pm \scriptstyle 0.019$ & $47$ & $0.201 \pm \scriptstyle 0.092$ & $117$ & $0.094 \pm \scriptstyle 0.027$ & $75$ & $0.184 \pm \scriptstyle 0.021$ & $91$ \\
  \midrule
  \textbf{RankDef-JS} & $0.130 \pm \scriptstyle 0.030$ & $-88$ & $0.380 \pm \scriptstyle 0.052$ & $59$ & $0.180 \pm \scriptstyle 0.054$ & $-462$ & $0.256 \pm \scriptstyle 0.050$ & $-73$ \\
  RankDef-TV & $0.125 \pm \scriptstyle 0.032$ & $-72$ & $0.397 \pm \scriptstyle 0.103$ & $54$ & $0.192 \pm \scriptstyle 0.063$ & $-538$ & $0.244 \pm \scriptstyle 0.053$ & $-45$ \\
  \midrule
  \textbf{Messy-JS} & $0.187 \pm \scriptstyle 0.050$ & $-266$ & $0.331 \pm \scriptstyle 0.094$ & $75$ & $0.227 \pm \scriptstyle 0.071$ & $-756$ & $0.331 \pm \scriptstyle 0.084$ & $-243$ \\
  Messy-TV & $0.203 \pm \scriptstyle 0.022$ & $-316$ & $0.366 \pm \scriptstyle 0.074$ & $64$ & $0.278 \pm \scriptstyle 0.036$ & $-1075$ & $0.378 \pm \scriptstyle 0.037$ & $-350$ \\
  \bottomrule
  \end{tabular}}
  \end{table*}

\begin{table*}[t]
  \centering
  \caption{Effect of demographic diversity (RQ1), California - \textit{spatial JSD} per demographic group. Format: mean $\pm$ std across 3 seeds. The Avg column reports mean $\pm$ std across 8 demographic groups (matching Table~\ref{tab:main_results}). Bold indicates best \NAME{} variant within each partition type.}
  \label{tab:ca_spatial_jsd_pergroup}
  \resizebox{\textwidth}{!}{%
  \begin{tabular}{lccccccccc}
  \toprule
  Setup & <30, M & <30, F & 30--40, M & 30--40, F & 40--50, M & 40--50, F & >50, M & >50, F & Avg \\
  \midrule
  Baseline & $0.096 \pm \scriptstyle 0.003$ & $0.095 \pm \scriptstyle 0.011$ & $0.101 \pm \scriptstyle 0.003$ & $0.096 \pm \scriptstyle 0.007$ & $0.110 \pm \scriptstyle 0.000$ & $0.104 \pm \scriptstyle 0.001$ & $0.115 \pm \scriptstyle 0.003$ & $0.103 \pm \scriptstyle 0.008$ & $0.102 \pm \scriptstyle 0.007$ \\
  Strong & $0.056 \pm \scriptstyle 0.002$ & $0.052 \pm \scriptstyle 0.000$ & $0.067 \pm \scriptstyle 0.001$ & $0.060 \pm \scriptstyle 0.002$ & $0.105 \pm \scriptstyle 0.004$ & $0.072 \pm \scriptstyle 0.001$ & $0.079 \pm \scriptstyle 0.002$ & $0.069 \pm \scriptstyle 0.007$ & $0.070 \pm \scriptstyle 0.017$ \\
  \midrule
  \textbf{Demogroups-JS} & $0.102 \pm \scriptstyle 0.043$ & $0.077 \pm \scriptstyle 0.003$ & $0.078 \pm \scriptstyle 0.021$ & $0.073 \pm \scriptstyle 0.028$ & $0.085 \pm \scriptstyle 0.000$ & $0.111 \pm \scriptstyle 0.020$ & $0.111 \pm \scriptstyle 0.009$ & $0.084 \pm \scriptstyle 0.001$ & \textbf{$0.090 \pm \scriptstyle 0.015$} \\
  Demogroups-TV & $0.075 \pm \scriptstyle 0.002$ & $0.072 \pm \scriptstyle 0.001$ & $0.090 \pm \scriptstyle 0.005$ & $0.073 \pm \scriptstyle 0.017$ & $0.094 \pm \scriptstyle 0.007$ & $0.108 \pm \scriptstyle 0.009$ & $0.098 \pm \scriptstyle 0.019$ & $0.086 \pm \scriptstyle 0.002$ & $0.087 \pm \scriptstyle 0.013$ \\
  \midrule
  \textbf{FullRank-JS} & $0.085 \pm \scriptstyle 0.019$ & $0.085 \pm \scriptstyle 0.030$ & $0.123 \pm \scriptstyle 0.038$ & $0.080 \pm \scriptstyle 0.036$ & $0.083 \pm \scriptstyle 0.021$ & $0.084 \pm \scriptstyle 0.000$ & $0.098 \pm \scriptstyle 0.016$ & $0.073 \pm \scriptstyle 0.007$ & \textbf{$0.089 \pm \scriptstyle 0.015$} \\
  FullRank-TV & $0.060 \pm \scriptstyle 0.003$ & $0.073 \pm \scriptstyle 0.000$ & $0.085 \pm \scriptstyle 0.000$ & $0.066 \pm \scriptstyle 0.008$ & $0.109 \pm \scriptstyle 0.000$ & $0.111 \pm \scriptstyle 0.001$ & $0.095 \pm \scriptstyle 0.005$ & $0.098 \pm \scriptstyle 0.001$ & $0.087 \pm \scriptstyle 0.019$ \\
  \midrule
  \textbf{RankDef-JS} & $0.143 \pm \scriptstyle 0.047$ & $0.100 \pm \scriptstyle 0.041$ & $0.123 \pm \scriptstyle 0.020$ & $0.099 \pm \scriptstyle 0.017$ & $0.180 \pm \scriptstyle 0.014$ & $0.128 \pm \scriptstyle 0.018$ & $0.163 \pm \scriptstyle 0.034$ & $0.102 \pm \scriptstyle 0.017$ & $0.130 \pm \scriptstyle 0.030$ \\
  RankDef-TV & $0.122 \pm \scriptstyle 0.052$ & $0.096 \pm \scriptstyle 0.034$ & $0.103 \pm \scriptstyle 0.005$ & $0.082 \pm \scriptstyle 0.023$ & $0.146 \pm \scriptstyle 0.088$ & $0.122 \pm \scriptstyle 0.054$ & $0.177 \pm \scriptstyle 0.051$ & $0.149 \pm \scriptstyle 0.048$ & \textbf{$0.125 \pm \scriptstyle 0.032$} \\
  \midrule
  \textbf{Messy-JS} & $0.196 \pm \scriptstyle 0.067$ & $0.121 \pm \scriptstyle 0.014$ & $0.137 \pm \scriptstyle 0.052$ & $0.176 \pm \scriptstyle 0.069$ & $0.250 \pm \scriptstyle 0.058$ & $0.235 \pm \scriptstyle 0.035$ & $0.236 \pm \scriptstyle 0.107$ & $0.149 \pm \scriptstyle 0.065$ & \textbf{$0.187 \pm \scriptstyle 0.050$} \\
  Messy-TV & $0.212 \pm \scriptstyle 0.041$ & $0.186 \pm \scriptstyle 0.076$ & $0.233 \pm \scriptstyle 0.030$ & $0.185 \pm \scriptstyle 0.043$ & $0.218 \pm \scriptstyle 0.019$ & $0.184 \pm \scriptstyle 0.039$ & $0.229 \pm \scriptstyle 0.033$ & $0.182 \pm \scriptstyle 0.037$ & $0.203 \pm \scriptstyle 0.022$ \\
  \bottomrule
  \end{tabular}}
  \end{table*}

\begin{table*}[t]
  \centering
  \caption{Effect of demographic diversity (RQ1), California - \textit{travel distance JSD} per demographic group. Format: mean $\pm$ std across 3 seeds. The Avg column reports mean $\pm$ std across 8 demographic groups (matching Table~\ref{tab:main_results}). Bold indicates best \NAME{} variant within each partition type.}
  \label{tab:ca_travel_distance_jsd_pergroup}
  \resizebox{\textwidth}{!}{%
  \begin{tabular}{lccccccccc}
  \toprule
  Setup & <30, M & <30, F & 30--40, M & 30--40, F & 40--50, M & 40--50, F & >50, M & >50, F & Avg \\
  \midrule
  Baseline & $0.502 \pm \scriptstyle 0.008$ & $0.583 \pm \scriptstyle 0.015$ & $0.549 \pm \scriptstyle 0.003$ & $0.568 \pm \scriptstyle 0.010$ & $0.595 \pm \scriptstyle 0.008$ & $0.523 \pm \scriptstyle 0.012$ & $0.614 \pm \scriptstyle 0.012$ & $0.583 \pm \scriptstyle 0.034$ & $0.565 \pm \scriptstyle 0.038$ \\
  Strong & $0.207 \pm \scriptstyle 0.020$ & $0.232 \pm \scriptstyle 0.015$ & $0.227 \pm \scriptstyle 0.006$ & $0.285 \pm \scriptstyle 0.027$ & $0.342 \pm \scriptstyle 0.047$ & $0.205 \pm \scriptstyle 0.032$ & $0.279 \pm \scriptstyle 0.008$ & $0.248 \pm \scriptstyle 0.027$ & $0.253 \pm \scriptstyle 0.046$ \\
  \midrule
  \textbf{Demogroups-JS} & $0.285 \pm \scriptstyle 0.011$ & $0.203 \pm \scriptstyle 0.009$ & $0.145 \pm \scriptstyle 0.010$ & $0.051 \pm \scriptstyle 0.001$ & $0.359 \pm \scriptstyle 0.003$ & $0.156 \pm \scriptstyle 0.015$ & $0.366 \pm \scriptstyle 0.004$ & $0.126 \pm \scriptstyle 0.005$ & \textbf{$0.211 \pm \scriptstyle 0.114$} \\
  Demogroups-TV & $0.238 \pm \scriptstyle 0.128$ & $0.135 \pm \scriptstyle 0.058$ & $0.185 \pm \scriptstyle 0.101$ & $0.201 \pm \scriptstyle 0.059$ & $0.143 \pm \scriptstyle 0.021$ & $0.182 \pm \scriptstyle 0.013$ & $0.389 \pm \scriptstyle 0.199$ & $0.208 \pm \scriptstyle 0.074$ & $0.210 \pm \scriptstyle 0.080$ \\
  \midrule
  \textbf{FullRank-JS} & $0.206 \pm \scriptstyle 0.012$ & $0.107 \pm \scriptstyle 0.030$ & $0.178 \pm \scriptstyle 0.066$ & $0.086 \pm \scriptstyle 0.055$ & $0.116 \pm \scriptstyle 0.017$ & $0.261 \pm \scriptstyle 0.128$ & $0.261 \pm \scriptstyle 0.058$ & $0.201 \pm \scriptstyle 0.012$ & \textbf{$0.177 \pm \scriptstyle 0.068$} \\
  FullRank-TV & $0.119 \pm \scriptstyle 0.014$ & $0.185 \pm \scriptstyle 0.008$ & $0.191 \pm \scriptstyle 0.015$ & $0.106 \pm \scriptstyle 0.001$ & $0.396 \pm \scriptstyle 0.030$ & $0.157 \pm \scriptstyle 0.020$ & $0.254 \pm \scriptstyle 0.003$ & $0.197 \pm \scriptstyle 0.023$ & $0.201 \pm \scriptstyle 0.092$ \\
  \midrule
  \textbf{RankDef-JS} & $0.342 \pm \scriptstyle 0.138$ & $0.379 \pm \scriptstyle 0.158$ & $0.431 \pm \scriptstyle 0.151$ & $0.398 \pm \scriptstyle 0.234$ & $0.284 \pm \scriptstyle 0.188$ & $0.453 \pm \scriptstyle 0.141$ & $0.368 \pm \scriptstyle 0.137$ & $0.385 \pm \scriptstyle 0.154$ & $0.380 \pm \scriptstyle 0.052$ \\
  RankDef-TV & $0.210 \pm \scriptstyle 0.097$ & $0.339 \pm \scriptstyle 0.089$ & $0.316 \pm \scriptstyle 0.068$ & $0.471 \pm \scriptstyle 0.076$ & $0.458 \pm \scriptstyle 0.066$ & $0.462 \pm \scriptstyle 0.210$ & $0.397 \pm \scriptstyle 0.187$ & $0.525 \pm \scriptstyle 0.149$ & \textbf{$0.397 \pm \scriptstyle 0.103$} \\
  \midrule
  \textbf{Messy-JS} & $0.408 \pm \scriptstyle 0.175$ & $0.211 \pm \scriptstyle 0.082$ & $0.174 \pm \scriptstyle 0.044$ & $0.384 \pm \scriptstyle 0.298$ & $0.392 \pm \scriptstyle 0.041$ & $0.406 \pm \scriptstyle 0.144$ & $0.290 \pm \scriptstyle 0.100$ & $0.386 \pm \scriptstyle 0.282$ & \textbf{$0.331 \pm \scriptstyle 0.094$} \\
  Messy-TV & $0.294 \pm \scriptstyle 0.111$ & $0.406 \pm \scriptstyle 0.158$ & $0.234 \pm \scriptstyle 0.173$ & $0.319 \pm \scriptstyle 0.126$ & $0.447 \pm \scriptstyle 0.115$ & $0.396 \pm \scriptstyle 0.152$ & $0.425 \pm \scriptstyle 0.053$ & $0.405 \pm \scriptstyle 0.211$ & $0.366 \pm \scriptstyle 0.074$ \\
  \bottomrule
  \end{tabular}}
  \end{table*}

\begin{table*}[t]
  \centering
  \caption{Effect of demographic diversity (RQ1), California - \textit{trip JSD} per demographic group. Format: mean $\pm$ std across 3 seeds. The Avg column reports mean $\pm$ std across 8 demographic groups (matching Table~\ref{tab:main_results}). Bold indicates best \NAME{} variant within each partition type.}
  \label{tab:ca_trip_jsd_pergroup}
  \resizebox{\textwidth}{!}{%
  \begin{tabular}{lccccccccc}
  \toprule
  Setup & <30, M & <30, F & 30--40, M & 30--40, F & 40--50, M & 40--50, F & >50, M & >50, F & Avg \\
  \midrule
  Baseline & $0.099 \pm \scriptstyle 0.001$ & $0.094 \pm \scriptstyle 0.001$ & $0.102 \pm \scriptstyle 0.001$ & $0.095 \pm \scriptstyle 0.001$ & $0.103 \pm \scriptstyle 0.000$ & $0.113 \pm \scriptstyle 0.001$ & $0.123 \pm \scriptstyle 0.001$ & $0.119 \pm \scriptstyle 0.001$ & $0.106 \pm \scriptstyle 0.011$ \\
  Strong & $0.088 \pm \scriptstyle 0.002$ & $0.076 \pm \scriptstyle 0.001$ & $0.079 \pm \scriptstyle 0.001$ & $0.074 \pm \scriptstyle 0.001$ & $0.111 \pm \scriptstyle 0.003$ & $0.093 \pm \scriptstyle 0.003$ & $0.100 \pm \scriptstyle 0.002$ & $0.098 \pm \scriptstyle 0.001$ & $0.090 \pm \scriptstyle 0.013$ \\
  \midrule
  \textbf{Demogroups-JS} & $0.112 \pm \scriptstyle 0.002$ & $0.076 \pm \scriptstyle 0.001$ & $0.071 \pm \scriptstyle 0.000$ & $0.063 \pm \scriptstyle 0.001$ & $0.089 \pm \scriptstyle 0.002$ & $0.097 \pm \scriptstyle 0.002$ & $0.111 \pm \scriptstyle 0.000$ & $0.081 \pm \scriptstyle 0.009$ & \textbf{$0.087 \pm \scriptstyle 0.018$} \\
  Demogroups-TV & $0.079 \pm \scriptstyle 0.010$ & $0.060 \pm \scriptstyle 0.010$ & $0.132 \pm \scriptstyle 0.023$ & $0.109 \pm \scriptstyle 0.022$ & $0.085 \pm \scriptstyle 0.024$ & $0.127 \pm \scriptstyle 0.049$ & $0.092 \pm \scriptstyle 0.013$ & $0.114 \pm \scriptstyle 0.006$ & $0.100 \pm \scriptstyle 0.025$ \\
  \midrule
  \textbf{FullRank-JS} & $0.078 \pm \scriptstyle 0.011$ & $0.089 \pm \scriptstyle 0.017$ & $0.078 \pm \scriptstyle 0.003$ & $0.077 \pm \scriptstyle 0.025$ & $0.068 \pm \scriptstyle 0.009$ & $0.092 \pm \scriptstyle 0.011$ & $0.078 \pm \scriptstyle 0.006$ & $0.091 \pm \scriptstyle 0.005$ & \textbf{$0.081 \pm \scriptstyle 0.008$} \\
  FullRank-TV & $0.088 \pm \scriptstyle 0.015$ & $0.055 \pm \scriptstyle 0.001$ & $0.109 \pm \scriptstyle 0.034$ & $0.058 \pm \scriptstyle 0.002$ & $0.128 \pm \scriptstyle 0.051$ & $0.094 \pm \scriptstyle 0.001$ & $0.127 \pm \scriptstyle 0.050$ & $0.088 \pm \scriptstyle 0.006$ & $0.094 \pm \scriptstyle 0.027$ \\
  \midrule
  \textbf{RankDef-JS} & $0.252 \pm \scriptstyle 0.023$ & $0.125 \pm \scriptstyle 0.068$ & $0.199 \pm \scriptstyle 0.095$ & $0.116 \pm \scriptstyle 0.065$ & $0.243 \pm \scriptstyle 0.063$ & $0.151 \pm \scriptstyle 0.060$ & $0.217 \pm \scriptstyle 0.039$ & $0.140 \pm \scriptstyle 0.005$ & \textbf{$0.180 \pm \scriptstyle 0.054$} \\
  RankDef-TV & $0.183 \pm \scriptstyle 0.081$ & $0.099 \pm \scriptstyle 0.072$ & $0.161 \pm \scriptstyle 0.050$ & $0.131 \pm \scriptstyle 0.043$ & $0.224 \pm \scriptstyle 0.156$ & $0.197 \pm \scriptstyle 0.087$ & $0.276 \pm \scriptstyle 0.046$ & $0.268 \pm \scriptstyle 0.080$ & $0.192 \pm \scriptstyle 0.063$ \\
  \midrule
  \textbf{Messy-JS} & $0.292 \pm \scriptstyle 0.085$ & $0.123 \pm \scriptstyle 0.061$ & $0.152 \pm \scriptstyle 0.055$ & $0.209 \pm \scriptstyle 0.130$ & $0.311 \pm \scriptstyle 0.078$ & $0.305 \pm \scriptstyle 0.118$ & $0.235 \pm \scriptstyle 0.085$ & $0.189 \pm \scriptstyle 0.124$ & \textbf{$0.227 \pm \scriptstyle 0.071$} \\
  Messy-TV & $0.314 \pm \scriptstyle 0.064$ & $0.238 \pm \scriptstyle 0.114$ & $0.294 \pm \scriptstyle 0.020$ & $0.247 \pm \scriptstyle 0.076$ & $0.317 \pm \scriptstyle 0.018$ & $0.270 \pm \scriptstyle 0.109$ & $0.312 \pm \scriptstyle 0.050$ & $0.234 \pm \scriptstyle 0.112$ & $0.278 \pm \scriptstyle 0.036$ \\
  \bottomrule
  \end{tabular}}
  \end{table*}

\begin{table*}[t]
\centering
\caption{Effect of demographic diversity (RQ1), California - \textit{POI frequency JSD} per demographic group. Format: mean $\pm$ std across 3 seeds. The Avg column reports mean $\pm$ std across 8 demographic groups (matching Table~\ref{tab:main_results}). Bold indicates best \NAME{} variant within each partition type.}
\label{tab:ca_poi_freq_pergroup}
\resizebox{\textwidth}{!}{%
\begin{tabular}{lccccccccc}
\toprule
Setup & <30, M & <30, F & 30--40, M & 30--40, F & 40--50, M & 40--50, F & >50, M & >50, F & Avg \\
\midrule
Baseline & $0.216 \pm \scriptstyle 0.003$ & $0.198 \pm \scriptstyle 0.008$ & $0.222 \pm \scriptstyle 0.005$ & $0.218 \pm \scriptstyle 0.013$ & $0.232 \pm \scriptstyle 0.012$ & $0.233 \pm \scriptstyle 0.011$ & $0.236 \pm \scriptstyle 0.010$ & $0.234 \pm \scriptstyle 0.012$ & $0.224 \pm \scriptstyle 0.013$ \\
Strong & $0.160 \pm \scriptstyle 0.001$ & $0.135 \pm \scriptstyle 0.000$ & $0.172 \pm \scriptstyle 0.000$ & $0.168 \pm \scriptstyle 0.002$ & $0.216 \pm \scriptstyle 0.002$ & $0.194 \pm \scriptstyle 0.003$ & $0.202 \pm \scriptstyle 0.001$ & $0.190 \pm \scriptstyle 0.006$ & $0.180 \pm \scriptstyle 0.026$ \\
\midrule
\textbf{Demogroups-JS} & $0.211 \pm \scriptstyle 0.004$ & $0.130 \pm \scriptstyle 0.002$ & $0.159 \pm \scriptstyle 0.001$ & $0.134 \pm \scriptstyle 0.000$ & $0.177 \pm \scriptstyle 0.001$ & $0.166 \pm \scriptstyle 0.001$ & $0.217 \pm \scriptstyle 0.000$ & $0.164 \pm \scriptstyle 0.000$ & \textbf{$0.170 \pm \scriptstyle 0.032$} \\
Demogroups-TV & $0.132 \pm \scriptstyle 0.001$ & $0.106 \pm \scriptstyle 0.016$ & $0.223 \pm \scriptstyle 0.011$ & $0.185 \pm \scriptstyle 0.014$ & $0.154 \pm \scriptstyle 0.006$ & $0.225 \pm \scriptstyle 0.008$ & $0.175 \pm \scriptstyle 0.006$ & $0.187 \pm \scriptstyle 0.016$ & $0.173 \pm \scriptstyle 0.042$ \\
\midrule
\textbf{FullRank-JS} & $0.143 \pm \scriptstyle 0.015$ & $0.161 \pm \scriptstyle 0.040$ & $0.157 \pm \scriptstyle 0.013$ & $0.155 \pm \scriptstyle 0.021$ & $0.148 \pm \scriptstyle 0.007$ & $0.184 \pm \scriptstyle 0.021$ & $0.177 \pm \scriptstyle 0.007$ & $0.164 \pm \scriptstyle 0.014$ & \textbf{$0.161 \pm \scriptstyle 0.014$} \\
FullRank-TV & $0.160 \pm \scriptstyle 0.010$ & $0.157 \pm \scriptstyle 0.051$ & $0.184 \pm \scriptstyle 0.017$ & $0.172 \pm \scriptstyle 0.026$ & $0.209 \pm \scriptstyle 0.027$ & $0.210 \pm \scriptstyle 0.009$ & $0.199 \pm \scriptstyle 0.026$ & $0.184 \pm \scriptstyle 0.017$ & $0.184 \pm \scriptstyle 0.021$ \\
\midrule
\textbf{RankDef-JS} & $0.310 \pm \scriptstyle 0.054$ & $0.192 \pm \scriptstyle 0.084$ & $0.265 \pm \scriptstyle 0.063$ & $0.201 \pm \scriptstyle 0.061$ & $0.308 \pm \scriptstyle 0.067$ & $0.239 \pm \scriptstyle 0.020$ & $0.313 \pm \scriptstyle 0.063$ & $0.219 \pm \scriptstyle 0.043$ & $0.256 \pm \scriptstyle 0.050$ \\
RankDef-TV & $0.236 \pm \scriptstyle 0.085$ & $0.164 \pm \scriptstyle 0.073$ & $0.211 \pm \scriptstyle 0.013$ & $0.200 \pm \scriptstyle 0.056$ & $0.271 \pm \scriptstyle 0.127$ & $0.248 \pm \scriptstyle 0.085$ & $0.329 \pm \scriptstyle 0.068$ & $0.293 \pm \scriptstyle 0.085$ & \textbf{$0.244 \pm \scriptstyle 0.053$} \\
\midrule
\textbf{Messy-JS} & $0.376 \pm \scriptstyle 0.098$ & $0.175 \pm \scriptstyle 0.061$ & $0.261 \pm \scriptstyle 0.041$ & $0.340 \pm \scriptstyle 0.142$ & $0.412 \pm \scriptstyle 0.067$ & $0.400 \pm \scriptstyle 0.083$ & $0.397 \pm \scriptstyle 0.162$ & $0.284 \pm \scriptstyle 0.092$ & \textbf{$0.331 \pm \scriptstyle 0.084$} \\
Messy-TV & $0.385 \pm \scriptstyle 0.069$ & $0.326 \pm \scriptstyle 0.061$ & $0.431 \pm \scriptstyle 0.057$ & $0.363 \pm \scriptstyle 0.042$ & $0.402 \pm \scriptstyle 0.034$ & $0.361 \pm \scriptstyle 0.045$ & $0.419 \pm \scriptstyle 0.036$ & $0.341 \pm \scriptstyle 0.068$ & $0.378 \pm \scriptstyle 0.037$ \\
\bottomrule
\end{tabular}}
\end{table*}

In the following tables, we provide detailed per-group performance comparisons across states, trajectory statistics, partition structures, and divergence functions used in the aggregate loss:
Virginia, average JSDs over groups (Table~\ref{tab:main_results}) and JSDs per group and trajectory statistic (Tables~\ref{tab:spatial_jsd_pergroup}-\ref{tab:poi_freq_pergroup}); California, average JSDs over groups (Table~\ref{tab:ca_main_results}) and JSDs per group and trajectory statistic (Tables~\ref{tab:ca_spatial_jsd_pergroup}-\ref{tab:ca_poi_freq_pergroup}).
As described in Section~\ref{sec:demo-diversity-results}, \NAME{} performs very well when the partition is well-conditioned (DemoGroups and FullRank) and performance degrades as the partition becomes more ill-conditioned, as predicted by Condition~\ref{ass:full-rank} and Lemma~\ref{lem:moment-identifiability}.

Additionally, we observe a clear interaction between partition conditioning and the choice of distributional divergence measure used in the aggregate loss. 
Jensen-Shannon (JS) divergence remains comparatively stable as $P$ becomes ill-conditioned: on the Rank-Deficient partition, RankDef-JS retains reasonable performance.
In contrast, Total Variation (TV) degrades sharply under rank deficiency: RankDef-TV shows negative average gap closed on spatial ($-80\%$), trip ($-71\%$), and POI frequency ($-49\%$), while remaining moderately effective on travel distance ($75\%$). This pattern is consistent with noise amplification under ill-conditioned $P$: small errors in estimated regional aggregates $\hat{\nu}_\theta(g)$ are amplified through $(P^\top P)^{-1}$ at a rate proportional to $1/\sigma_{\min}(P)$ (Lemma~\ref{lem:stability}).
Under such noise, JS can yield more stable optimization, whereas TV, as an $\ell_1$-type discrepancy, can overweight sparse bins/tail events and become less stable.
Practically, the divergence choice matters primarily when $P$ is ill-conditioned, where optimization becomes more sensitive to estimation noise.

\subsection{RQ2: Effect of Feature Choice}
\begin{table*}[t]
  \centering
  \caption{Effect of feature choice (RQ2), Virginia. JSDs across trajectory statistics per demographic group. Format: mean $\pm$ std across 3 seeds. The Avg column reports mean $\pm$ std across 8 demographic groups.}
  \label{tab:js_cate_trans_pergroup}
  \resizebox{\textwidth}{!}{%
  \begin{tabular}{lccccccccc}
  \toprule
  Setup & <30, M & <30, F & 30--40, M & 30--40, F & 40--50, M & 40--50, F & >50, M & >50, F & Avg \\
  \midrule
  \multicolumn{10}{l}{\textit{Spatial JSD}} \\
  Baseline & $0.131 \pm \scriptstyle 0.002$ & $0.089 \pm \scriptstyle 0.002$ & $0.119 \pm \scriptstyle 0.009$ & $0.077 \pm \scriptstyle 0.008$ & $0.105 \pm \scriptstyle 0.015$ & $0.099 \pm \scriptstyle 0.011$ & $0.093 \pm \scriptstyle 0.008$ & $0.123 \pm \scriptstyle 0.001$ & $0.105$ \\
  Strong & $0.075 \pm \scriptstyle 0.006$ & $0.051 \pm \scriptstyle 0.007$ & $0.045 \pm \scriptstyle 0.001$ & $0.048 \pm \scriptstyle 0.000$ & $0.058 \pm \scriptstyle 0.004$ & $0.068 \pm \scriptstyle 0.002$ & $0.051 \pm \scriptstyle 0.000$ & $0.079 \pm \scriptstyle 0.006$ & $0.059$ \\
  \textbf{POI-Histogram-JS} & $0.072 \pm \scriptstyle 0.018$ & $0.060 \pm \scriptstyle 0.008$ & $0.092 \pm \scriptstyle 0.011$ & $0.057 \pm \scriptstyle 0.029$ & $0.086 \pm \scriptstyle 0.010$ & $0.093 \pm \scriptstyle 0.008$ & $0.091 \pm \scriptstyle 0.008$ & $0.089 \pm \scriptstyle 0.037$ & \textbf{$0.080$} \\
  POI-Histogram-TV & $0.075 \pm \scriptstyle 0.025$ & $0.072 \pm \scriptstyle 0.030$ & $0.110 \pm \scriptstyle 0.045$ & $0.066 \pm \scriptstyle 0.022$ & $0.101 \pm \scriptstyle 0.051$ & $0.095 \pm \scriptstyle 0.018$ & $0.106 \pm \scriptstyle 0.049$ & $0.071 \pm \scriptstyle 0.012$ & 0.087 \\
  \textbf{Cate-Trans-JS} & $0.158 \pm \scriptstyle 0.094$ & $0.094 \pm \scriptstyle 0.038$ & $0.138 \pm \scriptstyle 0.033$ & $0.088 \pm \scriptstyle 0.043$ & $0.078 \pm \scriptstyle 0.008$ & $0.106 \pm \scriptstyle 0.014$ & $0.131 \pm \scriptstyle 0.100$ & $0.077 \pm \scriptstyle 0.006$ & 0.109 \\
  Cate-Trans-TV & $0.126 \pm \scriptstyle 0.078$ & $0.165 \pm \scriptstyle 0.201$ & $0.196 \pm \scriptstyle 0.127$ & $0.141 \pm \scriptstyle 0.155$ & $0.066 \pm \scriptstyle 0.006$ & $0.129 \pm \scriptstyle 0.031$ & $0.151 \pm \scriptstyle 0.121$ & $0.093 \pm \scriptstyle 0.014$ & 0.133 \\
  \textbf{Cate-JS} & $0.097 \pm \scriptstyle 0.005$ & $0.085 \pm \scriptstyle 0.011$ & $0.111 \pm \scriptstyle 0.019$ & $0.048 \pm \scriptstyle 0.002$ & $0.109 \pm \scriptstyle 0.007$ & $0.141 \pm \scriptstyle 0.028$ & $0.101 \pm \scriptstyle 0.009$ & $0.129 \pm \scriptstyle 0.058$ & 0.103 \\
  Cate-TV & $0.238 \pm \scriptstyle 0.043$ & $0.170 \pm \scriptstyle 0.111$ & $0.194 \pm \scriptstyle 0.032$ & $0.128 \pm \scriptstyle 0.053$ & $0.119 \pm \scriptstyle 0.033$ & $0.119 \pm \scriptstyle 0.004$ & $0.105 \pm \scriptstyle 0.006$ & $0.240 \pm \scriptstyle 0.163$ & 0.164 \\
  \midrule
  \multicolumn{10}{l}{\textit{Travel Distance JSD}} \\
  Baseline & $0.398 \pm \scriptstyle 0.038$ & $0.353 \pm \scriptstyle 0.075$ & $0.455 \pm \scriptstyle 0.053$ & $0.324 \pm \scriptstyle 0.074$ & $0.366 \pm \scriptstyle 0.070$ & $0.350 \pm \scriptstyle 0.081$ & $0.387 \pm \scriptstyle 0.059$ & $0.425 \pm \scriptstyle 0.045$ & $0.382$  \\
    Strong & $0.125 \pm \scriptstyle 0.002$ & $0.178 \pm \scriptstyle 0.003$ & $0.140 \pm \scriptstyle 0.003$ & $0.134 \pm \scriptstyle 0.002$ & $0.147 \pm \scriptstyle 0.003$ & $0.164 \pm \scriptstyle 0.008$ & $0.121 \pm \scriptstyle 0.004$ & $0.253 \pm \scriptstyle 0.020$ & $0.158$ \\
  \textbf{POI-Histogram-JS} & $0.127 \pm \scriptstyle 0.078$ & $0.121 \pm \scriptstyle 0.072$ & $0.146 \pm \scriptstyle 0.066$ & $0.151 \pm \scriptstyle 0.013$ & $0.160 \pm \scriptstyle 0.096$ & $0.084 \pm \scriptstyle 0.029$ & $0.106 \pm \scriptstyle 0.070$ & $0.097 \pm \scriptstyle 0.005$ & \textbf{$0.124$} \\
  POI-Histogram-TV & $0.079 \pm \scriptstyle 0.043$ & $0.099 \pm \scriptstyle 0.040$ & $0.081 \pm \scriptstyle 0.054$ & $0.282 \pm \scriptstyle 0.045$ & $0.149 \pm \scriptstyle 0.015$ & $0.090 \pm \scriptstyle 0.046$ & $0.155 \pm \scriptstyle 0.087$ & $0.116 \pm \scriptstyle 0.062$ & $0.131$ \\
  \textbf{Cate-Trans-JS} & $0.261 \pm \scriptstyle 0.217$ & $0.189 \pm \scriptstyle 0.140$ & $0.274 \pm \scriptstyle 0.215$ & $0.183 \pm \scriptstyle 0.066$ & $0.215 \pm \scriptstyle 0.202$ & $0.174 \pm \scriptstyle 0.078$ & $0.276 \pm \scriptstyle 0.085$ & $0.241 \pm \scriptstyle 0.140$ & 0.227 \\
  Cate-Trans-TV & $0.166 \pm \scriptstyle 0.187$ & $0.123 \pm \scriptstyle 0.125$ & $0.357 \pm \scriptstyle 0.157$ & $0.274 \pm \scriptstyle 0.123$ & $0.206 \pm \scriptstyle 0.163$ & $0.182 \pm \scriptstyle 0.140$ & $0.225 \pm \scriptstyle 0.172$ & $0.117 \pm \scriptstyle 0.099$ & 0.206 \\
  \textbf{Cate-JS} & $0.187 \pm \scriptstyle 0.062$ & $0.218 \pm \scriptstyle 0.001$ & $0.476 \pm \scriptstyle 0.002$ & $0.178 \pm \scriptstyle 0.015$ & $0.401 \pm \scriptstyle 0.059$ & $0.423 \pm \scriptstyle 0.046$ & $0.397 \pm \scriptstyle 0.074$ & $0.472 \pm \scriptstyle 0.085$ & 0.344 \\
  Cate-TV & $0.305 \pm \scriptstyle 0.193$ & $0.338 \pm \scriptstyle 0.197$ & $0.572 \pm \scriptstyle 0.083$ & $0.265 \pm \scriptstyle 0.033$ & $0.356 \pm \scriptstyle 0.130$ & $0.312 \pm \scriptstyle 0.140$ & $0.291 \pm \scriptstyle 0.187$ & $0.317 \pm \scriptstyle 0.187$ & 0.345 \\
  \midrule
  \multicolumn{10}{l}{\textit{Trip JSD}} \\
  Baseline & $0.192 \pm \scriptstyle 0.006$ & $0.122 \pm \scriptstyle 0.005$ & $0.177 \pm \scriptstyle 0.010$ & $0.119 \pm \scriptstyle 0.005$ & $0.165 \pm \scriptstyle 0.009$ & $0.170 \pm \scriptstyle 0.006$ & $0.128 \pm \scriptstyle 0.018$ & $0.186 \pm \scriptstyle 0.014$ & $0.157$ \\
  Strong & $0.146 \pm \scriptstyle 0.000$ & $0.108 \pm \scriptstyle 0.016$ & $0.113 \pm \scriptstyle 0.002$ & $0.092 \pm \scriptstyle 0.004$ & $0.127 \pm \scriptstyle 0.007$ & $0.154 \pm \scriptstyle 0.000$ & $0.098 \pm \scriptstyle 0.001$ & $0.140 \pm \scriptstyle 0.020$ & $0.122$ \\
  \textbf{POI-Histogram-JS} & $0.112 \pm \scriptstyle 0.022$ & $0.068 \pm \scriptstyle 0.006$ & $0.101 \pm \scriptstyle 0.007$ & $0.095 \pm \scriptstyle 0.037$ & $0.121 \pm \scriptstyle 0.024$ & $0.127 \pm \scriptstyle 0.012$ & $0.115 \pm \scriptstyle 0.014$ & $0.100 \pm \scriptstyle 0.047$ & 0.105\\
  POI-Histogram-TV & $0.133 \pm \scriptstyle 0.026$ & $0.092 \pm \scriptstyle 0.024$ & $0.138 \pm \scriptstyle 0.041$ & $0.105 \pm \scriptstyle 0.026$ & $0.142 \pm \scriptstyle 0.072$ & $0.131 \pm \scriptstyle 0.027$ & $0.131 \pm \scriptstyle 0.050$ & $0.127 \pm \scriptstyle 0.043$ & 0.125 \\
  \textbf{Cate-Trans-JS} & $0.188 \pm \scriptstyle 0.069$ & $0.102 \pm \scriptstyle 0.017$ & $0.123 \pm \scriptstyle 0.022$ & $0.103 \pm \scriptstyle 0.021$ & $0.103 \pm \scriptstyle 0.016$ & $0.132 \pm \scriptstyle 0.023$ & $0.150 \pm \scriptstyle 0.097$ & $0.134 \pm \scriptstyle 0.040$ & 0.129 \\
  Cate-Trans-TV & $0.148 \pm \scriptstyle 0.073$ & $0.200 \pm \scriptstyle 0.209$ & $0.232 \pm \scriptstyle 0.174$ & $0.184 \pm \scriptstyle 0.181$ & $0.108 \pm \scriptstyle 0.006$ & $0.164 \pm \scriptstyle 0.028$ & $0.198 \pm \scriptstyle 0.170$ & $0.129 \pm \scriptstyle 0.027$ & 0.170 \\
  \textbf{Cate-JS} & $0.143 \pm \scriptstyle 0.015$ & $0.114 \pm \scriptstyle 0.003$ & $0.149 \pm \scriptstyle 0.020$ & $0.082 \pm \scriptstyle 0.004$ & $0.183 \pm \scriptstyle 0.006$ & $0.197 \pm \scriptstyle 0.002$ & $0.143 \pm \scriptstyle 0.002$ & $0.179 \pm \scriptstyle 0.033$ & 0.149 \\
  Cate-TV & $0.266 \pm \scriptstyle 0.052$ & $0.218 \pm \scriptstyle 0.117$ & $0.229 \pm \scriptstyle 0.041$ & $0.155 \pm \scriptstyle 0.057$ & $0.185 \pm \scriptstyle 0.058$ & $0.166 \pm \scriptstyle 0.031$ & $0.134 \pm \scriptstyle 0.024$ & $0.257 \pm \scriptstyle 0.149$ & 0.201 \\
  \midrule
  \multicolumn{10}{l}{\textit{POI Frequency JSD}} \\
    Baseline & $0.398 \pm \scriptstyle 0.038$ & $0.353 \pm \scriptstyle 0.075$ & $0.455 \pm \scriptstyle 0.053$ & $0.324 \pm \scriptstyle 0.074$ & $0.366 \pm \scriptstyle 0.070$ & $0.350 \pm \scriptstyle 0.081$ & $0.387 \pm \scriptstyle 0.059$ & $0.425 \pm \scriptstyle 0.045$ & $0.382$ \\
    Strong & $0.252 \pm \scriptstyle 0.005$ & $0.243 \pm \scriptstyle 0.001$ & $0.189 \pm \scriptstyle 0.001$ & $0.242 \pm \scriptstyle 0.001$ & $0.262 \pm \scriptstyle 0.006$ & $0.279 \pm \scriptstyle 0.001$ & $0.232 \pm \scriptstyle 0.003$ & $0.291 \pm \scriptstyle 0.001$ & $0.249$ \\
  \textbf{POI-Histogram-JS} & $0.186 \pm \scriptstyle 0.018$ & $0.168 \pm \scriptstyle 0.020$ & $0.220 \pm \scriptstyle 0.016$ & $0.202 \pm \scriptstyle 0.024$ & $0.227 \pm \scriptstyle 0.037$ & $0.224 \pm \scriptstyle 0.000$ & $0.229 \pm \scriptstyle 0.031$ & $0.231 \pm \scriptstyle 0.029$ & 0.211\\
  POI-Histogram-TV & $0.211 \pm \scriptstyle 0.049$ & $0.204 \pm \scriptstyle 0.077$ & $0.223 \pm \scriptstyle 0.028$ & $0.261 \pm \scriptstyle 0.057$ & $0.262 \pm \scriptstyle 0.059$ & $0.245 \pm \scriptstyle 0.028$ & $0.273 \pm \scriptstyle 0.080$ & $0.222 \pm \scriptstyle 0.012$ & 0.238 \\
  \textbf{Cate-Trans-JS} & $0.290 \pm \scriptstyle 0.120$ & $0.247 \pm \scriptstyle 0.079$ & $0.275 \pm \scriptstyle 0.061$ & $0.282 \pm \scriptstyle 0.084$ & $0.221 \pm \scriptstyle 0.025$ & $0.233 \pm \scriptstyle 0.018$ & $0.284 \pm \scriptstyle 0.141$ & $0.221 \pm \scriptstyle 0.014$ & 0.257 \\
  Cate-Trans-TV & $0.282 \pm \scriptstyle 0.152$ & $0.344 \pm \scriptstyle 0.260$ & $0.365 \pm \scriptstyle 0.173$ & $0.385 \pm \scriptstyle 0.253$ & $0.232 \pm \scriptstyle 0.035$ & $0.272 \pm \scriptstyle 0.044$ & $0.346 \pm \scriptstyle 0.174$ & $0.217 \pm \scriptstyle 0.011$ & 0.305 \\
  \textbf{Cate-JS} & $0.248 \pm \scriptstyle 0.054$ & $0.223 \pm \scriptstyle 0.021$ & $0.278 \pm \scriptstyle 0.016$ & $0.225 \pm \scriptstyle 0.019$ & $0.308 \pm \scriptstyle 0.034$ & $0.333 \pm \scriptstyle 0.008$ & $0.290 \pm \scriptstyle 0.020$ & $0.320 \pm \scriptstyle 0.061$ & 0.278 \\
  Cate-TV & $0.404 \pm \scriptstyle 0.086$ & $0.331 \pm \scriptstyle 0.120$ & $0.373 \pm \scriptstyle 0.014$ & $0.357 \pm \scriptstyle 0.102$ & $0.313 \pm \scriptstyle 0.086$ & $0.304 \pm \scriptstyle 0.043$ & $0.279 \pm \scriptstyle 0.038$ & $0.343 \pm \scriptstyle 0.069$ & 0.338 \\
  \bottomrule
  \end{tabular}}
  \end{table*}

\begin{table*}[t]
  \centering
  \caption{Effect of feature choice (RQ2), California. JSDs across trajectory statistics per demographic group. Format: mean $\pm$ std across 3 seeds. The Avg column reports mean $\pm$ std across 8 demographic groups.}
  \label{tab:ca_cate_ablation_pergroup}
  \resizebox{\textwidth}{!}{%
  \begin{tabular}{lccccccccc}
  \toprule
  Setup & <30, M & <30, F & 30--40, M & 30--40, F & 40--50, M & 40--50, F & >50, M & >50, F & Avg \\
  \midrule
  \multicolumn{10}{l}{\textit{Spatial JSD}} \\
  Baseline & $0.096 \pm \scriptstyle 0.003$ & $0.095 \pm \scriptstyle 0.011$ & $0.101 \pm \scriptstyle 0.003$ & $0.096 \pm \scriptstyle 0.007$ & $0.110 \pm \scriptstyle 0.000$ & $0.104 \pm \scriptstyle 0.001$ & $0.115 \pm \scriptstyle 0.003$ & $0.103 \pm \scriptstyle 0.008$ & $0.102$ \\
  Strong & $0.056 \pm \scriptstyle 0.002$ & $0.052 \pm \scriptstyle 0.000$ & $0.067 \pm \scriptstyle 0.001$ & $0.060 \pm \scriptstyle 0.002$ & $0.105 \pm \scriptstyle 0.004$ & $0.072 \pm \scriptstyle 0.001$ & $0.079 \pm \scriptstyle 0.002$ & $0.069 \pm \scriptstyle 0.007$ & $0.070$ \\
  \textbf{POI-Histogram-JS} & $0.102 \pm \scriptstyle 0.043$ & $0.077 \pm \scriptstyle 0.003$ & $0.078 \pm \scriptstyle 0.021$ & $0.073 \pm \scriptstyle 0.028$ & $0.085 \pm \scriptstyle 0.000$ & $0.111 \pm \scriptstyle 0.020$ & $0.111 \pm \scriptstyle 0.009$ & $0.084 \pm \scriptstyle 0.001$ & $0.090$ \\
  POI-Histogram-TV & $0.075 \pm \scriptstyle 0.002$ & $0.072 \pm \scriptstyle 0.001$ & $0.090 \pm \scriptstyle 0.005$ & $0.073 \pm \scriptstyle 0.017$ & $0.094 \pm \scriptstyle 0.007$ & $0.108 \pm \scriptstyle 0.009$ & $0.098 \pm \scriptstyle 0.019$ & $0.086 \pm \scriptstyle 0.002$ & \textbf{$0.087$} \\
  \textbf{Cate-Trans-JS} & $0.094 \pm \scriptstyle 0.016$ & $0.091 \pm \scriptstyle 0.038$ & $0.094 \pm \scriptstyle 0.017$ & $0.120 \pm \scriptstyle 0.067$ & $0.117 \pm \scriptstyle 0.015$ & $0.118 \pm \scriptstyle 0.040$ & $0.162 \pm \scriptstyle 0.066$ & $0.155 \pm \scriptstyle 0.020$ & $0.119 $ \\
  Cate-Trans-TV & $0.103 \pm \scriptstyle 0.021$ & $0.085 \pm \scriptstyle 0.028$ & $0.079 \pm \scriptstyle 0.008$ & $0.079 \pm \scriptstyle 0.027$ & $0.098 \pm \scriptstyle 0.020$ & $0.126 \pm \scriptstyle 0.032$ & $0.119 \pm \scriptstyle 0.014$ & $0.125 \pm \scriptstyle 0.033$ & $0.102$ \\
  \textbf{Cate-JS} & $0.171 \pm \scriptstyle 0.079$ & $0.231 \pm \scriptstyle 0.189$ & $0.126 \pm \scriptstyle 0.026$ & $0.164 \pm \scriptstyle 0.110$ & $0.194 \pm \scriptstyle 0.102$ & $0.168 \pm \scriptstyle 0.060$ & $0.170 \pm \scriptstyle 0.054$ & $0.202 \pm \scriptstyle 0.081$ & $0.178$ \\
  Cate-TV & $0.113 \pm \scriptstyle 0.001$ & $0.173 \pm \scriptstyle 0.036$ & $0.114 \pm \scriptstyle 0.040$ & $0.179 \pm \scriptstyle 0.003$ & $0.112 \pm \scriptstyle 0.008$ & $0.147 \pm \scriptstyle 0.022$ & $0.132 \pm \scriptstyle 0.020$ & $0.138 \pm \scriptstyle 0.029$ & $0.138$ \\
  \midrule
  \multicolumn{10}{l}{\textit{Travel Distance JSD}} \\
  Baseline & $0.502 \pm \scriptstyle 0.008$ & $0.583 \pm \scriptstyle 0.015$ & $0.549 \pm \scriptstyle 0.003$ & $0.568 \pm \scriptstyle 0.010$ & $0.595 \pm \scriptstyle 0.008$ & $0.523 \pm \scriptstyle 0.012$ & $0.614 \pm \scriptstyle 0.012$ & $0.583 \pm \scriptstyle 0.034$ & $0.565$ \\
  Strong & $0.207 \pm \scriptstyle 0.020$ & $0.232 \pm \scriptstyle 0.015$ & $0.227 \pm \scriptstyle 0.006$ & $0.285 \pm \scriptstyle 0.027$ & $0.342 \pm \scriptstyle 0.047$ & $0.205 \pm \scriptstyle 0.032$ & $0.279 \pm \scriptstyle 0.008$ & $0.248 \pm \scriptstyle 0.027$ & $0.253$ \\
  \textbf{POI-Histogram-JS} & $0.285 \pm \scriptstyle 0.011$ & $0.203 \pm \scriptstyle 0.009$ & $0.145 \pm \scriptstyle 0.010$ & $0.051 \pm \scriptstyle 0.001$ & $0.359 \pm \scriptstyle 0.003$ & $0.156 \pm \scriptstyle 0.015$ & $0.366 \pm \scriptstyle 0.004$ & $0.126 \pm \scriptstyle 0.005$ & $0.211 $ \\
  POI-Histogram-TV & $0.238 \pm \scriptstyle 0.128$ & $0.135 \pm \scriptstyle 0.058$ & $0.185 \pm \scriptstyle 0.101$ & $0.201 \pm \scriptstyle 0.059$ & $0.143 \pm \scriptstyle 0.021$ & $0.182 \pm \scriptstyle 0.013$ & $0.389 \pm \scriptstyle 0.199$ & $0.208 \pm \scriptstyle 0.074$ & \textbf{$0.210$} \\
  \textbf{Cate-Trans-JS} & $0.276 \pm \scriptstyle 0.181$ & $0.364 \pm \scriptstyle 0.144$ & $0.288 \pm \scriptstyle 0.192$ & $0.371 \pm \scriptstyle 0.069$ & $0.244 \pm \scriptstyle 0.162$ & $0.302 \pm \scriptstyle 0.180$ & $0.322 \pm \scriptstyle 0.127$ & $0.441 \pm \scriptstyle 0.191$ & $0.326 $ \\
  Cate-Trans-TV & $0.193 \pm \scriptstyle 0.115$ & $0.392 \pm \scriptstyle 0.149$ & $0.153 \pm \scriptstyle 0.138$ & $0.409 \pm \scriptstyle 0.066$ & $0.243 \pm \scriptstyle 0.152$ & $0.346 \pm \scriptstyle 0.123$ & $0.411 \pm \scriptstyle 0.128$ & $0.315 \pm \scriptstyle 0.092$ & $0.308$ \\
  \textbf{Cate-JS} & $0.264 \pm \scriptstyle 0.029$ & $0.251 \pm \scriptstyle 0.113$ & $0.381 \pm \scriptstyle 0.088$ & $0.407 \pm \scriptstyle 0.069$ & $0.320 \pm \scriptstyle 0.025$ & $0.303 \pm \scriptstyle 0.126$ & $0.605 \pm \scriptstyle 0.170$ & $0.439 \pm \scriptstyle 0.180$ & $0.371$ \\
  Cate-TV & $0.303 \pm \scriptstyle 0.149$ & $0.326 \pm \scriptstyle 0.012$ & $0.396 \pm \scriptstyle 0.091$ & $0.381 \pm \scriptstyle 0.076$ & $0.398 \pm \scriptstyle 0.048$ & $0.355 \pm \scriptstyle 0.023$ & $0.448 \pm \scriptstyle 0.187$ & $0.316 \pm \scriptstyle 0.128$ & $0.365$ \\
  \midrule
  \multicolumn{10}{l}{\textit{Trip JSD}} \\
  Baseline & $0.099 \pm \scriptstyle 0.001$ & $0.094 \pm \scriptstyle 0.001$ & $0.102 \pm \scriptstyle 0.001$ & $0.095 \pm \scriptstyle 0.001$ & $0.103 \pm \scriptstyle 0.000$ & $0.113 \pm \scriptstyle 0.001$ & $0.123 \pm \scriptstyle 0.001$ & $0.119 \pm \scriptstyle 0.001$ & $0.106$ \\
  Strong & $0.088 \pm \scriptstyle 0.002$ & $0.076 \pm \scriptstyle 0.001$ & $0.079 \pm \scriptstyle 0.001$ & $0.074 \pm \scriptstyle 0.001$ & $0.111 \pm \scriptstyle 0.003$ & $0.093 \pm \scriptstyle 0.003$ & $0.100 \pm \scriptstyle 0.002$ & $0.098 \pm \scriptstyle 0.001$ & $0.090$ \\
  \textbf{POI-Histogram-JS} & $0.112 \pm \scriptstyle 0.002$ & $0.076 \pm \scriptstyle 0.001$ & $0.071 \pm \scriptstyle 0.000$ & $0.063 \pm \scriptstyle 0.001$ & $0.089 \pm \scriptstyle 0.002$ & $0.097 \pm \scriptstyle 0.002$ & $0.111 \pm \scriptstyle 0.000$ & $0.081 \pm \scriptstyle 0.009$ & \textbf{$0.087$} \\
  POI-Histogram-TV & $0.079 \pm \scriptstyle 0.010$ & $0.060 \pm \scriptstyle 0.010$ & $0.132 \pm \scriptstyle 0.023$ & $0.109 \pm \scriptstyle 0.022$ & $0.085 \pm \scriptstyle 0.024$ & $0.127 \pm \scriptstyle 0.049$ & $0.092 \pm \scriptstyle 0.013$ & $0.114 \pm \scriptstyle 0.006$ & $0.100$ \\
  \textbf{Cate-Trans-JS} & $0.097 \pm \scriptstyle 0.017$ & $0.102 \pm \scriptstyle 0.063$ & $0.086 \pm \scriptstyle 0.010$ & $0.118 \pm \scriptstyle 0.057$ & $0.124 \pm \scriptstyle 0.007$ & $0.148 \pm \scriptstyle 0.049$ & $0.153 \pm \scriptstyle 0.057$ & $0.160 \pm \scriptstyle 0.041$ & $0.123 $ \\
  Cate-Trans-TV & $0.106 \pm \scriptstyle 0.029$ & $0.107 \pm \scriptstyle 0.055$ & $0.077 \pm \scriptstyle 0.009$ & $0.090 \pm \scriptstyle 0.027$ & $0.094 \pm \scriptstyle 0.027$ & $0.157 \pm \scriptstyle 0.048$ & $0.099 \pm \scriptstyle 0.015$ & $0.117 \pm \scriptstyle 0.021$ & $0.106 $ \\
  \textbf{Cate-JS} & $0.282 \pm \scriptstyle 0.043$ & $0.315 \pm \scriptstyle 0.104$ & $0.255 \pm \scriptstyle 0.020$ & $0.304 \pm \scriptstyle 0.044$ & $0.340 \pm \scriptstyle 0.114$ & $0.349 \pm \scriptstyle 0.059$ & $0.272 \pm \scriptstyle 0.043$ & $0.347 \pm \scriptstyle 0.035$ & $0.308$ \\
  Cate-TV & $0.203 \pm \scriptstyle 0.152$ & $0.219 \pm \scriptstyle 0.152$ & $0.199 \pm \scriptstyle 0.158$ & $0.244 \pm \scriptstyle 0.135$ & $0.192 \pm \scriptstyle 0.109$ & $0.239 \pm \scriptstyle 0.177$ & $0.212 \pm \scriptstyle 0.153$ & $0.194 \pm \scriptstyle 0.084$ & $0.213 $ \\
  \midrule
  \multicolumn{10}{l}{\textit{POI Frequency JSD}} \\
  Baseline & $0.216 \pm \scriptstyle 0.003$ & $0.198 \pm \scriptstyle 0.008$ & $0.222 \pm \scriptstyle 0.005$ & $0.218 \pm \scriptstyle 0.013$ & $0.232 \pm \scriptstyle 0.012$ & $0.233 \pm \scriptstyle 0.011$ & $0.236 \pm \scriptstyle 0.010$ & $0.234 \pm \scriptstyle 0.012$ & $0.224$ \\
  Strong & $0.160 \pm \scriptstyle 0.001$ & $0.135 \pm \scriptstyle 0.000$ & $0.172 \pm \scriptstyle 0.000$ & $0.168 \pm \scriptstyle 0.002$ & $0.216 \pm \scriptstyle 0.002$ & $0.194 \pm \scriptstyle 0.003$ & $0.202 \pm \scriptstyle 0.001$ & $0.190 \pm \scriptstyle 0.006$ & $0.180$ \\
  \textbf{POI-Histogram-JS} & $0.211 \pm \scriptstyle 0.004$ & $0.130 \pm \scriptstyle 0.002$ & $0.159 \pm \scriptstyle 0.001$ & $0.134 \pm \scriptstyle 0.000$ & $0.177 \pm \scriptstyle 0.001$ & $0.166 \pm \scriptstyle 0.001$ & $0.217 \pm \scriptstyle 0.000$ & $0.164 \pm \scriptstyle 0.000$ & \textbf{$0.170$} \\
  POI-Histogram-TV & $0.132 \pm \scriptstyle 0.001$ & $0.106 \pm \scriptstyle 0.016$ & $0.223 \pm \scriptstyle 0.011$ & $0.185 \pm \scriptstyle 0.014$ & $0.154 \pm \scriptstyle 0.006$ & $0.225 \pm \scriptstyle 0.008$ & $0.175 \pm \scriptstyle 0.006$ & $0.187 \pm \scriptstyle 0.016$ & $0.173$ \\
  \textbf{Cate-Trans-JS} & $0.194 \pm \scriptstyle 0.031$ & $0.171 \pm \scriptstyle 0.053$ & $0.199 \pm \scriptstyle 0.018$ & $0.242 \pm \scriptstyle 0.088$ & $0.229 \pm \scriptstyle 0.029$ & $0.239 \pm \scriptstyle 0.057$ & $0.256 \pm \scriptstyle 0.050$ & $0.269 \pm \scriptstyle 0.040$ & $0.225 $ \\
  Cate-Trans-TV & $0.193 \pm \scriptstyle 0.034$ & $0.163 \pm \scriptstyle 0.035$ & $0.176 \pm \scriptstyle 0.023$ & $0.189 \pm \scriptstyle 0.034$ & $0.188 \pm \scriptstyle 0.036$ & $0.237 \pm \scriptstyle 0.014$ & $0.232 \pm \scriptstyle 0.015$ & $0.238 \pm \scriptstyle 0.025$ & $0.202$ \\
  \textbf{Cate-JS} & $0.332 \pm \scriptstyle 0.106$ & $0.364 \pm \scriptstyle 0.221$ & $0.270 \pm \scriptstyle 0.044$ & $0.324 \pm \scriptstyle 0.131$ & $0.308 \pm \scriptstyle 0.072$ & $0.304 \pm \scriptstyle 0.050$ & $0.313 \pm \scriptstyle 0.055$ & $0.351 \pm \scriptstyle 0.085$ & $0.321$ \\
  Cate-TV & $0.198 \pm \scriptstyle 0.028$ & $0.240 \pm \scriptstyle 0.061$ & $0.233 \pm \scriptstyle 0.034$ & $0.290 \pm \scriptstyle 0.037$ & $0.215 \pm \scriptstyle 0.009$ & $0.260 \pm \scriptstyle 0.053$ & $0.243 \pm \scriptstyle 0.016$ & $0.259 \pm \scriptstyle 0.012$ & $0.242$ \\
  \bottomrule
  \end{tabular}}
\end{table*}

In our feature choice experiments (Section~\ref{sec:phi-results}), we experiment with different choices of aggregate feature $\phi$.
We present results per demographic group across states, trajectory statistics, feature choices, and divergence functions used in the aggregate loss, with Table~\ref{tab:js_cate_trans_pergroup} for Virginia and Table~\ref{tab:ca_cate_ablation_pergroup} for California.

\subsection{RQ3: Next-POI Prediction}
\label{sec:app-downstream}
\begin{table*}[t]
\centering
\caption{Next POI prediction performance by demographic group (Virginia). Metrics: Accuracy (Acc), Hit Rate at 10 (HR@10), Normalized Discounted Cumulative Gain at 10 (NDCG@10), and Geographic Error (GeoError in km). We report the mean across demographic groups in the last column.}
\label{tab:next_poi_prediction}
\resizebox{0.72\textwidth}{!}{%
\begin{tabular}{lccccccccc}
\toprule
Setup & <30, M & <30, F & 30--40, M & 30--40, F & 40--50, M & 40--50, F & >50, M & >50, F & Avg \\
\midrule
\multicolumn{10}{l}{\textit{Accuracy}} \\
Real & 0.5866 & 0.6377 & 0.5491 & 0.5677 & 0.5702 & 0.5225 & 0.5650 & 0.5212 & 0.5650 \\
Baseline & 0.4803 & 0.4729 & 0.5197 & 0.5553 & 0.4740 & 0.4404 & 0.4168 & 0.4371 & 0.4746 \\
FullRank-JS & 0.5777 & 0.6272 & 0.5386 & 0.5601 & 0.5482 & 0.5108 & 0.5348 & 0.5084 & 0.5507 \\
FullRank-TV & 0.5134 & 0.6292 & 0.4690 & 0.5637 & 0.5502 & 0.5030 & 0.4846 & 0.5087 & 0.5277 \\
\midrule
\multicolumn{10}{l}{\textit{HR@10}} \\
Real & 0.7109 & 0.7454 & 0.6512 & 0.6765 & 0.6675 & 0.6179 & 0.6690 & 0.6662 & 0.6756 \\
Baseline & 0.7080 & 0.7401 & 0.6474 & 0.6774 & 0.6655 & 0.6195 & 0.6690 & 0.6456 & 0.6716 \\
FullRank-JS & 0.7083 & 0.7403 & 0.6490 & 0.6838 & 0.6664 & 0.6189 & 0.7944 & 0.6435 & 0.6881 \\
FullRank-TV & 0.7082 & 0.7399 & 0.6602 & 0.6761 & 0.6665 & 0.6179 & 0.6740 & 0.6432 & 0.6733 \\
\midrule
\multicolumn{10}{l}{\textit{NDCG@10}} \\
Real & 0.6644 & 0.7045 & 0.6123 & 0.6362 & 0.6307 & 0.5826 & 0.6305 & 0.6065 & 0.6335 \\
Baseline & 0.6240 & 0.6414 & 0.6002 & 0.6320 & 0.5941 & 0.5528 & 0.5758 & 0.5681 & 0.5986 \\
FullRank-JS & 0.6600 & 0.6985 & 0.6078 & 0.6367 & 0.6220 & 0.5786 & 0.6971 & 0.5935 & 0.6368 \\
FullRank-TV & 0.6358 & 0.6990 & 0.5858 & 0.6346 & 0.6225 & 0.5753 & 0.6025 & 0.5935 & 0.6186 \\
\midrule
\multicolumn{10}{l}{\textit{GeoError (km)}} \\
Real & 54.450 & 41.809 & 54.522 & 42.904 & 44.313 & 53.611 & 37.896 & 63.881 & 49.173 \\
Baseline & 62.444 & 51.200 & 61.521 & 43.848 & 49.217 & 58.119 & 46.336 & 71.006 & 55.461 \\
FullRank-JS & 56.462 & 41.939 & 56.676 & 42.105 & 49.152 & 53.823 & 41.634 & 66.009 & 50.975 \\
FullRank-TV & 59.974 & 43.213 & 57.643 & 43.741 & 44.788 & 56.812 & 43.273 & 65.108 & 51.819 \\
\bottomrule
\end{tabular}}
\end{table*}

\begin{table*}[t]
\centering
\caption{Next POI prediction performance by demographic group (California). Metrics: Accuracy (Acc), Hit Rate at 10 (HR@10), Normalized Discounted Cumulative Gain at 10 (NDCG@10), and Geographic Error (GeoError in km). We report the mean across demographic groups in the last column.}
\label{tab:next_poi_prediction_ca}
\resizebox{0.72\textwidth}{!}{%
\begin{tabular}{lccccccccc}
\toprule
Setup & <30, M & <30, F & 30--40, M & 30--40, F & 40--50, M & 40--50, F & >50, M & >50, F & Avg \\
\midrule
\multicolumn{10}{l}{\textit{Accuracy}} \\
Real & 0.6100 & 0.6568 & 0.6043 & 0.6179 & 0.6202 & 0.5535 & 0.5955 & 0.5809 & 0.6049 \\
Baseline & 0.5990 & 0.6249 & 0.5964 & 0.5619 & 0.5867 & 0.5386 & 0.5301 & 0.5088 & 0.5808 \\
FullRank-JS & 0.5991 & 0.6453 & 0.5965 & 0.6126 & 0.6173 & 0.5395 & 0.5826 & 0.5598 & 0.5941 \\
FullRank-TV & 0.5995 & 0.6436 & 0.5966 & 0.6128 & 0.6168 & 0.5386 & 0.5820 & 0.5596 & 0.5937 \\
\midrule
\multicolumn{10}{l}{\textit{HR@10}} \\
Real & 0.7621 & 0.8250 & 0.7461 & 0.7668 & 0.7595 & 0.7450 & 0.7330 & 0.7582 & 0.7620 \\
Baseline & 0.7365 & 0.8051 & 0.7356 & 0.7098 & 0.7031 & 0.6820 & 0.6885 & 0.7031 & 0.7205 \\
FullRank-JS & 0.7570 & 0.8254 & 0.7360 & 0.7595 & 0.7535 & 0.7320 & 0.7287 & 0.7434 & 0.7544 \\
FullRank-TV & 0.7574 & 0.8251 & 0.7358 & 0.7599 & 0.7531 & 0.7317 & 0.7281 & 0.7433 & 0.7543 \\
\midrule
\multicolumn{10}{l}{\textit{NDCG@10}} \\
Real & 0.7047 & 0.7629 & 0.6913 & 0.7101 & 0.7068 & 0.6715 & 0.6809 & 0.6901 & 0.7023 \\
Baseline & 0.6583 & 0.7286 & 0.6841 & 0.6551 & 0.6827 & 0.6105 & 0.6336 & 0.6353 & 0.6610 \\
FullRank-JS & 0.6986 & 0.7588 & 0.6843 & 0.7052 & 0.7031 & 0.6608 & 0.6746 & 0.6755 & 0.6951 \\
FullRank-TV & 0.6988 & 0.7579 & 0.6842 & 0.7054 & 0.7028 & 0.6604 & 0.6742 & 0.6754 & 0.6949 \\
\midrule
\multicolumn{10}{l}{\textit{GeoError (km)}} \\
Real & 117.343 & 97.673 & 114.092 & 121.952 & 113.627 & 142.743 & 132.345 & 130.166 & 121.243 \\
Baseline & 121.405 & 105.970 & 119.804 & 128.754 & 118.470 & 149.748 & 141.835 & 138.043 & 128.004 \\
FullRank-JS & 119.350 & 101.833 & 115.804 & 123.597 & 114.396 & 147.401 & 136.890 & 134.893 & 124.271 \\
FullRank-TV & 119.466 & 101.912 & 115.774 & 123.620 & 114.626 & 147.552 & 137.118 & 134.895 & 124.370 \\
\bottomrule
\end{tabular}}
\end{table*}

\subsubsection{Training next-POI predictors.}
We evaluate whether improvements in aggregate distributional fidelity translate into downstream utility by training a next-POI predictor on synthetic trajectories from each setup, then evaluating on held-out real trajectories within each demographic group.
We train an LSTM-based next-POI predictor~\citep{hochreiter1997long} for each experimental setup.
The model takes a variable-length history of visited POIs as input, embeds each POI token via a learnable embedding layer (dimension 256), processes the sequence through a 2-layer LSTM (hidden dimension 256, dropout 0.2), and predicts the next POI from the final hidden state via a linear output layer over the full POI vocabulary.
Training uses cross-entropy loss with the Adam optimizer (learning rate $10^{-4}$) and early stopping based on the validation loss on a held-out validation split.

We compare training the next-POI predictor on synthetic trajectories generated by \NAME{} to training it on synthetic trajectories generated by the Baseline model (not conditioned on demographics) and training it on \textit{real} trajectories (``Real'').
For this \NAME{} experiment, we use the FullRank partition and POI counts as the aggregate feature, and we experiment with the two different divergence measures in the aggregate loss (JS and TV). 
For Baseline and \NAME{}, we generate synthetic trajectories using training-set conditions: 30{,}000 trajectories for Virginia and 50{,}000 for California.
Each synthetic trajectory is generated conditioned on a demographic label $d$ and home/work coordinates $z$ sampled from training individuals.
The generated trajectories are then partitioned by their demographic label into $K=8$ demographic groups (4 age bins $\times$ 2 genders).
For each group, we train a separate LSTM on the synthetic trajectories assigned to each demographic group.
For Real, we train a separate LSTM on real trajectories from each demographic group, only using trajectories from train individuals (from the user-level train split described in Section~\ref{sec:model-training}).

At evaluation time, we test each group-specific LSTM on held-out \emph{real} trajectories from the corresponding demographic group (from the test split described in Section~\ref{sec:model-training}).
This protocol measures whether the demographic-specific patterns captured in the synthetic data transfer to predicting real mobility behavior within each group.

\subsubsection{Next-POI prediction results.}
Tables~\ref{tab:next_poi_prediction} and~\ref{tab:next_poi_prediction_ca} provide the full per-group breakdowns for Virginia and California, respectively.
We report standard ranking metrics: Accuracy (top-1 prediction), Hit Rate at 10 (HR@10), Normalized Discounted Cumulative Gain at 10 (NDCG@10), and geographic error (GeoError in km; lower is better).

In Virginia (Table~\ref{tab:next_poi_prediction}), FullRank-JS substantially improves Accuracy from 0.475 (Baseline) to 0.551, nearly matching the Real upper bound of 0.565.
GeoError decreases from 55.5\,km to 51.0\,km (Real: 49.2\,km), indicating that recovered demographic-specific patterns improve geographic prediction quality.
HR@10 is high across all setups (0.67--0.69), suggesting that placing the true next POI in the top-10 is relatively easy; the meaningful differences emerge in top-1 Accuracy and ranking quality (NDCG@10), where \NAME{} shows the most substantial gains over Baseline.

In California (Table~\ref{tab:next_poi_prediction_ca}), the same pattern holds despite the larger geographic scale and higher baseline GeoError distances.
FullRank-JS improves Accuracy from 0.581 (Baseline) to 0.594, approaching Real (0.605), and reduces GeoError from 128.0\,km to 124.3\,km (Real: 121.2\,km).
HR@10 and NDCG@10 both show consistent improvements across demographic groups, with FullRank-JS closing approximately 80\% of the Baseline-to-Real gap on NDCG@10 (0.661 $\to$ 0.695 vs.\ 0.702).
The gains are remarkably consistent between JS and TV divergence variants, suggesting that the downstream benefits stem primarily from the aggregate supervision framework rather than the specific divergence choice.

\end{document}